\documentclass{article}
\usepackage[utf8]{inputenc}

\usepackage{microtype}
\usepackage{graphicx}
\usepackage{subcaption}
\usepackage{booktabs} % for professional tables
\usepackage{multirow} % for multirow cells in tables
\usepackage{enumitem}
\usepackage{algorithmic}

\usepackage{hyperref}

\PassOptionsToPackage{table}{xcolor}
\usepackage[preprint]{icml2026}

\usepackage{amsmath}
\usepackage{amssymb}
\usepackage{mathtools}
\usepackage{amsthm}
\usepackage{empheq}
\usepackage[capitalize,noabbrev]{cleveref}
\usepackage{cleveref}
\usepackage[normalem]{ulem}
\definecolor{bestorange}{HTML}{FFD700}
\usepackage{colortbl}
\newcommand{\gc}{\cellcolor{black!12}} % gray cell shorthand
\theoremstyle{plain}
\newtheorem{theorem}{Theorem}[section]

\theoremstyle{definition}
\newtheorem{definition}{Definition}

\newtheorem{remark}[theorem]{Remark}
\newtheorem{example}[theorem]{Example}

\usepackage[textsize=tiny]{todonotes}
\usetikzlibrary{arrows.meta}

\newcommand{\argmax}{\arg\max}

\newcommand{\state}{s}
\newcommand{\State}{\mathcal{S}}
\newcommand{\Action}{\mathcal{A}}
\newcommand{\action}{a}
\newcommand{\obs}{x}
\newcommand{\Obs}{\mathcal{X}}
\newcommand{\hist}{h}
\newcommand{\Hist}{\mathcal{H}}
\newcommand{\rmax}{R_{\mathrm{max}}}

\icmltitlerunning{Temporal Difference Calibration in Sequential Tasks: Application to Vision-Language-Action Models}

\begin{document}

\twocolumn[
    \icmltitle{Temporal Difference Calibration in Sequential Tasks: Application to Vision-Language-Action Models}

  % It is OKAY to include author information, even for blind submissions: the
  % style file will automatically remove it for you unless you've provided
  % the [accepted] option to the icml2026 package.

  % List of affiliations: The first argument should be a (short) identifier you
  % will use later to specify author affiliations Academic affiliations
  % should list Department, University, City, Region, Country Industry
  % affiliations should list Company, City, Region, Country

  % You can specify symbols, otherwise they are numbered in order. Ideally, you
  % should not use this facility. Affiliations will be numbered in order of
  % appearance and this is the preferred way.
  \icmlsetsymbol{equal}{*}

  \begin{icmlauthorlist}
    \icmlauthor{Shelly Francis-Meretzki}{equal,yyy}
    \icmlauthor{Mirco Mutti}{yyy}
    \icmlauthor{Yaniv Romano}{yyy}
    \icmlauthor{Aviv Tamar}{yyy}
  \end{icmlauthorlist}

  \icmlaffiliation{yyy}{Technion - Israel Institute of Technology}
  % \icmlaffiliation{comp}{Company Name, Location, Country}
  % \icmlaffiliation{sch}{School of ZZZ, Institute of WWW, Location, Country}

  \icmlcorrespondingauthor{Shelly Francis-Meretzki}{shellyfra@campus.technion.ac.il}
  % \icmlcorrespondingauthor{Firstname2 Lastname2}{first2.last2@www.uk}

\vskip 0.1in
\centerline{\url{https://shellytechnion.github.io/TDQC.github.io}}

  % You may provide any keywords that you find helpful for describing your
  % paper; these are used to populate the "keywords" metadata in the PDF but
  % will not be shown in the document
  \icmlkeywords{Machine Learning, ICML}

  \vskip 0.3in
]

% this must go after the closing bracket ] following \twocolumn[ ...

% This command actually creates the footnote in the first column listing the
% affiliations and the copyright notice. The command takes one argument, which
% is text to display at the start of the footnote. The \icmlEqualContribution
% command is standard text for equal contribution. Remove it (just {}) if you
% do not need this facility.

% Use ONE of the following lines. DO NOT remove the command.
% If you have no special notice, KEEP empty braces:
\printAffiliationsAndNotice{}  % no special notice (required even if empty)
% Or, if applicable, use the standard equal contribution text:
% \printAffiliationsAndNotice{\icmlEqualContribution}

\begin{abstract}
Recent advances in vision-language-action (VLA) models for robotics have highlighted the importance of reliable uncertainty quantification in sequential tasks. However, assessing and improving calibration in such settings remains mostly unexplored, especially when only partial trajectories are observed. In this work, we formulate \emph{sequential calibration} for episodic tasks, where task-success confidence is produced along an episode, while success is determined at the end of it. We introduce a sequential extension of the Brier score and show that, for binary outcomes, its risk minimizer coincides with the VLA policy's value function. This connection bridges uncertainty calibration and reinforcement learning, enabling the use of temporal-difference (TD) value estimation as a principled calibration mechanism over time. 
We empirically show that TD calibration improves performance relative to the state-of-the-art on simulated and real-robot data. Interestingly, we show that when calibrated using TD, the VLA's single-step action probabilities can yield competitive uncertainty estimates, in contrast to recent findings that employed different calibration techniques. 
\end{abstract}

\section{Introduction}
\begin{figure*}[t]
    \centering
    \small
    \includegraphics[width=\linewidth] {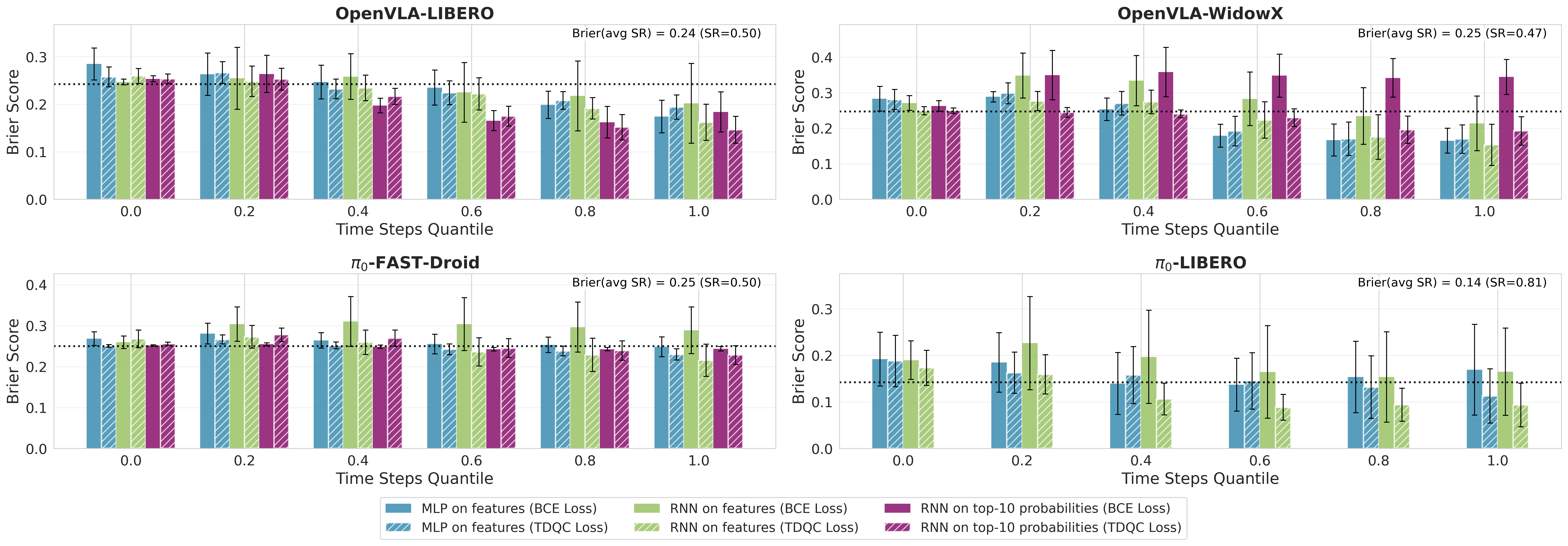}

\caption{\textbf{Sequential Brier scores across benchmarks.}
Sequential Brier score (lower is better) on an \emph{unseen} validation set averaged over 21 random seeds (train/validation task splits). To compare calibration across rollouts with different lengths, we report Brier score over \emph{time quantiles}. Each subplot corresponds to a (VLA model, benchmark) pair. Success prediction methods are based on sequences of features or action probabilities. Across all settings, our TD-based methods consistently outperform conventional predictors trained with binary cross entropy (BCE). For $\pi_0$ action probabilities are not directly interpretable, hence probability-based TDQC variants are not reported. The dotted horizontal line represents the Brier score of a constant predictor that consistently outputs the empirical mean success rate computed over the seen tasks (see Remark \ref{ex:worst_brier}).}
    \label{fig:avg_quantile_brier}
\end{figure*}

% \aviv{Paragraph on calibration and why its important. E.g., Neural are black box, and for safety we want methods that can know when the network is about to err. Calibration is a dominant framework, where the idea is to provide a certainty estimate along with the network prediction, such that a high uncertainty correlates with high chances of making an error, and vice versa.}

% Modern AI systems are typically built upon sophisticated neural network architectures. While they achieve impressive performance on a variety of tasks, their output is the result of a long chain of matrix multiplications that are hard to interpret, making AI systems ``black box'' in nature.
% Modern neural networks achieve strong predictive performance but are often seen as a "black box". 
Modern neural network based AI systems are typically ``black box'', raising concern in applications that require safety and reliability -- where a system must not only be accurate but also \emph{correctly assess when it is likely to fail}.

While neural networks often provide a confidence together with their prediction, there is no built-in machinery to enforce that the confidence actually matches the probability of being correct.
In this context, \textit{calibration} explicitly requires that a model's confidence aligns with the empirical outcome frequencies \cite{guo2017calibration, brier1950score, xiong2023can, jiang2021can}. When a model is well calibrated, high confidence predictions are correct most of the time, and low confidence predictions correspond to a higher chance of error. This is a desirable property as it enables downstream safety mechanisms that depend on the model's confidence.

Although calibration has been broadly studied, most of the research focuses on single-step decision problems, such as classification and regression \cite{popordanoska2024beyond, pmlr-v202-dheur23a}. In these problems, the evaluation of the model's error is direct:
% straightforward \yaniv{consider using ``direct''; because even in this simple case histogram binning is problematic}
Each input is paired with a ground truth label, so that calibration can compare the model's confidence to the empirical frequency of successes. However, neural networks are commonly used in sequential decision problems as well, for example, in reinforcement and imitation learning~\citep{black2024pi_0, guo2025deepseek}.
%where indication of success is often available only at the end of an episode.
In sequential tasks, it is not obvious how to define success for individual steps.
Think of a robot performing a manipulation task, which requires a sequence of inter-dependent ``correct'' actions to be solved, and success is only assessable at the end of the sequence. This raises a question:

{\em \ \ How to reason about calibration in sequential tasks?}

Differently from previous settings, (i) the ground-truth label is typically delayed; and (ii) the confidence in the current decision may depend on the confidence of future and past decisions.

In this paper, we provide a formal framework for sequential calibration that encompasses (i, ii). We cast the calibration problem as learning a predictor that takes as input the model's confidence over time to minimize a sequential version of the Brier score~\citep{brier1950score}. The Brier score is a popular loss that captures \emph{both the accuracy and calibration} of a predictor by computing the mean squared error between the prediction and the outcome frequency.

Our main conceptual insight shows that, when formulating a sequential task as a partially-observable Markov decision process, we can connect the problem of minimizing the sequential version of Brier score to the problem of learning a value function, which is a popular predictor of task success in Reinforcement Learning~\citep[RL,][]{MannorMT2026RLbook}. This fundamental link allows us to bridge algorithmic insights from RL to sequential calibration, such as the ubiquitous Temporal-Difference (TD) loss~\citep{sutton2018reinforcement,mnih2013playing,haarnoja2018soft}, which bootstraps future value predictions to previous steps, resulting in a superior bias-variance tradeoff~\citep{kearns2000bias}. This provides the backbone for our \textbf{T}emporal-\textbf{D}ifference \textbf{Q}-based \textbf{C}alibration (TDQC) method.

To showcase the potential of sequential calibration, we turn our attention to Vision-Language-Action (VLA) models. VLAs are powerful architectures that have risen to the state-of-the-art for learning robot policies~\citep{black2024pi_0}. VLAs encode decision policies that map visual observations and language instructions to probabilities over control actions. Recent works have targeted calibration of those action probabilities to the long-term task success probabilities~\citep{zollo2025confidence}. In particular, \citet{gu2025safe} propose \emph{SAFE}, a method to learn a failure detector that takes as input the hidden state of the VLA and minimizes the cross entropy between the prediction and long-term task success, achieving state-of-the-art failure detection for VLAs.

Our paper further advances the research in crucial areas:
\begin{itemize}[noitemsep,topsep=0pt,parsep=0pt,partopsep=0pt,leftmargin=*]
    \item We provide the first formulation for calibration in sequential tasks, unifying recent works on failure detection in VLAs~\citep{zollo2025confidence, gu2025safe} with the broad calibration literature into a coherent framework that singles out the challenges of the sequential setting;
    \item We fundamentally link sequential calibration to value prediction in RL, drawing inspiration from the latter to derive a novel TD method for calibration (TDQC);
    \item We empirically show that TDQC, accessing only action probabilities, matches or improves the calibration performance of SAFE accessing the hidden state of the policy in challenging sequential tasks (see Fig.~\ref{fig:avg_quantile_brier} top). This is essential to measure calibration of black-box models, for which the hidden state is often not accessible from APIs;
    \item We show that the TDQC loss, either applied to SAFE or to action probabilities, is an essential ingredient to achieve state-of-the-art early detection results in LIBERO for OpenVLA~\citep{kim2024openvla}, $\pi_0$~\citep{black2024pi_0}, $\pi_0$-FAST~\citep{pertsch2025fast} and UniVLA \citep{wang2025unified} models and in a Franka real-robot dataset collected with $\pi_0$-FAST (see Table~\ref{tab:roc_auc}).
    \item Finally, as a by-product of the TDQC method, we show that the resulting value predictor can be used to further improve the success rate of a baseline policy by using it to guide the action selection within a set of actions sampled from the policy. In our experiments, this simple recipe yields a 15\% increase of the success rate for OpenVLA in the LIBERO benchmark (see \Cref{fig:success_rate_vs_compute_action_search}).
\end{itemize}

\section{Background}\label{sec:background}

%In this section, we introduce the relevant background on calibration and sequential decision making.

\subsection{Calibration and Accuracy of Predictors}
\label{sec:calibration}
The accuracy of a prediction model is its frequency of making correct predictions. Calibration measures how well a model's predicted confidence aligns with empirical outcome frequencies. We formally discuss these concepts in the context of binary classification.

\begin{definition}[Binary Classification]\label{def:binary_classifier}
Let $(X,Y)$ be a pair of random variables where $X\in\mathcal{X}$ denotes an input,
$Y\in\{0,1\}$ is a binary label, and $\mathbb{P}(X,Y)$ is their joint distribution.
A probabilistic classifier is a function $f:\mathcal{X}\to [0,1]$. For input $x$, the model prediction $f(x)$ is interpreted as the
probability that the label is $1$. 
\end{definition}

Typically, $f$ is calculated from an i.i.d.\ dataset $\mathcal{D}=\{(x_i,y_i)\}_{i=1}^{N} \sim \mathcal{P}^N$, where $(x_i,y_i)$ are drawn  from $\mathbb{P}(X,Y)$. The Brier score~\cite{brier1950score} is a strictly proper scoring rule that measures the accuracy of a predictor.

\begin{definition}[Brier Score for Binary Classifier]
The Brier score is the mean squared $\ell_2$ distance between the prediction and the ground-truth label:
\begin{equation*}
\mathrm{BS}(f) = \mathbb{E}_{X,Y} \left[ \left(f(X)-Y\right)^2 \right].
% \label{eq:brier_binary_expectation}
\end{equation*}
Given a dataset $\mathcal{D}$, it can be estimated as
\begin{equation*}
\widehat{\mathrm{BS}}(f) = \frac{1}{N}\sum_{i=1}^{N}\left(f(x_i)-y_i\right)^2.
% \label{eq:brier_binary_sample}
\end{equation*}
\end{definition}

The Brier score relates to the calibration and accuracy of the classifier by its two-component decomposition.
Let $F=f(X)$ denote the random event that the model predicts a particular value, and let 
$
\eta(F) = \mathbb{P}(Y=1 \mid F),
$
the success probability conditioned on the prediction.
The Brier score can be decomposed\footnote{This decomposition is based on \citet{murphy1973new}. A self-contained proof is given in Apx. \ref{sec:Brier_decomposition_proof}.} as
\begin{align}
\mathrm{BS}(f)
&= \mathbb{E}\big[(F-Y)^2\big] \nonumber \\
&= \mathbb{E}\big[(F-\eta(F))^2\big] 
\;+\;
\mathbb{E}\big[\eta(F)(1-\eta(F))\big].
\label{eq:brier_two_component}
\end{align}
The first term relates to \textit{calibration}: the discrepancy between the predicted success probabilities and the true conditional event frequencies. A classifier is perfectly calibrated if $F=\eta(F)$. In practice, calibration is estimated by binning predictions into intervals
$\{B_k\}_{k=1}^K$ and comparing predicted confidence with empirical frequencies.
Let $\bar f_k$ be the mean prediction in bin $k$, $\bar y_k$ the empirical positive rate, and $n_k$ the bin size.
Then the empirical calibration error takes the form
$
\frac{1}{N}\sum_{k=1}^K n_k(\bar f_k - \bar y_k)^2,
$
which is a binned estimator of $\mathbb{E}[(F-\eta(F))^2]$.
The commonly used Expected Calibration Error (ECE) replaces the squared deviation by absolute deviation,
$
\mathrm{ECE}(f) = \sum_{k=1}^K \frac{n_k}{N}\,|\bar f_k - \bar y_k|. 
$
It is important to note that calibration does not relate to the accuracy of the predictor. For example, a model that outputs $f=\mathbb{E}[Y]$ independently of $X$ is perfectly calibrated. 

The second term in the Brier score decomposition \eqref{eq:brier_two_component} relates to accuracy: it measures how informative the score is about the label. This term is small when predictions induce groups with nearly deterministic outcomes ($\eta(F)\approx 0$ or $1$),
and large when outcomes remain ambiguous ($\eta(F)\approx 1/2$). In practice, accuracy is typically measured by the ROC AUC.
% \footnote{We give a formal connection between ROC AUC and the Brier score in Appendix }

\textit{Calibration} of a classifier refers to the process of taking a trained classifier $f$ and an i.i.d.\ calibration dataset $\mathcal{D}_{\textrm{cal}}=\{(x_i,y_i)\}_{i=1}^{N_c}\sim \mathcal{P}^{N_c}$ and computing a classifier $f'$ that is better calibrated. For neural network classifiers, $f(x) = \sigma(logit(x))$, where $logit(x)$ is the output of the neural network and $\sigma$ is a sigmoid function. An effective calibration method is temperature scaling~\cite{guo2017calibration}, yielding $f'(x) = \sigma(a \cdot logit(x) + b)$, where $a,b$ are calculated by maximizing likelihood on $\mathcal{D}_{\textrm{cal}}$.

\subsection{POMDPs}
\label{sec:pomdp}

A Partially Observable Markov Decision Process~\citep[POMDP,][]{aastrom1965optimal} is a popular framework for sequential decision making. It is defined by a tuple $(\State, \Obs, \Action, \mathcal{P}, \mathcal{O}, \mathcal{R}, T)$ where $\State$ is a space of states, $\Obs$ is a space of observations, $\Action$ is a space of actions, $\mathcal{P} : \State \times \Action \to \Delta^{\State}$ is a transition model, $\mathcal{O}: \State \to \Delta^{\Obs}$ is an observation model, $\mathcal{R} : \State \times \Action \to \mathbb{R}_+$ is a reward model, and $T \in \mathbb{N}$ is a time horizon.

At each step $t < T$, the POMDP is in a state $\state_t \in \State$. The agent observes $\obs_t \sim \mathcal{O}(\state_t)$ and takes an action $a_t \in \Action$. The process transitions to $\state_{t + 1} \sim \mathcal{P}(\state_t, a_t)$ and the agent collects a reward $r_t \sim \mathcal{R}(\state_t, a_t)$. The process ends in state $s_T$. We denote $h_T := (\obs_1
, a_1, r_1, \ldots, \obs_{T}) \in \Hist$ 
%\aviv{the trajectory should include the last observation but not the last action (so that Q function below makes sense)} 
the \emph{trajectory} of the process and by $h_t$ its prefix of $t < T$ steps.\footnote{We denote as $\Hist := \{ \emptyset, \Hist_1, \ldots \Hist_T \}$ the power set of the trajectories up to length $T$, where $\Hist_t := (\Obs \times \Action \times \mathbb{R}_+)^{t}$.}

% \aviv{The setting above is for a finite horizon, while the value functions below are for episodic setting. Either add a time index to the value function, or make the setting episodic (termination state).}

The agent selects actions according to a policy $\pi: \Hist \to \Delta^\mathcal{A}$ with the goal of maximizing the expected cumulative future rewards. The latter is defined through a \emph{value function}
 \begin{equation*}
     V^{\pi}_{i} (h_i)=\mathbb{E}^{\pi}_{h_T \in \Hist} \left[\left.\sum_{t = i}^{T -1} r_t + r_T \right| \hist_i \right],
\end{equation*}
  %\vspace{-0.1em}
where the expectation is over the action sampled from $\pi$, and states, rewards, and observations sampled from the POMDP. Thus, the agent's objective can be written as $\max_{\pi} V^\pi_t (h_t), \forall h_t \in \Hist$.
It is typically assumed that the cumulative reward is bounded $V^\pi_t (h_t) \in [0, \rmax]$. Finally, we can define an \emph{action value function} (a.k.a. Q function) as
%\vspace{-0.5em}
 \begin{equation*}
     Q^{\pi}_i (\hist_i,a_i)=\mathbb{E}^{\pi}_{h_T \in \Hist} \left[\left.\sum_{t = i}^{T -1} r_t + r_T \right| \hist_i, a_i \right].
 \end{equation*}
% as the expected future rewards if we start at state $x_0$ and take an action $a_0$. \aviv{remainder of the paragraph is not clear.}This work utilizes an approximation in a partially observed setting for 
% Q-function $\mathcal{Q}: \mathcal{H} \rightarrow [0,1]$ as a mapping that processes input \textit{sequentially}. For each time step $t$, the Q function maps the episode's history $h_t$ to a scalar that represents the probability of successfully completing the episode with reward 1, assuming we follow the optimal policy $Q^*$ from this point onward.

\section{Related Work}\label{sec:related}

\textbf{Calibration in Sequential Settings.}~~
In online forecasting, calibration is defined over sequential rounds and analyzed under minimal or adversarial assumptions on the data-generating process~\citep{qiao2024distance}. In contrast, our setting considers predictions along a single episodic trajectory with delayed terminal feedback. In \textit{temporal calibration}~\citep{leathart2020temporal},
predictions at different time steps are treated independently, applying post-hoc calibration procedures across temporal slices.
In survival analysis~\citep{haider2020xcal}, time-dependent Brier scores are used to evaluate the \textit{time until a failure event}, different from our task of success evaluation. In large language models (LLMs), uncertainty estimation and calibration are inherently sequential, and important for detecting hallucinations~\cite{huang2024uncertainty, shorinwa2025survey, ren2023robots, xiong2023can, bar2025learning}. In contrast, our work concerns decision making in \textit{dynamic environments}.

In reinforcement learning (RL, \citealt{MannorMT2026RLbook}), predicting terminal success corresponds to value estimation, and temporal difference (TD) methods are standard. To the best of our knowledge, the connection with sequential calibration, through the Brier score, is novel. Concurrently with our work,~\citet{van2025bellman} connected value learning to calibration, in the context of offline RL. Our work differs in that we consider calibrating a pretrained policy, show a connection between the Brier score and value learning, and demonstrate empirical results on VLAs. 

\textbf{Calibration of VLAs.}~~ 
VLAs provide generalist robot policies. Recent architectures including RT-2 \cite{brohan2022rt}, Octo \cite{team2024octo}, OpenVLA \cite{kim2024openvla}, $\pi_0$ \cite{black2024pi_0}, and $\pi_0$-FAST \cite{pertsch2025fast}, directly map visual observations and textual instructions to low level robot actions, promising zero shot generalization to unseen tasks. However, it is well known that current VLAs exhibit limited success on tasks \textit{unseen} during training/fine tuning, with reported success rates ranging between 30\% and 60\% \cite{kim2024openvla, o2024open, black2024pi_0}, motivating study of failure detection.

\citet{zollo2025confidence} first studied confidence calibration in VLAs. They reported correlation between task error rates and \textit{action calibration error} and mainly evaluated confidence on the first action. In contrast, we argue that calibration of individual actions, or without success labels is not sufficient for predicting rollout success (cf.\ Example~\ref{ex:pi_not_calibrated}); instead, we use a calibration dataset labeled with episode-level outcomes and show substantially better calibration and failure prediction performance.

The most closely related state-of-the-art method is SAFE \cite{gu2025safe}, which trains a single cross-task failure detector on internal VLA features and applies conformal prediction \cite{diquigiovanni2021importance} for thresholded decisions. While SAFE does not explicitly target calibration, the method has strong similarities with ours and can be interpreted as an implicit calibration procedure. Nonetheless, we improve upon SAFE by rigorously formulating the sequential calibration problem, proposing a TD calibration method, and showing that our method outperforms SAFE using exactly their features. In addition, while \citet{gu2025safe} concluded that VLA action probabilities are insufficient for success prediction, we show that when calibrated using our method, action probabilities yield competitive performance. 

Other existing approaches, which SAFE outperforms, primarily consist of unsupervised out-of-distribution (OOD) detection methods \cite{xu2025can, sinha2024real, wong2022error, majumdar2025predictive}, which can be overly restrictive for generalist policies, and supervised, task-specific failure detectors \cite{gokmen2023asking, xie2022ask, farid2022failure, ablett2020fighting}.

\section{Problem Formulation}
\label{sec:problem_formulation}

We are given a POMDP policy $\pi$. Our goal is to understand, during the run of the policy, whether it will succeed or fail in its task.
% In calibration terminology, we seek some score that will be \textit{calibrated} to task success.
% We would like to assess whether the policy is \emph{calibrated}. 
While this problem has been studied before in various forms (see Section \ref{sec:related}), in the following we formally define it in a POMDP setting. As we shall later see, this formulation will allow us to propose new algorithms. 

We begin by defining a successful outcome. We say that a trajectory $h_T$ is successful when the cumulative reward $R(\hist_T) = \sum_{t = 1}^T r_t$ surpasses a threshold $c \in [0, \rmax]$:
\begin{equation}
Y(\hist_{T}) = \mathbb{I}\!\left\{R(\hist_{T})\ge c\right\} \in \{0,1\}.
\label{eq:success_label}
\end{equation}
Thus, success depend on the entire trajectory of decisions, which are dependent. Note that for sparse binary rewards $Y(\hist_T) = R(\hist_T)$. We would like to evaluate success \textit{during} the run of the policy, and we henceforth define a sequential counterpart to the conventional Brier score \citep{brier1950score} defined in Section~\ref{sec:calibration}. Let $f : \Hist_{t} \to [0,1]$ denote a success prediction function given the history $\hist_{t}$.

\begin{definition}
\label{def:sequential_brier}
The \emph{sequential Brier score} of a function $f$ is
\begin{equation}
\mathrm{BS}_{\mathrm{seq}}(f, t)
:=
\mathbb{E}^\pi_{\hist_t \in \Hist_t }\!\left[\mathbb{E}^\pi_{\hist_T \in \Hist_T}\!\left[\left.
\left(f(\hist_t)-Y(\hist_T)\right)^2\right|\hist_t
\right]\right]
\label{eq:sequential_brier}
\end{equation}
where the expectations are taken over trajectories $\hist_T$ and their prefix $\hist_t$ induced by $\pi$. 
\end{definition}
The sequential Brier score can be empirically estimated from a dataset of trajectories and success pairs $\mathcal{D}_{\textrm{cal}} = \{ \hist_T^i, y^i \}_{i = 1}^N$, where $y^i = Y(\hist_T^i)$ and trajectories $h_T^i$ are sampled with $\pi$, as follows.
% , we can calculate the \emph{empirical} sequential Brier score as
\begin{equation}
\widehat{\mathrm{BS}}_{\mathrm{seq}}(f, t)
=
\frac{1}{N}\sum_{i=1}^{N}
\left(f(\hist_t^i)-y^i\right)^2.
\label{eq:empirical_sequential_brier}
\end{equation}
% However, note that sequential calibration (Definition~\ref{eq:seq_calibration_def}) is about the \emph{existence} of a function $f$
% Definition~\ref{def:sequential_brier} leaves open how to select a function $f\in\mathcal{F}$ that minimizes
% $\mathrm{BS}_{\mathrm{seq}}(f)$. One way of looking at the Sequential Brier Score is as an MSE between the output probability of $f$ and the actual label over time.

Similarly to the conventional Brier score, the sequential Brier score relates to the calibration and accuracy of the success predictor by its two-component decomposition.
Let $F_t=f(\hist_t)$ denote the random event that the model predicts a particular value at time $t$, and let 
$
\eta(F_t) = \mathbb{P}(Y(\hist_T)=1 \mid F_t),
$
the success probability conditioned on the prediction at time $t$.
The following decomposition holds:
\begin{align}
\mathrm{BS}_{\mathrm{seq}}(f, t)
&\!=\! \mathbb{E}^{\pi}\big[(F_t-Y(\hist_T))^2\big] \nonumber \\
&\!=\! \mathbb{E}^{\pi}\!\big[(F_t-\eta(F_t))^2\big] 
\!+\!
\mathbb{E}^{\pi}\!\big[\eta(F_t)(1-\eta(F_t))\big],
\label{eq:sequential_brier_two_component}
\raisetag{3em}
\end{align}
where the first term is referred to as \textit{sequential calibration}, and the second is \textit{accuracy}. Thus, minimizing the sequential Brier score leads to desired success predictors that are both calibrated and accurate.

Our focus in this work is on finding a success predictor that yields a low Brier score, by computing $f$ from the calibration dataset $\mathcal{D}_{\textrm{cal}}$. Relating to calibration of classifiers (Section \ref{sec:background}), we are particularly interested in whether the policy's action probabilities $\pi(h_0), \dots, \pi(h_t)$ could be calibrated to yield satisfactory performance in success prediction. We therefore introduce the following definition.

\begin{definition}
\label{def:direct_success_predictor}
A success predictor $f$ that depends only on action probabilities, i.e., $f(h_t) = f( \pi(h_0), \dots, \pi(h_t))$, is termed a \textit{black box success predictor}.
\end{definition}

Black box success predictors can be employed when only action probabilities are available -- a common setting when interfacing commercial foundation models. In contrast, we refer to predictors that require internal access to the VLA model, such as feature activations, as \textit{white box} success predictors.

In the following, we motivate calibration using the sequential Brier score, and explore edge cases.
We begin with understanding the importance of the task success labels $y^i$ in the calibration dataset. One may hope that a set of trajectories $\hist_T^* = (\obs_0, a_0^*, \ldots, \obs_{T - 1}, a_{T - 1}^*)$, which contain expert action labels to states visited by the policy $\pi$,  
are sufficient for sequential calibration.
% , i.e., considering an uncertainty function $f( h_t, \pi) = \pi(\action_t)$. \aviv{give reference here}. 
The following example shows that we should not expect this approach to work well in general.
\begin{figure}[t]
\centering
\scriptsize
\begin{tikzpicture}[
    >=stealth,
    node distance=2.0cm,
    transform shape,
    scale=1,
    state/.style={draw, circle, minimum width=1cm, minimum height=0.6cm, align=center, inner sep=1pt},
    term/.style={draw, circle, minimum size=0.55cm, inner sep=0pt},
    lab/.style={midway, fill=white, inner sep=1pt}
]

\node[state] (s0) {$s_0$};
\node[state, right=of s0] (s1) {$s_1$};
\node[term, right=of s1] (T) {$\bot$};

\draw[->, bend left=18] (s0) to node[lab, above] {$a,\ r{=}0$} (s1);
\draw[->, bend right=18] (s0) to node[lab, below] {$a',\ r{=}1$} (s1);

\draw[->, bend left=18] (s1) to node[lab, above] {$a,\ r{=}0$} (T);
\draw[->, bend right=18] (s1) to node[lab, below] {$a',\ r{=}0$} (T);

\node[above=1pt of s0] {$\pi^*(s_0)=a'$};
\node[above=1pt of s1] {$\pi^*(s_1):a'(0.5)$};

\end{tikzpicture}
\vspace{-0.4em}
\caption{Two-step MDP from Example~\ref{ex:pi_not_calibrated}.}
\label{fig:example_mdp_pi_not_calibrated}
\vspace{-0.6em}
\end{figure}
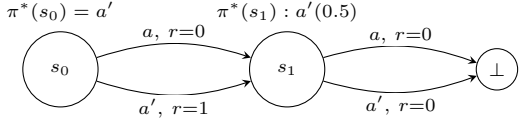

\begin{example}\label{ex:pi_not_calibrated}
Consider an MDP described in \Cref{fig:example_mdp_pi_not_calibrated} with horizon $T=2$ and two actions, $\action$ and $\action'$. The initial state is $\state_0$, and $\mathcal{R}(s_0, a)=0$, $\mathcal{R}(s_0, a')=1$. At time step $t=1$ the state transitions deterministically to $\state_1$, and $\mathcal{R}(s_1, a)=\mathcal{R}(s_1, a')=0$. The terminal reward is $0$. Training data is generated by the following optimal policy $\pi^*(s_0) = a' \textrm{ w.p. }1$, $\pi^*(s_1) = a' \textrm{ w.p. }0.5$, and assume that the learned policy is $\pi = \pi^*$. For each time step, $\pi$ minimizes the conventional Brier score for individual action prediction.
% Note that $\pi$ is an optimal policy for this MDP. 
However, for a threshold $c\leq 1$ we have that $Y(\hist_T) = 1$ for all trajectories, while $\pi(s_1)=0.5$.
\end{example}

As Example \ref{ex:pi_not_calibrated} shows, in a sequential task some actions may be \emph{uncorrelated} with task success, making action probabilities alone-even if coming from an optimal policy-irrelevant as success predictors. To connect this example to a practical VLA setting, when the robot is moving without grasping anything, the gripper action is irrelevant for task success; if a policy is trained to imitate experts that randomly open and close the gripper, it may exhibit high uncertainty for irrelevant actions. Calibrating to task success data is therefore important.
% The uncertainty function $f$ is therefore important for correlating action uncertainty with task success.

We next discuss the worst sequential Brier score that we can expect in practice. Given some partial trajectory $h_t$, we have that  
$\min_{f}\mathbb{E}^\pi\!\left[\left.
\left(f-Y(h_T)\right)^2
\right| \hist_t\right] = \mathbb{E}^\pi\!\left[\left.
Y(h_T)
\right| \hist_t\right]$. 
\begin{remark}
\label{ex:worst_brier}
Let $\mathbb{E}^\pi\!\left[Y(\hist_T)\right]$ denote the average task success rate. Any information that exists in the trajectory may be used to obtain a lower Brier score. 
% \mirco{Perhaps this shall be called ``remark'' rather than example?}
\end{remark}

% \aviv{remove this paragraph?}The calibration error inherently depends on the cumulative reward achieved under a specific policy, making it necessary to analyze the policy $\pi$. Building on the POMDP formalism introduced in the background, we consider the standard episodic setting in which actions are temporally dependent and the episode outcome is determined by the accumulated reward of the rollout.

\section{Method}
\label{sec:method}
% \begin{figure}[t]
%     \centering
%     \includegraphics[width=0.45\textwidth]{figures/method_overview.png}
%     \caption{Method \aviv{remove this figure?}}
%     \label{fig:method}
% \end{figure}

In this section, we devise a method for sequential calibration. The previous discussion clarifies the relation between sequential calibration and prediction of the task success, through the minimization of the sequential Brier score. The way task success is defined in our setting, i.e., whether the total reward collected during a trajectory surpasses a given threshold (see Eq.~\ref{eq:success_label}), essentially implies that we can predict task success by predicting future rewards. The problem of future reward prediction is a staple in the RL literature~\citep{MannorMT2026RLbook}. The following theorem formalizes the connection between the two problems.

\begin{theorem}
\label{thm:brier_equal_q}
The future rewards predictor $\hat f (h_t)$ that minimizes the Sequential Brier Score
\begin{equation*}
    \mathbb{E}^\pi_{\hist_t \in \Hist_t}\left[ \mathbb{E}^\pi_{\hist_T \in \Hist_T}\left[ (\hat f(\hist_t) - R(\hist_T))^2 \mid \hist_t\right]\right]
\end{equation*}
is given by:
\begin{equation*}
\hat{f}(\hist_t) = \mathbb{E}_{\hist_T \in \Hist_T}^\pi[R (h_T)\mid \hist_t] = \sum_{i = 1}^{t-1} r(\hist_i)  + Q^\pi_t(\hist_t, a_t).
\end{equation*}
\end{theorem}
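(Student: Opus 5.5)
The proof has two parts. First, the minimizer of the nested expectation is the conditional mean $\mathbb{E}^\pi[R(\hist_T)\mid \hist_t]$; this is the usual $L^2$ projection argument. Second, that conditional mean splits into the rewards already collected plus the $Q$ function. For the second part we read $Q^\pi_t(\hist_t,a_t)$ as the expected future reward from step $t$ onward, with $a_t$ averaged under $\pi(\cdot\mid\hist_t)$ when it is not part of the conditioning (equivalently, as $V^\pi_t(\hist_t)$). The $Q$ function has to be read this way for the stated identity to make sense.

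\textbf{Step 1 (pointwise minimization).} The objective is $\mathbb{E}^\pi_{\hist_t}[\phi(\hist_t,\hat f(\hist_t))]$, where $\phi(\hist_t,c) := \mathbb{E}^\pi[(c-R(\hist_T))^2\mid \hist_t]$. Because $\hat f$ may be any function of $\hist_t$, it suffices to minimize $\phi(\hist_t,\cdot)$ separately for each $\hist_t$; the result is automatically a measurable function of $\hist_t$. Set $m(\hist_t):=\mathbb{E}^\pi[R(\hist_T)\mid\hist_t]$, which is finite because rewards are bounded ($R\in[0,\rmax]$). Adding and subtracting $m$ and noting that the cross term vanishes by the definition of $m$ gives
\begin{equation*}
\phi(\hist_t,c) = (c-m(\hist_t))^2 + \mathrm{Var}^\pi\!\left[R(\hist_T)\mid \hist_t\right].
\end{equation*}
This is uniquely minimized at $c=m(\hist_t)$. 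Integrating over $\hist_t$ then shows that $\hat f = m$ minimizes the sequential Brier score, and that the minimizer is unique up to null sets of the law of $\hist_t$ under $\pi$. This is the same computation as the remark preceding Remark~\ref{ex:worst_brier}, applied here to $R$ instead of $Y$.

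\textbf{Step 2 (decomposition of the conditional mean).} Write $R(\hist_T)$ as the sum of the rewards already contained in the prefix $\hist_t$ and the remaining future rewards. The rewards in the prefix are deterministic functions of $\hist_t$, so they can be taken out of the conditional expectation. What remains is $\mathbb{E}^\pi[\sum_{k\ge t} r_k + r_T\mid\hist_t]$, which is the definition of $V^\pi_t(\hist_t)$. Applying the tower property over $a_t\sim\pi(\cdot\mid\hist_t)$ turns this into the $Q^\pi_t(\hist_t,a_t)$ term in the statement.

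\textbf{Main obstacle.} The only delicate step is the index bookkeeping. The statement sums the past rewards as $\sum_{i=1}^{t-1} r(\hist_i)$, while $V$ and $Q$ count rewards from index $t$ on. I will fix the convention that $\hist_t$ contains $r_1,\dots,r_{t-1}$ and the observation $\obs_t$, and check that each reward is counted exactly once, by either the past sum or $Q$. I will also state explicitly whether $a_t$ is averaged out or conditioned on. If $a_t$ is conditioned on (that is, included in $\hist_t$), the same argument goes through with $\mathbb{E}[\cdot\mid\hist_t,a_t]=Q^\pi_t(\hist_t,a_t)$ directly, and no averaging is needed.
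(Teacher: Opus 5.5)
Your proposal is correct and follows essentially the same route as the paper's appendix proof. Both arguments use a bias--variance decomposition of the inner conditional expectation to show that the conditional mean is the minimizer, and then split $\mathbb{E}^\pi[R(\hist_T)\mid\hist_t]$ into the already-collected rewards plus a future-reward term identified with $Q^\pi_t$. The paper applies the decomposition to the residual $R-\sum_{i\le t} r(\hist_i)$ rather than to $R$ directly, and it leaves implicit whether $a_t$ is conditioned on; your explicit reading of that term as $V^\pi_t(\hist_t)$, or with $a_t$ included in the conditioning, clarifies a point the paper glosses over rather than changing the argument.
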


% Example~\ref{ex:worst_brier} leads directly to the following definition and theorem: we view sequential calibration as a \emph{future reward prediction} problem, and characterize the optimal Brier predictor.
% \begin{definition}[Future return prediction]
% \label{def:future_return_prediction}
% Fix a policy $\pi$ and let $h_t$ denote the history up to time $t$ as defined in \Cref{sec:pomdp}. We study the problem of predicting the uncertainty function $f(\hist_t)$ using only information available at time $t$.
% \begin{theorem}
% \label{thm:brier_equal_q}
% The optimal predictor $\hat{f}(\hist_t)$ of the reward in terms of minimizing the Mean Squared Error $$\mathrm{MSE}(Y) = \mathbb{E}^\pi_{\hist_t \in \mathcal{H}}\left[ \mathbb{E}^\pi_{\hist_T \in \mathcal{H}}\left[ (R(\hist_T)- \hat{f}(\hist_t))^2 \mid \hist_t\right]\right]$$ is given by:
% \begin{align*}
% &\hat{f}(\hist_t) = \mathbb{E}_{\hist_T \in \Hist_T}^\pi[R (h_T)\mid \hist_t] =  \sum_{i = 1}^t r(\hist_i)\\
% & + \mathbb{E}_{\hist_T \in \Hist_T}^\pi\left[\sum_{i = t + 1}^T r(\hist_i) \mid \hist_t\right] = \sum_{i = 1}^{t-1} r(\hist_i)  + \color{red}{\mathbf{Q^\pi(\hist_i)}}
% \end{align*}
% \end{theorem}
% \end{definition}
% \aviv{why is the Q term in red? Also, per my comment in the background above, it should be $Q_t$ in finite horizon. If we do that, we should mention in the experiments that our Q function approximation does not take time explicitly as input.}

Theorem~\ref{thm:brier_equal_q} fundamentally links our original problem of calibrating a given policy $\pi$ with estimating its value function $Q^\pi$. In other words, \emph{learning the value function is equivalent to learning a calibrated predictor of eventual success}. Value estimation is a well-studied problem in RL~\cite{MannorMT2026RLbook} and allows us to add the family of value estimation algorithms to the calibration toolkit.

In this paper, we consider access to a policy $\pi$ and a set of $N$ labeled trajectories $\mathcal{D}_{\text{cal}} = \{ (\hist_T^i, y^i)_{i=1}^N \}$ generated with the policy, where the label denotes whether the trajectory was successful. The value estimation algorithm fits a parametric function $f_\theta : \Hist^\pi \to [0,1]$ to the data by minimizing a loss
\begin{equation*}
    \mathcal{L} (\theta) := \frac{1}{NT} \sum_{i = 1}^N \sum_{t = 1}^{T} \ell (f_\theta(\hist_t^i), y^i).
\end{equation*}
In RL, the single-step loss $\ell (f_\theta (h_t^i), y^i)$ typically takes two forms. One is called \emph{Monte Carlo} (MC), and prescribes to solve a classification or regression problem between $\hist_t^i$ and $y^i$. The other is \emph{Temporal Difference} (TD), which bootstraps the value between two consecutive steps as
\begin{equation}
\label{eq:td_loss}
    \ell (f_\theta(\hist_t^i), y^i) = 
    \begin{cases}
        \left( f_\theta(\hist_t^i) - f_\theta(\hist_{t + 1}^i) \right)^2 & t < T \\
        \left( f_\theta(\hist_t^i) - y^i \right)^2 & t = T \\
    \end{cases}.
\end{equation}
To ensure stability in training, we used updates with a target network, formally introduced by \citet{mnih2015human}. The target network parameters $\theta^-$ are updated with the current parameters $\theta$ every $C$ steps and are held fixed between individual updates.
While previous work~\citep{gu2025safe} employs an MC loss, TD has not been considered in the context of calibration. TD is the common practice in RL~\citep[e.g.,][]{haarnoja2018soft} for its well-established practical and theoretical benefits~\citep{kearns2000bias}. In the experiments, we will show that similar benefits extend to the calibration setting.
Before that, in Algorithm~\ref{alg:tdqc}, we provide a brief summary of our method, which we call \textbf{T}emporal-\textbf{D}ifference \textbf{Q}-based \textbf{C}alibration, TDQC for short.
Then, we comment on how to extract information from the history in practice, e.g., when running VLA policies, and how having a calibrated policy opens the door to failure detection and other applications.

\begin{algorithm}[t]
\caption{TDQC}
\label{alg:tdqc}
\begin{algorithmic}[1]
    \INPUT Policy $\pi$, calibration dataset $\mathcal{D}_{\text{cal}} = \{ (\hist_T^i, y^i)_{i=1}^N \}$
    \STATE Initialize network weights $f_\theta\,, f_{\theta^-} $
    \FOR{until convergence}
        \STATE Sample $h_T^i$ uniformly from $\mathcal{D}_{\text{cal}}$
        \STATE Compute the TDQC loss using a target network\\
        $ G(\theta) = \sum\limits_{t = 1}^{T - 1} (f_\theta (\hist_t^i) - f_{\theta^-} (\hist_{t+ 1}^i))^2 + (f_\theta (\hist_T^i) - y^i)^2 $
        \STATE Update $\theta \gets \theta - \alpha \nabla G(\theta)$
        \STATE Periodically update target network $\theta^- \gets \theta$
    \ENDFOR
\end{algorithmic}
\end{algorithm}

\textbf{Extracting the History.}~~
In principle, the history $\hist_t$ should contain all signals available at time $t$ that are informative about future success, e.g., the current observation, past actions, and any internal state carried by the policy. In practice, we consider two lightweight ways to extract $\hist_t$ from a VLA.

For \emph{token-based} VLA architectures, actions are discretized into tokens, and the policy outputs a categorical distribution over tokens at each step. 
% The resulting token probabilities provide a natural, low-cost signal that can be used as input to a success predictor.
This design is used by RT-2 \cite{zitkovich2023rt}, OpenVLA \cite{kim2024openvla}, NORA \cite{hung2025nora} and Uni-VLA \cite{wang2025unified}. Since these models resemble LLM-style decoders, we can adopt the common heuristic of using token probabilities and use a \textit{black-box success predictor} (Def.~\ref{def:direct_success_predictor}).

More generally, we can define $\hist_t$ via the policy's internal representations, such as hidden states from intermediate layers. Using internal features as inputs to an auxiliary predictor is standard in the LLM calibration and \emph{probing} literature \cite{azaria2023internal, zou2023representation, kossen2024semantic}, but requires access to the model's weights. 

\subsection{Early Stopping with Conformal Prediction}
\label{sec:early_stopping}
When a policy $\pi$ is deployed in safety-critical scenarios, we may want to exploit $f_\theta$ to stop early trajectories that have low probability of success, avoiding unnecessary damage to the system and its surroundings in failed attempts.

To this purpose, we use $f_\theta$ as a confidence score for future success, which is equivalent to our original definition of predicting future rewards for binary tasks. Then, $f_\theta$ can be used to trigger early stopping when the predicted probability of success is low. This can be made formal using \emph{conformal prediction}~\citep[CP,][]{shafer2008tutorial,jazbec2024fast,ringel2024early}.
Following \citet{gu2025safe}, we treat $1 - f_\theta$ as a failure predictor and raise a failure flag whenever $1 - f_\theta > \delta_t$, where $\delta_t$ is a time-varying upper threshold. To calculate that threshold at each time step we first set a significance level $\alpha \in (0,1)$, which determines how conservative we want to be with the early stopping, and we separate a set of successful trajectories from the data before training, to avoid biasing the procedure.  On this validation set, we invoke a \emph{functional} conformal prediction routine (see Algorithm~\ref{alg:functional_conformal_band} in Apx.~\ref{apx:conformal_prediction} and \citealt{diquigiovanni2021importance} for details) to compute a time-varying confidence upper bound $\{ \delta_t\}_{t = 1}^T$. Finally, while the policy is deployed, we process the current trajectory $h_t$ with $f_\theta$ and compare the resulting score with $\delta_t$. Whenever $1 - f_\theta (h_t) > \delta_t$ we stop $\pi$. The described procedure is reported in Algorithm~\ref{alg:early_stopping}. Under the exchangeability assumption \cite{vovk2005algorithmic}, the CP band guarantees that, for a new \emph{successful} rollout, $1 - f_\theta$ remain below the threshold for all time $t$ w.p $1 - \alpha$.

\begin{algorithm}[t]
\caption{Early Stopping}
\label{alg:early_stopping}
\begin{algorithmic}[1]
    \INPUT Policy $\pi$, dataset $\mathcal{D}_{\text{cal}}$, confidence $\alpha$ 
    \STATE $f_\theta \gets \mathrm{TDQC} (\pi, \mathcal{D}_{\text{cal}})$
    \STATE Sample $\mathcal{D}_{\text{val}}$ i.i.d. from $\mathcal{D}_{\text{cal}}$
    \STATE $\{\delta_t\}_{t = 1}^T \gets \mathrm{ConfPred} (\pi, f_\theta, \mathcal{D}_{\text{val}})$ \hfill \COMMENT{Algorithm~\ref{alg:functional_conformal_band}}
    \FOR{$t = 1, \ldots, T$}
        \STATE Execute action $a \sim \pi (x_t)$
        \STATE Append $(a_t, r_t, x_{t +1})$ to $h_{t -1}$
        \STATE \textbf{if} $1 - f_\theta (h_t) > \delta_t$ \textbf{then} break
    \ENDFOR
\end{algorithmic}
\end{algorithm}

\subsection{Application to Test-Time Guided Action Search}
\label{sec:app_TT_guided_action_search}
As an example downstream application of our calibration method, we consider action search at test time. Standard VLA models typically execute the single most probable action predicted by their policy. However, this greedy approach can lead to sub-optimal decisions, especially when attempting to generalize to new environments. 

Related LLM-based approaches combine search with process reward models (PRMs), which calculate the quality or correctness of intermediate reasoning steps \cite{lightman2023let,zhang2025lessons}. Analogously, to exploit calibration to this end, we propose to use the learned Q-function $f_\theta$ to guide action search toward trajectories that are more likely to succeed, inspired by RL methods that employ search at test time~\cite{silver2016mastering}. Furthermore, when $f_\theta$ indicates a high confidence that the trajectory will succeed, we can save compute resources and skip the search (this is the converse of early stopping, cf. Section \ref{sec:early_stopping}). 

The pseudocode of the procedure is reported in Algorithm~\ref{alg:q_value_guided_action_selection}. Note that we assume access to an accurate forward model of the system dynamics $W = P(\obs_{t+1}|\hist_t,\action_t)$, which is easily available in simulated domains. For real-world domains, a world model of the dynamics may be learned~\citep{schrittwieser2020mastering}; we leave such investigation for future work.
% This is inspired by recent work showing that value-guided search can improve reasoning in large language models \cite{wang2025value}.

At each time step, given a VLA policy $\pi$ and a simulator $W$, we sample from the policy $M$ possible action candidates. For each candidate action $i$, we simulate a one-step look-ahead to observe the resulting next observation $x^{(i)}_{t+1} \sim W(x_t, a_t^{(i)})$
% \aviv{this should be observation not state, right? and why hat? you can make this formal, e.g., $\obs_{t+1} \sim P(\obs_{t+1}|\hist_t,\action_t)$}
, and subsequently select the next action using the standard model's greedy selection strategy $\max_{a'} \pi(\obs_{t+1},a')$.
% \aviv{make formal, for example: $\max_{a'} \pi(\obs_{t+1},a')$}
We then evaluate each resulting observation using $Q = f_\theta(x_{t+1}, a_{t+1})$ and choose the action with highest score $a^* = \max_{a'} Q(\obs_{t+1},a')$.
% $\aviv{make formal, for example $\max_{a'} Q(\obs_{t+1},a')$}
% The procedure is described in Algorithm~\ref{alg:q_value_guided_action_selection}.  
We emphasize that $f_\theta$ parameters are frozen and used only for test-time guided search.
To reduce test-time compute using the \emph{calibrated} model, we apply the action search only when the predicted Q value exceeds a threshold $\bar{T}$. We set $T$ such that trajectories selected under this rule have an expected failure rate of at most $T\%$.
% \aviv{what about using the *confidence* for less search?}}

\begin{algorithm}[t]
\caption{Q-Value Guided Action Search}
\label{alg:q_value_guided_action_selection}
\begin{algorithmic}[1]
\INPUT observation $x_t$, policy $\pi$, learned Q-function $f_\theta$, simulator $W$, sample size $M$, threshold $\bar{T}$
\STATE $a_t = \argmax_{a'} \pi(a' | x_{t})$ \hfill \COMMENT{Greedy action}
\IF[High success confidence]{$f_\theta(x_{t}, a_{t}) \leq \bar{T}$}
\STATE \textnormal{Execute} $a_{t}$
\ELSE
%\STATE Sample $M$ action candidates $\{a_i\}_{i=1}^M$ from $\pi(\cdot \mid x_t)$
\FOR{$i = 1,\dots,M$}
    \STATE $a^{(i)}_t \sim \pi (x_t)$ \hfill \COMMENT{Sample candidate action}
    \STATE $x^{(i)}_{t+1} \sim W(x_t, a_t^{(i)})$
    \STATE $a_{t+1}^{(i)} = \argmax_{a'} \pi(a' | x^{(i)}_{t+1})$ \hfill \COMMENT{Greedy action}
    \STATE $Q^{(i)} \gets f_\theta(x^{(i)}_{t+1}, a_{t+1}^{(i)})$
\ENDFOR
\STATE $i^\star \gets \arg\max_i Q^{(i)}$
\STATE \textnormal{Execute} $a_t^{(i^\star)}$
\ENDIF

\end{algorithmic}
\end{algorithm}

\section{Experiments}
In this section, we empirically validate the connection between calibrating a policy $\pi$ and estimating its value $Q^\pi_t$. We test TDQC, our \emph{sequentially calibrated} success predictor in sequential tasks.
We aim to answer the following:
\begin{enumerate}[noitemsep,topsep=0pt,parsep=0pt,partopsep=0pt,leftmargin=*]
    \item TD loss improves calibration and failure detection results?
    \item Can black-box based calibration yield competitive results?
    \item Can a calibrated model be used to improve success rate using test-time guided action search?

\end{enumerate}
\begin{table*}[t!]
\centering
\resizebox{0.87 \textwidth}{!}{%
\begin{tabular}{l|l|cc|cc|cc|cc|cc|cc}
\hline
\multirow{3}{*}{ } & VLA Model & \multicolumn{2}{c|}{OpenVLA} & \multicolumn{2}{c|}{OpenVLA} & \multicolumn{2}{c|}{UniVLA} &\multicolumn{2}{c|}{$\pi_0$-FAST} & \multicolumn{2}{c|}{$\pi_0$-FAST} & \multicolumn{2}{c}{$\pi_0$} \\
\multirow{3}{*}{ } & Benchmark & \multicolumn{2}{c|}{LIBERO} & \multicolumn{2}{c|}{WidowX} & \multicolumn{2}{c|}{LIBERO} & \multicolumn{2}{c|}{LIBERO} & \multicolumn{2}{c|}{Franka} & \multicolumn{2}{c}{LIBERO}\\
\multirow{3}{*}{ } & Eval Task Split & Seen $\downarrow$ & Unseen $\downarrow$ & Seen $\downarrow$ & Unseen $\downarrow$ & Seen  $\downarrow$& Unseen $\downarrow$ & Seen $\downarrow$ & Unseen $\downarrow$ & Seen $\downarrow$ & Unseen $\downarrow$ & Seen $\downarrow$ & Unseen$\downarrow$ \\ \hline
\multirow{5}{*}{Static Predictors} & Max prob. & $0.395$ & $0.390 $ & $0.572 $ & $0.579 $ & $0.909$& $0.899$& $0.320$ & $0.318$ & $0.331$ & $0.323$  & $-$ & $-$\\
\multirow{5}{*}{} & Avg prob. & $0.348 $ & $0.364$ & $0.275 $ & $0.282$ & $0.529$& $0.532$& $0.212$ & $0.218$ & $0.290$ & $0.294$  & $-$ & $-$\\
\multirow{5}{*}{ } &Running Avg prob. & $0.338 $ & $0.356 $ & $0.255 $ & $0.257 $ & $0.543$& $0.544$& $0.244$ & $0.238$ & $0.359$ & $0.361$  & $-$ & $-$\\
\multirow{5}{*}{ } &Avg entropy  & $0.306 $ & $0.313 $ & $0.414$ & $0.426 $ & $0.406$& $0.391$& $0.209$ & $0.222$ & $0.281$ & $0.281$  & $-$ & $-$\\ 
\multirow{5}{*}{ } & Running Avg entropy  & $0.265 $ & $0.273$ & $0.435$ & $0.432 $ & $0.343$& $0.330$& $0.279$ & $0.264$ & $0.341$ & $0.339$  & $-$ & $-$\\ \hline
\multirow{4}{*}{Learned Predictors (white box)} & \textit{SAFE-RNN} & $0.204 $ & $0.255 $ & $0.169$ & $0.213 $ & $0.124$ & $0.162$ & $0.106$ & $0.148 $ & $0.220 $ & $0.288 $ & $0.123 $ & $0.172 $\\
 & \gc\textit{SAFE-RNN-TDQC (Ours)} & \gc$0.197 $ & \gc$0.218 $ & \gc\textcolor{red}{$\mathbf{0.096}$}& \gc\textcolor{red}{$\mathbf{0.153 }$}& \gc\textcolor{red}{$\mathbf{0.064}$} & \gc\textcolor{red}{$\mathbf{0.100}$} & \gc\textcolor{red}{$\mathbf{0.103}$} & \gc$0.163 $ & \gc\textcolor{red}{$\mathbf{0.150 }$} & \gc\textcolor{red}{$\mathbf{0.215 }$} & \gc\textcolor{red}{$\mathbf{0.061}$} & \gc\textcolor{red}{$\mathbf{0.097}$} \\
\multirow{4}{*}{ } &\textit{SAFE-MLP BCE} & $0.192$ & $0.231$ & $0.127$ & $0.164$ & $0.091$ & $0.158$ & \textcolor{red}{$\mathbf{0.103}$} & $ 0.162$& $0.206$ & $0.248$ & $0.075 $ & $0.137$ \\
 & \gc\textit{SAFE-MLP-TDQC (Ours)} & \gc$ 0.195$ & \gc$0.229$ & \gc$0.130 $ & \gc$0.169 $ & \gc$0.066$ & \gc$0.131$ & \gc$0.109$ & \gc$0.150$ & \gc$0.210$ & \gc$0.229$ & \gc$0.068$ & \gc$0.128 $\\ \hline
\multirow{2}{*}{Learned predictors (black box)} & \textit{RNN-BCE } & $0.199 $ & $0.206$ & $0.301 $& $0.344$& $0.138$ & $0.152$ & $0.122$ & $0.158$ & $0.237$ & $0.243$ & $-$ & $-$  \\
 & \gc\textit{RNN-TDQC (Ours)} & \gc\textcolor{red}{$\mathbf{0.191}$} & \gc\textcolor{red}{$\mathbf{0.197 }$} & \gc$0.156 $& \gc$0.192$& \gc$0.100$ & \gc$0.107$ & \gc$0.105$ & \gc\textcolor{red}{$\mathbf{0.141}$} & \gc$0.204$ & \gc$0.228$ & \gc$-$ & \gc$-$ \\
\end{tabular}
}
\caption{Brier Score results on simulation and real robot experiment. Results are averaged over 21 seeds which determine different train-test splits of the tasks. "$-$" indicates that the Brier score can't be calculated on the method. The \textcolor{red}{\textbf{best}} performing methods are highlighted in the table. Our method, TDQC achieves lowest sequential Brier scores in all benchmarks.} 
\label{tab:Brier_score}
\end{table*}

\begin{table*}[t!]
\centering
\resizebox{0.87\textwidth}{!}{%
\begin{tabular}{l|l|cc|cc|cc|cc|cc|cc}
\hline
\multirow{3}{*}{} & VLA Model & \multicolumn{2}{c|}{OpenVLA} & \multicolumn{2}{c|}{OpenVLA} & \multicolumn{2}{c|}{UniVLA} &\multicolumn{2}{c|}{$\pi_0$-FAST} & \multicolumn{2}{c|}{$\pi_0$-FAST} & \multicolumn{2}{c}{$\pi_0$} \\
 & Benchmark & \multicolumn{2}{c|}{LIBERO} & \multicolumn{2}{c|}{WidowX} & \multicolumn{2}{c|}{LIBERO} & \multicolumn{2}{c|}{LIBERO} & \multicolumn{2}{c|}{Franka} & \multicolumn{2}{c}{LIBERO}\\
 & Eval Task Split & Seen $\uparrow$ & Unseen $\uparrow$ & Seen $\uparrow$ & Unseen $\uparrow$ & Seen  $\uparrow$& Unseen $\uparrow$ & Seen $\uparrow$ & Unseen $\uparrow$ & Seen $\uparrow$ & Unseen $\uparrow$ & Seen $\uparrow$ & Unseen$\uparrow$ \\ \hline
\multirow{5}{*}{Static Predictors } & Max prob. & $54.64 $ & $55.78 $ & $53.25$ & $53.20 $ & $50.00$& $50.00$& $61.75$ & $63.49$ &$48.61$ & $46.64$ &$-$ & $-$ \\
 & Avg prob. & $47.08 $ & $48.09 $ & $47.47 $ & $48.30 $  & $42.96$& $40.25$& $47.36$ & $48.09$ &$49.45 $ & $48.03$ &$-$ & $-$\\
 & Running Avg prob. &  $49.19 $ & $47.72 $ & $49.20 $ & $46.37$  & $43.05$& $40.29$& $53.95$ & $55.68$ &$52.95$ & $49.98$ &$-$ & $-$\\
 & Avg entropy & $46.81 $ & $46.75 $ & $50.19 $ & $49.36$ & $41.34$& $47.36$& $45.42$ & $46.30$ &$49.28$ & $49.07$ &$-$ & $-$\\
 & Running Avg entropy & $50.48 $ & $48.09 $ & $46.16 $ & $43.99 $ & $51.63$& $56.71$& $53.78$ & $55.44$ &$51.12$ & $48.78$ &$-$ & $-$\\  \hline
\multirow{5}{*}{Learned Predictors (white-box)} & \textit{SAFE-RNN} & $72.30 $ & $69.04 $ & $75.95 $ & $70.00 $ & $74.48$ & $68.13$ & $91.26 $ & \textcolor{red}{$\mathbf{85.88}$} & $70.85 $ & $56.69$ & $79.96$ & $65.03 $\\
 & \gc\textit{SAFE-RNN-TDQC (Ours)} & \gc$71.67 $ & \gc$65.12 $ & \gc$84.01$ & \gc$70.76 $ & \gc$73.83$ & \gc$64.25$ & \gc\textcolor{red}{$\mathbf{92.03}$} & \gc $85.49 $ & \gc\textcolor{red}{$\mathbf{79.89}$} & \gc\textcolor{red}{$\mathbf{68.43}$} & \gc$88.66$ & \gc\textcolor{red}{$\mathbf{82.94}$}  \\ 
 & \textit{SAFE-MLP} & $73.56 $ & $69.53 $ & \textcolor{red}{$\mathbf{88.23 }$} & \textcolor{red}{$\mathbf{83.18 }$} & $74.39$& $63.80$& $79.00 $ & $68.36 $ & $79.15 $ & $63.94 $ & $86.33 $ & $79.75$\\
 & \textit{SAFE-MLP-BCE} & $72.66 $ & $ 64.99$ & $85.38 $ & $71.43 $ & \textcolor{red}{$\mathbf{78.83}$} & $69.74$ & $ 91.82 $ & $ 85.50 $ & $73.82 $ & $58.83 $ & \textcolor{red}{$\mathbf{89.71}$} & $80.53$ \\
 & \gc\textit{SAFE-MLP-TDQC (Ours)} & \gc$71.22 $ & \gc$60.09 $ & \gc$82.33 $ & \gc$70.64 $ & \gc$76.55$ & \gc$66.71$ & \gc$90.25$ & \gc$84.44$ & \gc$61.95 $ & \gc$51.22$ & \gc$86.57 $ & \gc$72.07 $ \\ \hline
\multirow{2}{*}{Learned predictors (black box)} 
 & \textit{RNN-BCE } & $72.70 $ & $72.28$  & $69.69 $ & $67.09 $ & $63.44$ & $54.37$ & $87.67$ & $82.53$ & $59.32$ & $53.26$ & $-$ & $-$ \\
 & \gc\textit{RNN-TDQC (Ours)} & \gc\textcolor{red}{$\mathbf{74.20}$} & \gc\textcolor{red}{$\mathbf{72.72}$} & \gc$78.90 $ & \gc$72.97 $ & \gc$72.30$ & \gc\textcolor{red}{$\mathbf{70.17}$} & \gc$87.51$ & \gc$83.39$ & \gc$64.19$ & \gc$57.37$ & \gc$-$ & \gc$-$ \\
\end{tabular}
}
\caption{ROC-AUC results on simulation and real robot experiment. Results are averaged over 21 seeds which determine different train-test splits of the tasks. "$-$" indicates that the ROC-AUC can't be calculated on the method. The \textcolor{red}{\textbf{best}} performing methods are highlighted in the table. Our method, TDQC achieves SOTA results on unseen tasks in 4 benchmarks out of 6.}
\label{tab:roc_auc}
\vspace{-0.8em}
\end{table*}

\subsection{Vision-Language-Action Models}  
\label{sec:VLA_models}
We evaluate four state-of-the-art VLA policies: OpenVLA \cite{kim2024openvla}, UniVLA \cite{wang2025unified}, $\pi_0$ \cite{black2024pi_0}, and $\pi_0$-FAST \cite{pertsch2025fast}. These models span different action parameterizations, which directly changes what confidence signals can be extracted from their outputs. OpenVLA discretizes continuous action into tokens and exposes the corresponding token probabilities, which we use as the action probabilities. $\pi_0$-FAST and UniVLA also employs a tokenized representation and apply Discrete Cosine Transform (DCT) to convert continuous action sequences into discrete action tokens. Consequently, token probabilities do not match with the per-dimension continuous action and cannot be directly interpreted as such. The $\pi_0$ VLA represents continuous action with a flow matching model, rather than per-step token probabilities.
All models are open-sourced, allowing access to internal features when needed. 

Following Definition \ref{def:direct_success_predictor}, we refer to methods that predict success using only the model's observable outputs as \textit{black-box}. For OpenVLA, our black-box approach uses the discrete action probabilities directly. For $\pi_0$-FAST and UniVLA, we observe that the FAST action tokenizer decodes tokens into discrete actions, where each token has its corresponding logits that span across the decoder's vocabulary size. Our black-box approach takes these logits as input; see Apx. \ref{sec:action_probability_modeling} for more details on the extraction of those probabilities.
For all white-box predictors, we use as input the features suggested in SAFE \citep{gu2025safe}.

\subsection{Benchmarks}
\label{sec:benchmarks}
\textbf{LIBERO}~\cite{liu2023libero} is a popular simulated benchmark for VLA evaluation. We evaluated OpenVLA, $\pi_0$-FAST and $\pi_0$ on LIBERO-10 suite, which contains 10 long horizon tasks with diverse objects, layouts, and instructions, and is considered the \emph{most challenging} suite \cite{gu2025safe,zollo2025confidence}. 
% Each episode is limited by the environment by a maximum step, if the robot reached that time step - it points to a failure and stops the simulation\aviv{
An episode in LIBERO is stopped once the robot completes its task. If the task has not been completed before some timeout duration, the trajectory is labeled as a failure. We let $T_i$ denote rollout $i$'s stopping time: either the time of task completion, or the timeout duration on failure (see \citet{gu2025safe} and Apx. \ref{sec:appendix_libero} for more details).

\textbf{Real-World WidowX}~~
We consider the WidowX data and OpenVLA checkpoints published in~\citet{gu2025safe}. The dataset contains 532 rollouts on 8 lifting pick-and-place tasks. \emph{All tasks} have rollouts with exactly $T_i = 50$ steps; if the policy succeeds earlier, the robot keeps taking actions in the environment until reaching $T_i$ (see \citealt{gu2025safe} and Apx. \ref{sec:benchmark_details}).

\textbf{Real-World Franka}~~
We consider the Franka Emika Panda Robot in~\citet{gu2025safe} using $\pi_0$-FAST-DROID checkpoints \cite{pertsch2025fast}. The dataset contains 13 pick and place tasks, each with 30 successful and 30 failed rollouts. Each \emph{task} $k$ has the same number of steps $T_k$, regardless if they failed earlier or not, thus $T_i = T_k$ for each rollout $i$ in task $k$ (see \citealt{gu2025safe} and Apx. \ref{sec:benchmark_details}). 

\subsection{Baselines}
\textbf{Static Predictors.}~~We compared Max Prob, Avg Prob and Running Avg Prob presented in \citet{zollo2025confidence}, Avg Entropy and Running Avg Entropy from \citet{ gu2025safe}. These methods use the raw action probabilities per time step or apply simple, hand-crafted transformations, such as maximum or average over both action dimension and time, on the raw probabilities and on the probabilities' entropy. More details and formulation appear in the Apx. \ref{sec:static_baselines}.

\textbf{Trained Predictors.}~~
We used the baselines from~\citet{gu2025safe}, which provide, to the best of our knowledge, the state-of-the-art failure detection baselines for VLA models. \textsc{SAFE-MLP} uses a small MLP network and takes as input the internal feature vectors (hidden state) $e$ from the model, i.e., $\hist_t = e_t$. \textsc{SAFE-RNN} applies a linear projection layer to the input and then an LSTM~\cite{hochreiter1997long}. For the evaluation, we followed the same networks and parameters used in \citet{gu2025safe} as the $f$ in SAFE methods. Note that both SAFE baselines use hidden states as input, hence are white-box methods.

Apx. \ref{sec:trained_baselines} and \citet{gu2025safe} provide more details on how the hidden features are extracted and on the architectures of both models.
Here we note that the \textsc{SAFE-MLP} loss aggregates the scores over time $\mathcal{L}_{\text{MLP}} = \sum_i \left[y_i\sum_t(t - f_\theta(e_t)) + (1-y_i)\sum_tf_\theta(e_t) \right]$ which induces a monotone, time-indexed scoring scheme rather than a calibrated probability. Its uncertainty estimation at each time step is between $f(\hist_{t-1}) \leq f(\hist_t) \leq t$ and cannot be treated as a probability. Post-hoc normalization would be arbitrary and distort comparisons. Therefore, we keep the same MLP architecture but replaced the loss with a standard binary cross-entropy objective, yielding \textsc{SAFE-MLP BCE}.

We compared our method TDQC in two settings: (1) We use black-box success predictor (Def.~\ref{def:direct_success_predictor}) for OpenVLA, UniVLA and $\pi_0$-FAST as described in Section~\ref{sec:VLA_models} combined with TDQC loss (Alg. \ref{alg:tdqc}). (2) We incorporate our TDQC loss (Alg. \ref{alg:tdqc}) on SAFE baselines where $\hist_t$ is the policy's internal features at time step $t$.
Note that both settings are alternative ways for approximating $Q^\pi_t$. 

\subsection{TD loss improves calibration and failure detection results?}
\label{sec:q1}

\textbf{Calibration.}~~To test the VLA policies' calibration, we measure the \emph{sequential Brier score} (Definition~\ref{def:sequential_brier}), which measures calibration and accuracy as shown in Eq.~\ref{eq:sequential_brier_two_component}.
Each rollout $i$ has a horizon $T_i$, and produces an episode-level binary success label $Y(\hist_T^i)$.
To compare calibration across rollouts with different lengths, we report sequential Brier score in different \emph{quantiles} $q\in[0,1]$, where $q=t/T_i$ denotes a normalized time within rollout $i$.

Table~\ref{tab:Brier_score} shows that across models and benchmarks, \textit{TDQC improves calibration relative to non-TDQC variants}, where SAFE-RNN-TDQC performs the best or on par for all $\pi_0$ model variants. Moreover, Figure~\ref{fig:avg_quantile_brier} reports sequential Brier scores at different  time quantiles within an episode. Early in the rollout, when limited task information is available, TD-based predictors perform comparably to binary cross-entropy baselines. As time advances, sequential Brier scores decrease for all methods, reflecting that explicitly modeling the temporal structure yields more accurate success probability estimates than step-wise or purely supervised training. 

The dotted horizontal line in \Cref{fig:avg_quantile_brier} represents the Brier score of a constant predictor that, instead of using the model's task-specific success estimates, consistently outputs the empirical mean success rate computed over the seen tasks, as described in Section \ref{ex:worst_brier}. Consequently, it constitutes an uninformative baseline. We observe that several BCE-based approaches yield little to no improvement relative to this baseline, whereas most TDQC-based approaches attain progressively lower Brier scores as the rollout advances, indicating superior calibration and more informative predictive distributions.

\textbf{Failure Detection.}~~We evaluate failure detection using ROC-AUC, which measures how well a score ranks failed rollouts above successful ones and is widely used for uncertainty quantification in LLMs \cite{huang2023look, farquhar2024detecting, xia2025survey}. A failure is flagged once the conformal prediction threshold is exceeded.
For each task, we compute the minimum rollout length and evaluate ROC-AUC using the maximum prediction up to this timestep. 

Table~\ref{tab:roc_auc} shows that predictors with TDQC loss perform better or on par with the best baselines for both white-box and black-box predictors. Based on Table~\ref{tab:Brier_score} and Table~\ref{tab:roc_auc}, we conclude that \emph{TDQC sets the state-of-the-art on unseen tasks on OpenVLA LIBERO, $\pi_0$ LIBERO, UniVLA LIBERO and real robot $\pi_0$-FAST Franka}.

Finally, since ROC-AUC and sequential Brier are related, we observe in \Cref{fig:brier_vs_roc_auc} a consistent empirical trend. Methods with lower sequential Brier score tend to achieve higher ROC-AUC, suggesting that better calibrated success probabilities are useful for failure detection. The Spearman correlation of $-0.686$ highlights the strong connection between the two performance measures. Apx.~\ref{sec:extended_roc_auc_vs_brier} examines correlation results per VLA. All VLAs display high correlation, with some VLAs showing higher correlation than others, indicating that their internal representations are more informative about task success.

\subsection{Can black-box based calibration yield competitive results?}
\label{sec:q2}

\begin{figure}[t]
    \centering
    \includegraphics[width=0.85\columnwidth]{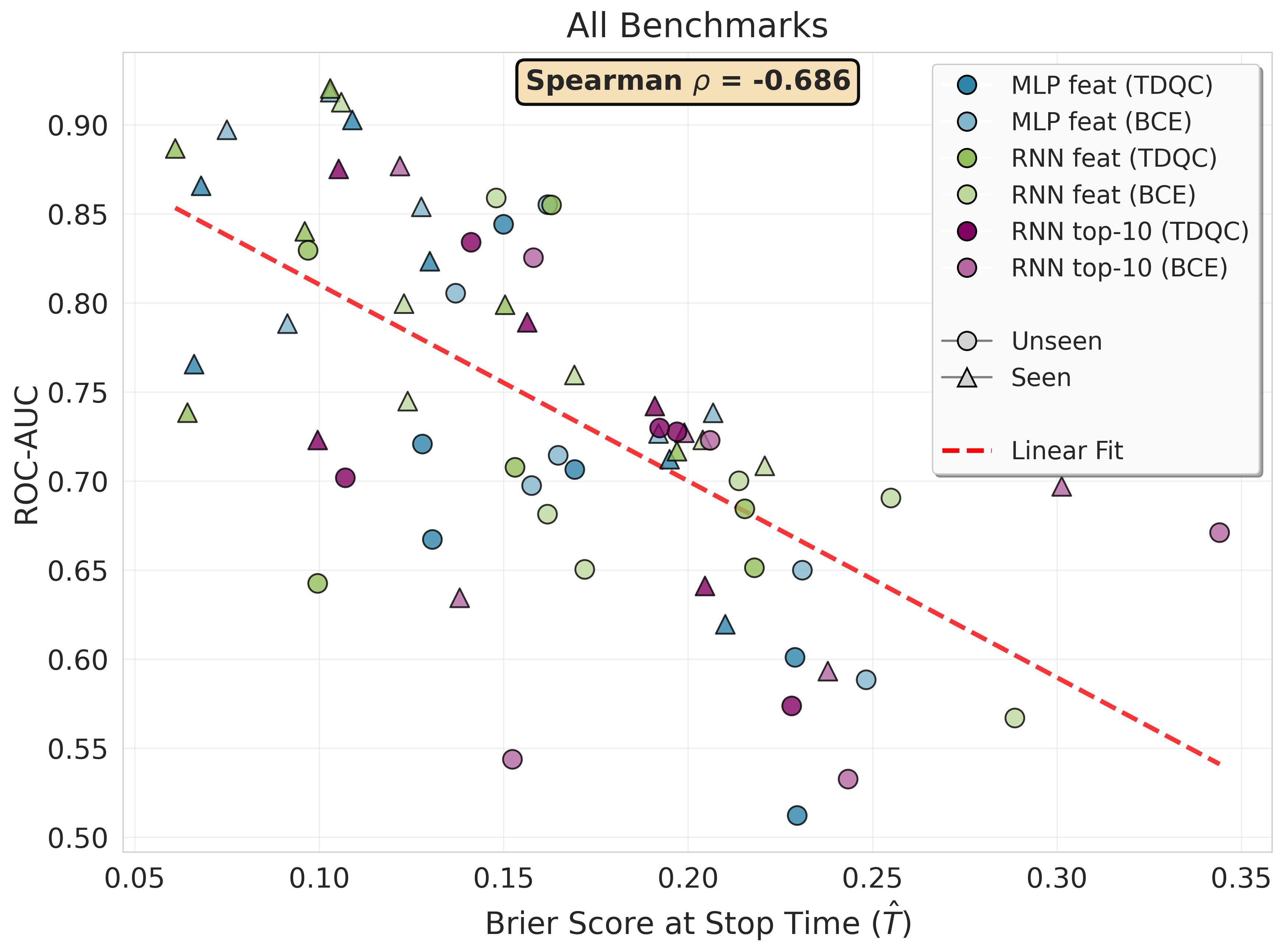} % or .png/.jpg
    \caption{\textbf{ROC-AUC vs Brier score} over all learned baselines in all benchmarks at the minimum rollout length. Points are grouped by method and split, with a dashed linear fit; the Spearman correlation is $ \rho = -0.686$ which indicates high negative correlation.}
    \label{fig:brier_vs_roc_auc}
    \vspace{-1em}
\end{figure}
To evaluate calibration of action or token probabilities we measured the Brier score for learned black-box methods and static methods. Table~\ref{tab:Brier_score} shows that black-box predictors achieve best or competitive results on all LIBERO benchmarks, when trained using TDQC. Further, TDQC significantly improves performance of black-box predictors across all benchmarks. In comparison, static baselines, which also use action probabilities as input but are not calibrated by learning from trajectory success labels, display significantly worse performance. These results suggest that the right calibration technique (here, TDQC) can extract useful signal even in a black-box setting.

\subsection{Application to guided test-time action search}
To evaluate the guided test-time search application described in Section \ref{sec:app_TT_guided_action_search}, we measured the success rates of OpenVLA on three unseen tasks from the LIBERO-10 benchmark. We compared the standard, unmodified OpenVLA policy (Baseline) against several value guided search configurations, using \Cref{alg:q_value_guided_action_selection}. In \Cref{fig:success_rate_vs_compute_action_search} we show the success rate relative to the increase in test time compute. \textit{RNN with TDQC or BCE} action selection methods uses the output of the $f_\theta$ network for the guidance \textit{at all time steps} (that is, the threshold in \Cref{alg:q_value_guided_action_selection} is $\bar{T} = -\infty$), and are trained using either TDQC or BCE loss.
Method \textit{TDQC - Thresh} saves compute by applying a confidence threshold, as described in \Cref{alg:q_value_guided_action_selection}. In this experiment we used a threshold $\bar{T} = 0.35$, which achieved a good tradeoff between success rate and compute. Apx.\ref{sec:extended_q_value_guided_action_selection} gives more detailed explanation and analysis.

To increase variance in sampled actions, we generated $10$ samples per timestep in all value guided methods using a sampling temperature of $1.5$. All experiments are performed on 3 unseen tasks, containing $50$ rollouts each. 

\Cref{fig:success_rate_vs_compute_action_search} shows that RNN methods with top-10 probabilities gain significant improvement and outperform the baseline VLA policy, validating the use of learned scoring function ($f_\theta$) for action selection. Notably, the RNN method trained with TDQC achieves the highest overall performance, reaching $55\%$ success rate on average, an improvement of $13\%$ over the regular baseline. While the BCE loss variant also improves upon the baseline, it peaks lower at $50\%$. Moreover, applying a threshold to the TDQC method (TDQC, Thresh $0.35$) results in a slight performance drop compared to the unconstrained TDQC approach, but saves almost half of the additional compute. Interestingly, we observe a near-linear relation between additional compute and success rate (the gray line in \Cref{fig:success_rate_vs_compute_action_search} is a fit on all RNN top-10 methods).

Additional details are provided in Apx. \ref{sec:extended_q_value_guided_action_selection}.

\begin{figure}[t]
    \centering
    \includegraphics[width=0.9\columnwidth]{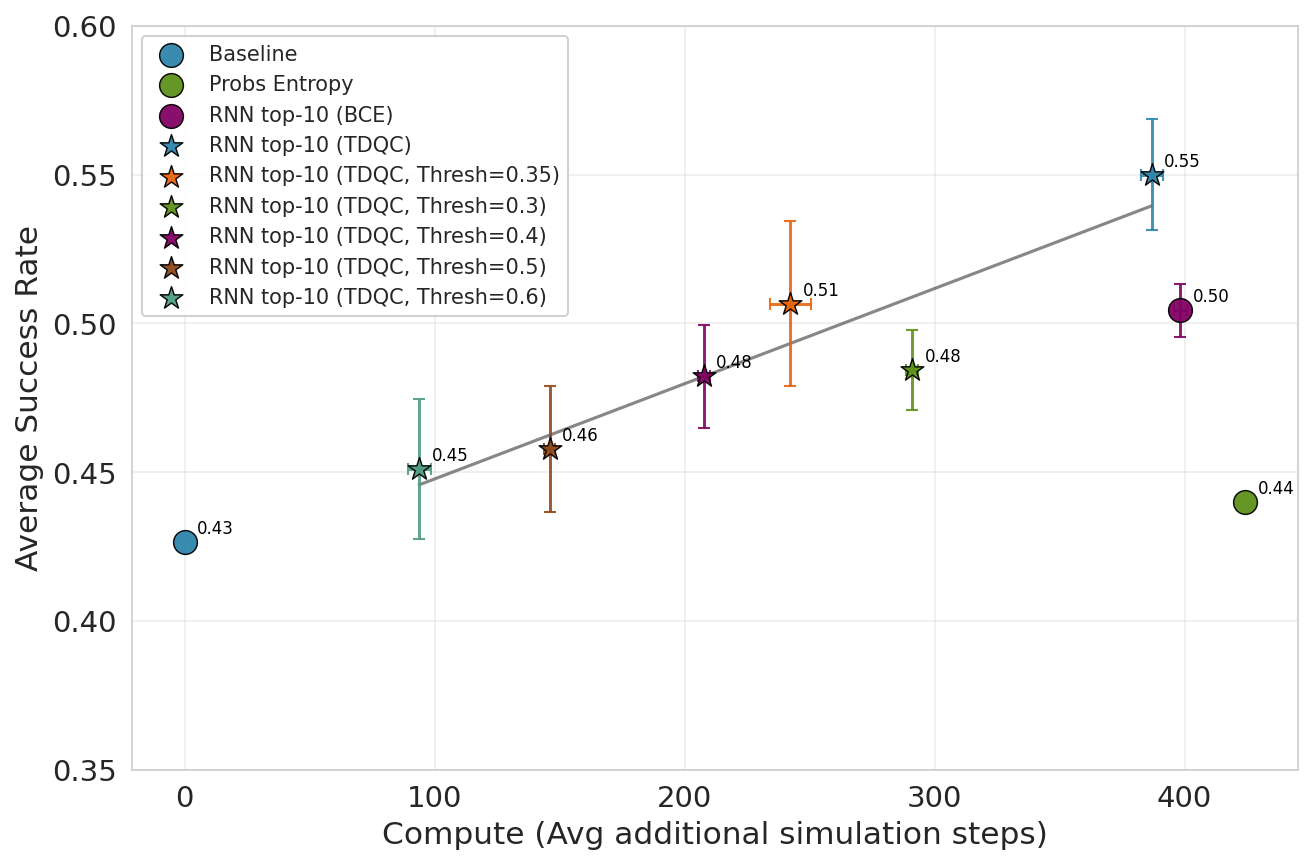} % or .png/.jpg
    \caption{\textbf{Averaged success rates over additional simulation steps for action selection configurations}. All experiments evaluate 3 unseen tasks from LIBERO-10 taken with OpenVLA, consisting of 50 rollouts each, avereged over 3 seeds. RNN-TDQC achieves highest success rates, while RNN-TDQC with thresholds tradeoff success rate and compute.}
    \label{fig:success_rate_vs_compute_action_search}
    \vspace{-1em}
\end{figure}

% Acknowledgements should only appear in the accepted version.
% \section*{Acknowledgements}

% \textbf{Do not} include acknowledgements in the initial version of
% the paper submitted for blind review.

% If a paper is accepted, the final camera-ready version can (and
% usually should) include acknowledgements.  Such acknowledgements
% should be placed at the end of the section, in an unnumbered section
% that does not count towards the paper page limit. Typically, this will 
% include thanks to reviewers who gave useful comments, to colleagues 
% who contributed to the ideas, and to funding agencies and corporate 
% sponsors that provided financial support.
\section{Conclusion and Future Works}
In this paper, we proposed a novel formulation for calibration in sequential tasks. We showed that predictors calibrated using our temporal-difference based approach achieve SOTA performance in several VLA benchmarks. Importantly, our approach can work with only action probabilities, which is essential for calibration of \emph{black-box} proprietary models accessed via APIs. 

We see interesting potential in using calibrated success predictors for action selection at test time. In particular, a calibrated predictor may be used to allocate search effort only to ``difficult'' decisions -- a promising future direction for resource management.

\textbf{Limitations.}~~We emphasize three main limitations. First, we observe that the failure predictor generalizes across unseen tasks within an environment but not across environments, embodiments, or action parameterizations. For instance, we report in Apx. \ref{sec:gen_accross_benchmarks} that transferring the OpenVLA LIBERO-10 predictor directly to the WidowX environment yields only $53.69 \pm 5.07$ ROC-AUC and a sequential Brier score of $0.413 \pm 0.03$. We believe that such broad generalization may require a different approach.
% , though this still surpasses static baselines, suggesting partial signal transfer. 
Second, our experimental results are performed with binary episodic success; Note however that our formulation in \cref{sec:problem_formulation} accounts for general reward functions and TDQC can easily be extended to such.
% extending TDQC to dense reward supervision remains an important direction. 
Third, our guided action-selection operates under a one-step look-ahead. While effective in simulation, this requires environment access at inference time and is therefore not directly deployable in real-world settings without a forward model. Recent advances in world models for robotics~\cite{badithela2025reliablescalablerobotpolicy,geminiroboticsteam2026evaluatinggeminiroboticspolicies} offer a promising path toward addressing this.

\textbf{Future Works.}~~We conclude with several potential future extensions of our work. Recent advances on world models~\citep{geminiroboticsteam2026evaluatinggeminiroboticspolicies, badithela2025reliablescalablerobotpolicy} could be exploited to improve uncertainty estimation of robot policies in the real world. Our approach could also be extended for LLM calibration~\cite{orgad2025llms, radharapu2025calibratingllmjudgeslinear}, interpreting text generation as a sequential task.

% * we advanced the field of calibration in sequential tasks
% * we show important application in VLAs where we achieve SOTA failure detection in several benchmark
% * most importantly, our approach fits action probabilities, which can be use in "black box" models
% * 
% * several interesting futue directions remains
% in robotic - exploiting simulated data to improve succes prediciton on real robot
%  may extended to LLMS  \url{https://arxiv.org/pdf/2512.22245} \cite{orgad2025llms, ....}

\section*{Acknowledgments}
We thank Qiao Gu for helpful conversations and for releasing his code and data for SAFE.
This research was funded by the European Union (ERC, Bayes-RL, 101041250). 
Views and opinions expressed are however those of the author(s) only and do not necessarily reflect those of the European Union or the European Research Council Executive Agency (ERCEA). Neither the European Union nor the granting authority can be held responsible for them. Y.~R. was supported by the Israel Science Foundation (ISF grant
729/21). He also acknowledges the additional support from the Career Advancement Fellowship at the Technion.

\section*{Impact Statement}

This paper presents work whose goal is to advance the field of 
Machine Learning. There are many potential societal consequences 
of our work, none of which we feel must be specifically highlighted here.

% Authors are \textbf{required} to include a statement of the potential 
% broader impact of their work, including its ethical aspects and future 
% societal consequences. This statement should be in an unnumbered 
% section at the end of the paper (co-located with Acknowledgements -- 
% the two may appear in either order, but both must be before References), 
% and does not count toward the paper page limit. In many cases, where 
% the ethical impacts and expected societal implications are those that 
% are well established when advancing the field of Machine Learning, 
% substantial discussion is not required, and a simple statement such 
% as the following will suffice:

% ``This paper presents work whose goal is to advance the field of 
% Machine Learning. There are many potential societal consequences 
% of our work, none which we feel must be specifically highlighted here.''

% The above statement can be used verbatim in such cases, but we 
% encourage authors to think about whether there is content which does 
% warrant further discussion, as this statement will be apparent if the 
% paper is later flagged for ethics review.

% In the unusual situation where you want a paper to appear in the
% references without citing it in the main text, use \nocite
% \nocite{langley00}

\bibliographystyle{icml2026}
\bibliography{biblio}

%%%%%%%%%%%%%%%%%%%%%%%%%%%%%%%%%%%%%%%%%%%%%%%%%%%%%%%%%%%%%%%%%%%%%%%%%%%%%%%
%%%%%%%%%%%%%%%%%%%%%%%%%%%%%%%%%%%%%%%%%%%%%%%%%%%%%%%%%%%%%%%%%%%%%%%%%%%%%%%
% APPENDIX
%%%%%%%%%%%%%%%%%%%%%%%%%%%%%%%%%%%%%%%%%%%%%%%%%%%%%%%%%%%%%%%%%%%%%%%%%%%%%%%
%%%%%%%%%%%%%%%%%%%%%%%%%%%%%%%%%%%%%%%%%%%%%%%%%%%%%%%%%%%%%%%%%%%%%%%%%%%%%%%
\newpage
\appendix
\onecolumn

\section{Proof for \Cref{thm:brier_equal_q}}
Fix a policy $\pi$ and consider an episodic process with horizon $T$. Let $h_t$ denote the history available up to time $t$, as defined in \Cref{sec:pomdp}. We study the problem of predicting the \emph{future rewards} from partial information: at time $t$ we observe $h_t$ and seek a predictor $\hat{f}(h_t)$ of the total future rewards.
\begin{equation}
R \;:=\; R(h_T) \;=\; \sum_{i=1}^{T} r(h_i),
\end{equation}
where $r(h_i)$ denotes the (possibly stochastic) reward at step $i$ given the history $\hist_t$.

\Cref{thm:brier_equal_q} states that the predictor minimizing the conditional mean squared error
\begin{equation*}
    \mathbb{E}^\pi_{\hist_t \in \Hist_t}\left[ \mathbb{E}^\pi_{\hist_T \in \Hist_T}\left[ (R(\hist_T)- \hat{f}(\hist_t))^2 \mid \hist_t\right]\right]
\end{equation*}
is given by:
\begin{equation*}
\hat{f}(\hist_t) = \mathbb{E}_{\hist_T \in \Hist_T}^\pi[R (h_T)\mid \hist_t] = \sum_{i = 1}^{t-1} r(\hist_i)  + Q^\pi_t(\hist_t, a_t)
\end{equation*}
Meaning that, in order to minimize the \emph{sequential Brier Score}, one must approximate the Q-function $Q^\pi_t(\hist_t, a_t)$.

\begin{proof}
We begin by denote $R(\hist_{T}) \coloneqq R$ for ease of notation.
We recall that the optimal estimator in terms of MSE takes the form of $\mathbb{E}_{\hist_T \in \Hist_T}\left[R\mid \hist_{t}\right]$.

% Note that given the entire history $\hist_{t}$ we restore the ...
% \begin{align*}
%     \hat{f}(\hist_{t}) = \mathbb{E}\left[R \mid \hist_{t}\right] = \mathbb{E}^\pi\left[\sum _{i = 1}^T r(\hist_i) \right]
% \end{align*}

Suppose we stop at step $t$ and can only access $\hist_{t}$, meaning we don't know $\hist_{T}$ and thus, don't have access to $\{ r(\hist_i)\}_{i = t + 1}^T$, so we'll replace $\mathbb{E}^\pi_{\hist_{T} \in \Hist_T}[\sum _{j = t+ 1}^T r(\hist_j) \mid \hist_{t}]$ with $\hat{f}(\hist_{t})$.

\begin{align*}
\nonumber
&\mathbb{E}_{\hist_T \in \Hist_T}\left[R \Big| \hist_{t}\right] = \mathbb{E}^\pi_{\hist_{T} \in \Hist_T}\left[\sum _{i = 1}^{t} r(\hist_i) +  \sum _{j = t + 1}^T r(\hist_j) \Big| \hist_{t}\right] \overset{\text{markov}}{=} \\
&\mathbb{E}^\pi_{\hist_T \in \Hist_T}\left[\sum _{i = 1}^{t} r(\hist_i) \Big| \hist_t\right] +  \mathbb{E}^\pi_{\hist_T \in \Hist_T}\left[\sum _{j = t + 1}^T r(\hist_j) \Big| \hist_{t} \right] = \sum _{i = 1}^{t} r(\hist_i)+ \mathbb{E}^\pi_{\hist_T \in \Hist_T}\left[\sum _{j = t + 1}^T r(\hist_j) \Big| \hist_{t} \right] = \sum _{i = 1}^{t} r(\hist_i) + \hat{f}(\hist_{t})
\end{align*}

Where the last equality holds since we know the rewards and histories until time step $t$.\\

Note that: 
$$\mathbb{E}_{\hist_{T} \in \Hist_T}^\pi\left[\sum_{j = t + 1}^T r(\hist_j)  \Big| \hist_{t}\right] = Q^\pi_t(\hist_{t}, a_{t}) - r(\hist_t)$$
The amended sequential Brier Score for this case is:
\begin{align*}
& \mathbb{E}^\pi_{\hist_{t} \in \Hist_t}\left[\mathbb{E}^\pi_{\hist_{T} \in \Hist_T}\left[\left(R - \sum _{i = 1}^{t} r(\hist_i) -\hat{f}(\hist_{t})\right)^2 \Big| \hist_{t}\right]\right]
% = \mathbb{E}^\pi\left[\mathbb{E}^\pi_{\hist_{t} \in \mathcal{H}} \left[\left(R - \sum _{i = 1}^{t} r(\hist_i) -f(\hist_{t})\right)^2 \Big| \hist_{t} \right] \Big| \hist_{t}\right]
\end{align*}
%%%%%%
Because we have information until time step $t$, then $\hat{f}(\hist_{t})$ is constant and can be treated as a number. Also, $\mathbb{E}^\pi_{\hist_T \in \Hist_T}\left[\sum _{i = 1}^{t} r(\hist_i)\right]$ is a constant and can be outside of the inner expectation.
Let's look at the inner expectation:
\begin{align*}
&\mathbb{E}^\pi_{\hist_{T} \in \Hist_T}\left[\left(R - \sum _{i = 1}^{t} r(\hist_i) -\hat{f}(\hist_{t})\right)^2 \Big| \hist_{t}\right] = \mathbb{E}^\pi_{\hist_T \in \Hist_T}\left[R^2\mid \hist_{t} \right] -  2\cdot \mathbb{E}^\pi_{\hist_T \in \Hist_T}\left[R\mid\hist_{t} \right]\cdot \sum _{i = 1}^{t} r(\hist_i) \\
&- 2 \cdot\mathbb{E}^\pi_{\hist_T \in \Hist_T}\left[R \mid\hist_{t} \right]\cdot \hat{f}(\hist_{t})+ 2 \cdot \sum _{i = 1}^{t} r(\hist_i)\cdot  \hat{f}(\hist_{t}) +\left(  \sum _{i = 1}^{t} r(\hist_i) \right) ^2 + \hat{f}(\hist_{t})^2 
\end{align*}

We can add and subtract $\left(\mathbb{E}^\pi_{\hist_T \in \Hist_T}\left[R\mid \hist_{t}\right] \right)^2$ from the equation:
\begin{align*}
&\mathbb{E}^\pi_{\hist_T \in \Hist_T}\left[R^2\mid\hist_{t}\right] -  2\cdot \mathbb{E}^\pi_{\hist_T \in \Hist_T}\left[R\mid\hist_{t}\right]\cdot \sum _{i = 1}^{t} r(\hist_i) - 2 \cdot\mathbb{E}^\pi_{\hist_T \in \Hist_T}\left[R \mid \hist_t \right]\cdot \hat{f}(\hist_{t})\\
&+2 \cdot \sum _{i = 1}^{t} r(\hist_i)\cdot  \hat{f}(\hist_{t}) +\left(  \sum _{i = 1}^{t} r(\hist_i) \right) ^2 + \hat{f}(\hist_{t})^2 + \left(\mathbb{E}^\pi_{\hist_T \in \Hist_T}\left[R\mid\hist_{t}\right] \right)^2 - \left(\mathbb{E}^\pi_{\hist_T \in \Hist_T}\left[R\mid\hist_{t}\right] \right)^2 \\
& = \text{Var}\left(R\mid\hist_{t}\right) -  2\cdot \mathbb{E}^\pi_{\hist_T \in \Hist_T}\left[R\mid\hist_{t}\right]\cdot \sum _{i = 1}^{t} r(\hist_i) - 2 \cdot\mathbb{E}^\pi_{\hist_T \in \Hist_T}\left[R \mid \hist_{t}\right]\cdot \hat{f}(\hist_{t})\\
&+ 2 \cdot \sum _{i = 1}^{t} r(\hist_i)\cdot  \hat{f}(\hist_{t}) +\left(  \sum _{i = 1}^{t} r(\hist_i) \right) ^2 + \hat{f}(\hist_{t})^2 + \left(\mathbb{E}^\pi_{\hist_T \in \Hist_T}\left[R\mid\hist_{t}\right] \right)^2\\
& = \text{Var}\left(R\mid\hist_{t}\right) + \left( \mathbb{E}^\pi_{\hist_T \in \Hist_T}\left[R\mid \hist_{t}\right] - \sum _{i = 1}^{t} r(\hist_i)- \hat{f}(\hist_{t})    \right)^2
\end{align*}
Lets break down the term $\mathbb{E}^\pi_{\hist_T \in \Hist_T}\left[R\mid \hist_{t}\right]$:
\begin{align}
\label{eq:expectence_R}
    &\mathbb{E}^\pi_{\hist_T \in \Hist_T}\left[R\mid \hist_{t}\right] = \mathbb{E}^\pi_{\hist_T \in \Hist_T}\left[\sum _{i = 1}^{t}r(\hist_i) + \sum_{j = t+1}^T r(\hist_j) \mid \hist_{t}\right] =\\ \nonumber
    &\sum _{i = 1}^{t}r(\hist_i) + \mathbb{E}^\pi_{\hist_T \in \Hist_T}\left[\sum_{i = t + 1}^Tr(\hist_i) \mid\hist_{t} \right] =\sum _{i = 1}^{t}r(\hist_i) + \mathbb{E}^\pi_{\hist_T \in \Hist_T}\left[\sum_{i = t + 1}^Tr(\hist_i) \mid \hist_{t} \right]
\end{align}
We can use \Cref{eq:expectence_R} and the inner expectation looks like:
\begin{empheq}[box=\fbox]{align*}
&\mathbb{E}^\pi_{\hist_T \in \mathcal{H}}\left[ (R(\hist_T)- \hat{f}(\hist_t))^2 \mid \hist_t\right] = \text{Var}\left(R\mid\hist_{t}\right) + \left(\mathbb{E}^\pi_{\hist_T \in \Hist_T}\left[\sum_{i = t + 1}^Tr(\hist_i) \mid \hist_{t}\right] - \hat{f}(\hist_{t})    \right)^2
\end{empheq}

The $\hat{f}(\hist_{t})$ that will minimize this term is (inverse of law of total expectation):
$$\hat{f}(\hist_{t}) = \mathbb{E}_{\hist_{T} }^\pi\left[\sum_{i = t + 1}^Tr(\hist_i) \mid \hist_{t} \right] = Q^\pi_t(\hist_{t}, a_{t}) - r(\hist_t)$$
\end{proof}

\section{Brier Score Decomposition}\label{sec:Brier_decomposition_proof}

By adding and subtracting $\eta(F)$ and expanding the square,
\begin{align*}
(F-Y)^2 
&= (F-\eta(F) + \eta(F)-Y)^2 \\
&= (F-\eta(F))^2 + (\eta(F)-Y)^2 
+ 2(F-\eta(F))(\eta(F)-Y).
\end{align*}
Taking expectations, the cross term vanishes since
\[
\mathbb{E}\!\left[(\eta(F)-Y)\mid F\right] = 0,
\]
and therefore
\[
\mathbb{E}\!\left[(F-\eta(F))(\eta(F)-Y)\right] = 0.
\]
Moreover, since $Y\mid F \sim \mathrm{Bernoulli}(\eta(F))$,
\[
\mathbb{E}\!\left[(\eta(F)-Y)^2 \mid F\right] = \eta(F)(1-\eta(F)).
\]

\section{Functional Conformal Prediction}
\label{apx:conformal_prediction}

\begin{algorithm}[t]
\caption{Split-Conformal Trajectory Upper Band with Modulation}
\label{alg:functional_conformal_band}
\begin{algorithmic}[1]
\INPUT Calibration rollouts $\mathcal{D}_{\mathrm{cal}}$, miscoverage level $\alpha\in(0,1)$, scoring network $\mathcal{S}(\cdot;\theta)$, rollout horizon $T$

\STATE Split $\mathcal{D}_{\mathrm{cal}}$ into two disjoint sets:
$\mathcal{D}_{\mathrm{cal}_A}=\{P^{k}\}_{k=1}^{N_1}$ and $\mathcal{D}_{\mathrm{cal}_B}=\{P^{j}\}_{j=1}^{N_2}$.

\STATE \textbf{Compute mean successful trajectory on $\mathcal{D}_{\mathrm{cal}_A}$:}
\FOR{$t = 1, \ldots, T$}
    \STATE $\mu_t \gets \frac{1}{N_1}\sum_{k=1}^{N_1}\mathcal{S}(P_t^{k};\theta)$
\ENDFOR

\STATE \textbf{Compute trimming threshold $\gamma$ on $\mathcal{D}_{\mathrm{cal}_A}$:}
\FOR{$k = 1, \ldots, N_1$}
    \STATE $M_k \gets \max_{t\in[T]}\left|\mathcal{S}(P_t^{k};\theta)-\mu_t\right|$
\ENDFOR
\STATE $\gamma \gets \mathrm{Quantile}_{1-\alpha}\left(\{M_k\}_{k=1}^{N_1}\right)$

\STATE \textbf{Define index set $\mathcal{H}$ for modulation computation:}
\[
\mathcal{H} = 
\begin{cases}
[N_1], & \text{   if } (N_1 + 1)(1 - \alpha) > N_1, \\
\left\{
k \in [N_1] :
\max_{t \in [T]}
\left|
\mathcal{S}(P_t^{k}; \theta) - \mu_t
\right| \le \gamma
\right\}, & \text{otherwise},
\end{cases}
\]

\STATE \textbf{Compute modulation function $s_{\mathrm{cal}_A}(t)$:}
\FOR{$t = 1, \ldots, T$}
    \STATE  \begin{equation}
    \label{eq:modulation_funciton}
        s_{\mathrm{cal}_A}(t) \gets \max_{k\in\mathcal{H}} \left|\mathcal{S}(P_t^{k};\theta)-\mu_t\right|
    \end{equation}
\ENDFOR

\STATE \textbf{Compute normalized max-deviation scores on $\mathcal{D}_{\mathrm{cal}_B}$:}
\FOR{$j = 1, \ldots, N_2$}
    \STATE $D_j \gets \max_{t\in[T]}
    \left\{\frac{\mu_t-\mathcal{S}(P_t^{j};\theta)}{s_{\mathrm{cal}_A}(t)}\right\}$
\ENDFOR
\STATE $\mathcal{D} \gets \{D_j \mid j=1,\ldots,N_2\}$

\STATE \textbf{Compute conformal band width and upper bound:}
\STATE $h \gets \mathrm{Quantile}_{1-\alpha}\left(\mathcal{D}\right)$
\FOR{$t = 1, \ldots, T$}
    \STATE $\delta_t \gets \mu_t + h\cdot s_{\mathrm{cal}_A}(t)$
\ENDFOR

\end{algorithmic}
\end{algorithm}

We follow \cite{gu2025safe, xu2025can, diquigiovanni2021importance} for one-sided CP band calculations to determine the time varying threshold $\delta_t$ and use two calibration datasets $\mathcal{D}_{\text{cal}_A}\,, \mathcal{D}_{\text{cal}_B}$. 
Under the exchangeability assumption \cite{vovk2005algorithmic} and given a user-specified significance level $\alpha$, the CP band guarantees that, for a new successful rollout, its failure probability $1 - f_\theta$ will lie within band $\delta_t$ for all times $t$ with probability $1 - \alpha$. Conversely, if the failure probability of a test rollout rises above that threshold, we can label it as a failure with nominal confidence of $1 - \alpha$. See Alg. \ref{alg:functional_conformal_band} for the detailed calculations. Note that this assumption holds for every time step $t$ since we consider the \emph{maximal} failure detection score until time step $t$ and not $1 - f_\theta(\hist_t)$.

Intuitively, Eq.~(\ref{eq:modulation_funciton}) in \Cref{alg:functional_conformal_band} removes the top $\alpha$ most extreme rollouts from set $\mathcal{D}_{\text{cal}_A}$ before calculating the width of the prediction bands. This ensures the actual band width represents the bulk of the data, not the outliers.
Additionally, the maximum over $t$ is taken because the conformal prediction band is intended to reflect the entire
trajectory.
We define
\[
\mathcal{D} = \{ D_j \mid j = 1, \ldots, N_2 \}
\]
as the collection of such maximum deviations from the mean.
The band width $h$ is computed as the $(1-\alpha)$-quantile of $\mathcal{S}$, and the upper bound is given by
\begin{equation*}
   \label{eq:upper_bound_t}
\tau_t = \mathrm{upper}_t = \mu_t + h\cdot\mathrm{s_{cal}}_A(t) 
\end{equation*}

Note that $D_j$'s were dividing by $\mathrm{s_{cal}}_A(t)$ in Eq.~(\ref{eq:modulation_funciton}), we then multiply by this term in line 24 to map the results back to the original scale.

\section{Experiments Details}
\subsection{Code}
You can find the Github repository here: \url{https://github.com/shellytechnion/TDQC}
This repository contains the implementation for this paper. The codebase is built upon and extends the \citet{gu2025safe} work on failure detection in Vision-Language-Action (VLA) models.
% https://anonymous.4open.science/r/safe-vla-TDQC-DC5F

\subsection{VLA models}
We show experiments on 4 state-of-the-art open source VLA models:
OpenVLA \cite{kim2024openvla}, UniVLA \cite{wang2025unified}, $\pi_0$-FAST \cite{pertsch2025fast} and $\pi_0$ \cite{black2024pi_0}.
OpenVLA use MIT license; $\pi_0$ and $\pi_0$-FAST use Apache-2.0 license. At the time of writing, UniVLA does not specify an open-source license in its repository.

% UniVLA At each timestep, UniVLA autoregressively generates a variable number of discrete action tokens using the FAST action tokenizer, stopping when an end-of-action (EOA) token is produced. These tokens are decoded by the FAST tokenizer into a fixed-shape (10×7) action chunk regardless of how many tokens were generated. For each generated token, we apply softmax over the (masked) action vocabulary and take the probability assigned to the chosen token as its confidence score. This yields a variable-length vector of per-token confidence scores per timestep; because different inference calls produce different numbers of tokens, the vectors are zero-padded to the episode maximum before being stored.

\subsection{Action and Token Probability Modeling}
\label{sec:action_probability_modeling}
\textbf{OpenVLA}~\cite{kim2024openvla} represents an action with $D$ discrete tokens, where each token represents one dimension of the robot's action space (DoF). For each dimension $d \in \{1, \ldots, D\}$, the policy outputs logits $z_t^{(d)}$ and probabilities
\begin{equation*}
p_t^{(d)} = \operatorname{softmax}\!\left(z_t^{(d)}\right), 
\qquad
p_{t,k}^{(d)} = \pi_\theta\!\left(A_t^{(d)} = k \mid o_t\right),
\label{eq:action_probs}
\end{equation*}
for action tokens $k \in \{1, \ldots, K\}$ in an action vocabulary of size $K$. At time $t$, the policy selects the top tokens
\begin{equation*}
a_t^{(d)} = \arg\max_k \, p_{t,k}^{(d)}
\end{equation*}
for each action dimension $d$, and decodes them into a continuous action for execution by the robot.

Both the \textbf{UniVLA} and \textbf{$\pi_0$-FAST} models use the FAST tokenizer~\cite{pertsch2025fast}, which applies a Discrete Cosine Transform (DCT) to encode continuous action trajectories into discrete action tokens. Given a window of size $H$, we define the raw continuous action sequence as $A_{1:H} = \{a_1, a_2, \ldots, a_H\}$, where each step $a_t$ is a $d$-dimensional vector. During training, the FAST tokenizer encodes this continuous sequence into a discrete token sequence $L_a = [T_1, \ldots, T_n]$, where each token $T_i$ is drawn from a vocabulary $\mathcal{V}$ of size $|\mathcal{V}|$. Analogous to natural language processing, sequence lengths can vary, resulting in a variable-length ($n$) discrete representation. During inference, these discrete tokens are decoded back into continuous robot actions $A_{1:H}$. For each token $T_i$, the model outputs logits $z_t^{(T_i)}$ over the vocabulary, which are converted into probabilities via a softmax operation:
\begin{equation}
    p_t^{(T_i)} = \operatorname{softmax}\!\left(z_t^{(T_i)}\right)
    \label{eq:token_probs}
\end{equation}
We refer to these as token probabilities, which we utilize in our ``black-box'' configuration as $\hist_t$.

\subsection{Benchmarks Statistics}
\label{sec:benchmark_details}
%Test tasks (seed 0: tasks 0, 3, 5): Task 0: 54% Task 3: 44% Task 5: 76% Test set average: 58%
\begin{table}[h]
\centering
\small
\setlength{\tabcolsep}{6pt}
\renewcommand{\arraystretch}{1.15}
\begin{tabular}{lccc|cccc}
\toprule
\textbf{Benchmark} &
\multicolumn{3}{c|}{\textbf{Number of Tasks}} &
\multicolumn{4}{c}{\textbf{Number of rollouts}} \\
\cmidrule(lr){2-4}\cmidrule(lr){5-8}
 & \textbf{Seen} & \textbf{Unseen} & \textbf{Total} &
 \textbf{Train} & \textbf{Eval Seen} & \textbf{Eval Unseen} & \textbf{Total} \\
\midrule
LIBERO                         & 7  & 3 & 10 & 210 & 140 & 150 & 500 \\
Real Franka                    & 10 & 3 & 13 & 450 & 150 & 180 & 780 \\
Real WidowX                    & 6  & 2 & 8  & 250 & 133 & 149 & 532 \\
\bottomrule
\end{tabular}
\caption{Benchmark statistics: task split into seen/unseen subsets and corresponding numbers of training and evaluation rollouts.}
\label{tab:benchmark_stats}
\end{table}

\begin{table}[h]
\centering
\small
\setlength{\tabcolsep}{6pt}
\begin{tabular}{llc}
\toprule
\textbf{Benchmark} & \textbf{VLA Model} & \textbf{Number of Environment 
Steps} \\
\midrule
\multirow{4}{*}{LIBERO-10} 
  & OpenVLA      & $199,279$ \\
  & $\pi_0$      & $151,755$ \\
  & $\pi_0$-FAST & $183,488$ \\
  & UniVLA       & $134,746$ \\
\midrule
\multirow{1}{*}{WidowX} 
  & OpenVLA      & $26,600$ \\
\midrule
\multirow{1}{*}{Franka} 
  & $\pi_0$-FAST & $36,180$ \\
\midrule
\bottomrule
\end{tabular}
\caption{\textbf{Episode horizon by benchmark and model.} We report the maximum rollout length (environment time limit)
used in each evaluation setting.}
\label{tab:timesteps_by_benchmark_model}
\end{table}

\begin{table}[h]
\centering
\small
\setlength{\tabcolsep}{10pt}
\renewcommand{\arraystretch}{1.25}
\begin{tabular}{lcc}
\toprule
\textbf{Model} & \textbf{LIBERO-10}   \\ % & \textbf{WidowX} =45.8%
\midrule
OpenVLA     & 52\%   \\ % & 92.5\%\%
% OpenVLA (Quant-8) & 54.6\% & 88.6\% \\
% OpenVLA (Quant-4) & 50.8\% & - \\
$\pi_0$-FAST & 60.2\%   \\
% $\pi_0$-FAST & 62.2\% \textcolor{red}{Check!} & -  \\
$\pi_0$ & 82.2\%   \\
 UniVLA & 90.6\%  \\
\bottomrule
\end{tabular}
\caption{LIBERO-10 Task success rates across models}
\label{tab:success_rates_libero10}
\end{table}

Table \ref{tab:benchmark_stats} summarize each benchmark statistics on the number of tasks and rollouts. We note that while TDQC methods requires training data, they exhibit an improvement over the baselines in the real-robot experiments where they are trained on a much smaller dataset. Table \ref{tab:timesteps_by_benchmark_model} shows the number of timesteps per benchmark and VLA model pairs. \\
\paragraph{LIBERO-10}
\label{sec:appendix_libero}
We evaluate the VLA models on LIBERO-10, which consists of 10 long-horizon manipulation tasks and contains the most diverse objects, environments and instructions among the 4 LIBERO suits, thus considered the most challenging suite. We kept the same initial conditions of each enviorment as specified in the benchmark details.\footnote{\url{https://github.com/Lifelong-Robot-Learning/LIBERO/tree/master/libero/libero/init_files/libero_10}} We constructed a task-level split into training, validation, and testing: we train and validate on 7 tasks and evaluate generalization on the remaining 3 unseen tasks. The specific tasks split is re-sampled over random seeds, i.e., each seed induces a different set of test tasks. Within the 7 training tasks, we further split rollouts into train/validation with a $60\%/40\%$ ratio. Each task includes 50 randomized initializations (e.g., variations in object placement), meaning a total of $10\times 50 = 500$ episodes. Table~\ref{tab:success_rates_libero10} reports the rollout success rates of all evaluated VLA policies.

In LIBERO, the simulator terminates an episode as soon as the task succeeds, or when a maximum horizon is reached (520 steps for LIBERO-10). As a result, successful rollouts are often shorter than failed ones, which can cause a success estimators to induce information based on the length of the rollout alone. To ensure a fair comparison of ROC-AUC, we follow a fixed-cutoff method that was introduced in \citet{gu2025safe}. For each task, we compute the minimum rollout length observed across its episodes and use this
value as the effective horizon $T$ for evaluation $T_k = \min_{r \in k} T_r$ where $k$ denotes the task and $r$ is a rollout. Meaning, we only report ROC-AUC values up to this termination time.
This ensures that all rollouts for a given task are evaluated on comparable trajectory prefixes.

\textbf{Real Robot WidowX}
We used published data\footnote{\url{https://github.com/vla-safe/SAFE}} that tested OpenVLA pretrained on "Open-X Magic soup++" dataset on real WidowX robot arm which achieved success rate of $45.8\%$. In this experiment, they collected 532 rollouts on the 8 tasks, with 244 successes and 288 failures. Each task has roughly the same number of rollouts.
We constructed a task-level split into training, validation, and testing: we train and validate on 6 tasks and evaluate generalization on the remaining 2 unseen tasks. The specific tasks split is re-sampled over random seeds, i.e., each seed induces a different set of test tasks. When we evaluated on multiple seeds, each seed determines a different configuration of train and test tasks split. Within the 6 training tasks, we further split rollouts into train/validation with a $66\%/33\%$ ratio.
For each task, they set a fixed number of allowed time steps (50) and all rollouts ended in the same time step, even if success occurred before that time step.
 
\textbf{Real-world Franka}
We used published data\footnote{\url{https://github.com/vla-safe/SAFE}} that tested $\pi_0$-FAST on various real-world experiments. For each task, the maximum time horizon was predetermined and constant inside each task. We split to train, validation and test such that we train and validate on 10 tasks and test on 3 tasks.
The authors of \cite{gu2025safe} recorded the rollouts such that the dataset will have 50\% success on each task.

\subsection{Static baselines}
\label{sec:static_baselines}
Many VLA models discretize continuous actions into descrete action tokens. At time step $t$, the policy produces a distribution over tokens conditioned on the trajectory history and the language instruction,
\begin{equation}
p_t(a) \;=\; \pi_\theta(a \mid \hist_t, l),
\qquad
a_t \;=\; \arg\max_{a} \, p_t(a),
\label{eq:vla_policy}
\end{equation}
where $\hist_t$ denotes the available history (e.g., past observations and actions) and $l$ is the language instruction. For robots with $|\mathcal{D}|$ degrees of freedom (DoF), we write the predicted token vector as $A_t=(a_t^1,\ldots,a_t^{|\mathcal{D}|})$, with corresponding probabilities $P_t=(p_t^1,\ldots,p_t^{|\mathcal{D}|})$, where $p_t^i \triangleq p_t(a_t^i)\in(0,1]$ is the probability assigned to the selected token for DoF $i$. Let $H_t^i$ denote the entropy of the token distribution for DoF $i$ at time $t$.

We follow the LLM hallucinations literature \cite{xiong2023can, huang2024uncertainty, shorinwa2025survey, ren2023robots} and \citet{zollo2025confidence} and used the same methods to measure uncertainty in VLA 
model. We compute the following uncertainty scores at each timestep. For probability-based scores we use the negative log-probability, so that larger values indicate higher uncertainty:
\begin{align*}
& \text{Max Prob}(t) =  \max_i(-log(p^i_t))\\
& \text{Avg Prob}(t) =  -\frac{1}{|\mathcal{D}|}\sum_{i = 1}^{|\mathcal{D}|}log(p^i_t) \\
& \text{Running Avg Prob}(t) =  -\frac{1}{t}\frac{1}{|\mathcal{D}|}\sum_{j=1}^t\sum_{i = 1}^{|\mathcal{D}|}log(p^i_j) \\
& \text{Avg Entropy}(t) =  \frac{1}{|\mathcal{D}|}\sum_{i = 1}^{|\mathcal{D}|}H^i_t \\
& \text{Running Avg Entropy}(t) = \frac{1}{t}\frac{1}{|\mathcal{D}|}\sum_{j=1}^t\sum_{i = 1}^{|\mathcal{D}|}H_j^i \\
\end{align*}
The first two families use the confidence of the \emph{chosen} action's DoF (via $-\log p_t^i$), i.e. the token in the action that corresponds to the highest probability, while the entropy scores uses the full token distributions. Finally, Running Avg Prob aggregates uncertainty over time via a running average, which can be interpreted as a lightweight temporal smoothing baseline (without learning).

\subsection{Trained Baselines}
\label{sec:trained_baselines}
\textbf{SAFE-RNN} uses an LSTM model with 1 layer and a hidden dimension of 256, and an additional
linear layer is used to project the hidden states of LSTM into a single scalar state. \textbf{SAFE-MLP} uses a multi-layer perceptron with 2 layers and a hidden dimension of 256. \textsc{RNN-BCE} and \textsc{RNN-TDQC} methods use the same LSTM architecture for all benchmarks, except OpenVLA, where the LSTM is replaced by a GRU. In \textbf{OpenVLA with action probabilities}, we trained a GRU network with A two-layer MLP with and a 1 layer GRU, the GRU's output is passed through a two-layer MLP head that produces a single scalar output.
Since successes and failures from the generated rollouts are imbalanced, the losses on positive (failed) and negative (successful) rollouts are weighted by their inverse class frequency. \\

Given the internal feature vectors $E \in \mathbb{R}^{n \times d}$ produced by a VLA model, where $n$ corresponds to token positions, diffusion steps, etc. and $d$ is the feature dimension.
\citet{gu2025safe} aggregate $E$ into a single fixed-dimensional feature vector $e \in \mathbb{R}^d$ before inputting to SAFE models. They ablate on which feature vector to use in order to achieve highest ROC-AUC. In $\pi_0$ \citet{gu2025safe} takes the feature vectors before the projection into the velocity field. The aggregation methods in $\pi_0$ are denoted as $agg_\text{hori}$, $agg_\text{hdiff}$ 
Other implementation details appear both in SAFE paper \cite{gu2025safe} and in \url{https://github.com/vla-safe/SAFE}.

\subsection{Training Details}
Following SAFE, for both metrics sequential Brier Score and ROC-AUC, we follow the evaluation in SAFE and interpreted the uncertainty estimations (probabilities) as the \emph{probability of failure}.

We use AdamW optimizer \cite{loshchilov2017decoupled} with weight decay of $0.01$ , and learning rate (lr) that was determined by grid search and a step LR scheduler with gamma determined by a grid search. All models were trained for 1000 epochs with batch size 512. We also ablate on applying L2 regularization $\lambda_\text{reg}$ loss on the model weights to reduce weights. 
Note that for the TDQC method that gets as an input the model's probabilities we entered each data point sequentially. We concatenated all the rollount and samples "starting index" from which we took a batch size next steps. If the rollout ended in the middle of the batch - we reset the network's hidden state. In SAFE methods, each rollout is considered as one data point and thus batch size of 512 translates to training on (at most) 512 rollouts in each iteration. All training and evaluation are done on a single NVIDIA RTX 5090 32GB GPU.

\subsection{Hyperparameter Tuning}
To tune TDQC, we perform a grid search over hyperparameters and select the configuration that \textbf{maximizes sequential Brier score} on a held-out validation set of rollouts from the same tasks as in training. Unlike \citet{gu2025safe}, which evaluates each hyperparameter
setting across all random seeds and then reports the best-performing one (in terms of ROC-AUC), we decouple tuning and evaluation: we run the hyperparameter sweep on two tuning seeds and then fix the selected configuration and evaluate it on an additional
nineteen seeds (21 total). Using a large number of evaluation seeds reduces selection bias and yields statistically meaningful comparisons.

In \Cref{tab:hparam_grid_openvla_libero,tab:hparam_grid_openvla_widowx,tab:hparam_grid_pi0fast_droid,tab:hparam_grid_pi0fast_libero,tab:hparam_grid_pi0_libero,tab:hparam_grid_univla_libero} we report the hyper-parameters we have searched over and the values of the best performance. For the SAFE methods - we used their chosen parameters for the LSTM and MLP methods. For MLP-BCE method we searched the same grid to find the best parameters. For SAFE methods without TD we choose the parameters that maximized the ROC-AUC, and for the TDQC parameters, we choose the ones that minimize the sequential Brier Score.
Unless stated otherwise, in action probability methods (OpenVLA model) we concatenated the top 10 probabilities for each degree of freedom, used learning rate step size of 200 and batch size of 512. For all methods, $\mathrm{LR}_\lambda = 1$, and the number of layers is 2 for the projection in MLP and for LSTM method we used 1 LSTM layer.

% Preamble:
% \usepackage{multirow}

\begin{table}[h]
    \centering
    
    % --- FIRST TABLE (Left) ---
    \begin{minipage}{0.43\textwidth}
        \centering
        \small
        \caption{Hyperparameter search space for OpenVLA + LIBERO benchmark.}
        \label{tab:hparam_grid_openvla_libero}
        
        % Resizebox forces the table to fit the minipage width
        \resizebox{\linewidth}{!}{%
\begin{tabular}{|l|l|l|}
\hline
\textbf{Method} & \textbf{HParams} & \textbf{Values} \\
\hline 
\multirow{3}{*}{\textit{SAFE-RNN}} & $agg_\text{token}$ & First \textbf{Last} Mean \\
\cline{2-3}
 & lr & \textbf{1e-4} 3e-4 1e-3 \\
\cline{2-3}
 & $\lambda_{\mathrm{reg}}$ & 1e-3 1e-2 1e-1 \textbf{1}\\
\cline{1-3}
\multirow{4}{*}{\textit{SAFE-RNN-TDQC}} & $agg_\text{token}$ & First \textbf{Last} Mean \\
\cline{2-3}
  & lr & 1e-5 5e-5 \textbf{1e-4} 3e-4 1e-3 \\
  \cline{2-3}
& lr $\gamma$ & 0.1  \textbf{0.8} \\
\cline{2-3}
& $\lambda_{\mathrm{reg}}$ & \textbf{0} 1e-3 1e-1 \\
\cline{1-3}
\multirow{3}{*}{\textit{SAFE-MLP}} & $agg_\text{token}$ & First \textbf{Last} Mean \\
\cline{2-3}
 & lr & \textbf{1e-4} 3e-4 1e-3 \\
\cline{2-3}
 & $\lambda_{\mathrm{reg}}$ & 1e-3 \textbf{1e-2} 1e-1 1\\
\cline{1-3}
\multirow{3}{*}{\textit{SAFE-MLP BCE}}  & $agg_\text{token}$ &  First \textbf{Last} Mean  \\
\cline{2-3}
& lr & 1e-4 \textbf{3e-4} 1e-3 \\
\cline{2-3}
 & $\lambda_{\mathrm{reg}}$ & 1e-3 1e-2  \textbf{1e-1} 1 \\
\cline{1-3}
\multirow{4}{*}{\textit{SAFE-MLP-TDQC}}  & $agg_\text{token}$ &  First \textbf{Last} Mean  \\
\cline{2-3}
 & lr & 1e-5 5e-5 \textbf{1e-4} 3e-4 1e-3 \\
\cline{2-3}
 & $\lambda_{\mathrm{reg}}$ & \textbf{0} 1e-3 1e-1 \\
\cline{2-3}
 & LR-$\gamma$ & \textbf{0.8} 0.1 \\
\cline{1-3}
\multirow{5}{*}{\textit{RNN-BCE}} & GRU hidden & 256 \textbf{512} \\
\cline{2-3}
 & head hidden & 256 \textbf{512}  \\
\cline{2-3}
& lr & \textbf{1e-5} 5e-5 1e-4 1e-3 \\
\cline{2-3}
& lr $\gamma$ & 0.1 \textbf{0.8}  \\
\cline{2-3}
& $\lambda_{\mathrm{reg}}$ & \textbf{0} 1e-3 1e-2 1e-1 \\
\cline{2-3}
\cline{1-3}
\multirow{5}{*}{\textit{RNN-TDQC}} & GRU hidden & \textbf{256} 512 \\
\cline{2-3}
 & head hidden & 256 \textbf{512}  \\
\cline{2-3}
 & lr & \textbf{1e-5} 5e-5 1e-4 1e-3 \\
\cline{2-3}
& lr $\gamma$ & 0.1 \textbf{0.8}  \\
\cline{2-3}
& $\lambda_{\mathrm{reg}}$ & \textbf{0} 1e-3 1e-2 1e-1 \\
\cline{2-3}
\hline
\end{tabular}
        }
    \end{minipage}% <--- Important: No blank line here!
    \hfill % Adds flexible space between the two tables
    % --- SECOND TABLE (Right) ---
    \begin{minipage}{0.43\textwidth}
        \centering
        \small
        \caption{Hyperparameter search space for OpenVLA + WidowX benchmark.}
        \label{tab:hparam_grid_openvla_widowx}
        
        \resizebox{\linewidth}{!}{%
           \begin{tabular}{|l|l|l|}
\hline
\textbf{Method} & \textbf{HParams} & \textbf{Values} \\
\hline 
\multirow{3}{*}{\textit{SAFE-RNN}} & $agg_\text{token}$ & First \textbf{Last} Mean \\
\cline{2-3}
 & lr & 1e-4 \textbf{3e-4} 1e-3 \\
\cline{2-3}
 & $\lambda_{\mathrm{reg}}$ & 1e-3 \textbf{1e-2} 1e-1 \textbf{1}\\
\cline{1-3}
\multirow{4}{*}{\textit{SAFE-RNN-TDQC}} & $agg_\text{token}$ & First \textbf{Last} Mean \\
\cline{2-3}
  & lr & 1e-5 \textbf{5e-5} 1e-4 3e-4 1e-3 \\
  \cline{2-3}
& lr $\gamma$ & \textbf{0.1} 0.8 \\
\cline{2-3}
& $\lambda_{\mathrm{reg}}$ & 0 \textbf{1e-3} 1e-1 \\
\cline{1-3}
\multirow{3}{*}{\textit{SAFE-MLP}} & $agg_\text{token}$ & First \textbf{Last} Mean \\
\cline{2-3}
 & lr & 1e-4 \textbf{3e-4} 1e-3 \\
\cline{2-3}
 & $\lambda_{\mathrm{reg}}$ & 1e-3 1e-2 \textbf{1e-1} 1\\
\cline{1-3}
\multirow{3}{*}{\textit{SAFE-MLP BCE}}  & $agg_\text{token}$ &  First \textbf{Last} Mean  \\
\cline{2-3}
& lr & \textbf{1e-5} 5e-5 1e-4 3e-4 1e-3 \\
\cline{2-3}
 & $\lambda_{\mathrm{reg}}$ & 0 1e-3 \textbf{1e-1} 1 \\
\cline{1-3}
\multirow{4}{*}{\textit{SAFE-MLP-TDQC}}  & $agg_\text{token}$ &  First \textbf{Last} Mean  \\
\cline{2-3}
 & lr & 1e-5 5e-5 1e-4 \textbf{3e-4} 1e-3 \\
\cline{2-3}
 & $\lambda_{\mathrm{reg}}$ & \textbf{0} 1e-3 1e-1 \\
\cline{2-3}
 & LR-$\gamma$ & \textbf{0.1} 0.8 \\
\cline{1-3}
\multirow{5}{*}{\textit{RNN-BCE}} & GRU hidden &  \textbf{256} 512 1024\\
\cline{2-3}
 & head hidden & 256 \textbf{512} \\
\cline{2-3}
& lr & 1e-5 5e-5 1e-4 3e-4 \textbf{1e-3}  \\
\cline{2-3}
& lr $\gamma$ & \textbf{0.1} 0.8  \\
\cline{2-3}
& $\lambda_{\mathrm{reg}}$ & 0 \textbf{1e-3} 1e-1 \\
\cline{1-3}
\multirow{5}{*}{\textit{RNN-TDQC}} & GRU hidden & 256 512 \textbf{1024}\\
\cline{2-3}
 & head hidden & 256 \textbf{512} \\
\cline{2-3}
 & lr & \textbf{1e-5} 5e-5 1e-4 3e-4 1e-3 \\
\cline{2-3}
& lr $\gamma$ & \textbf{0.1} 0.8 \\
\cline{2-3}
& $\lambda_{\mathrm{reg}}$ & \textbf{0} 1e-3 1e-1 \\
\hline
\end{tabular}
        }
    \end{minipage}
\end{table}

\begin{table}[h]
    \centering
    
    % --- FIRST TABLE (Left) ---
    \begin{minipage}{0.43\textwidth}
        \centering
        \small
        \caption{Hyperparameter search space for $\pi_0$-FAST + LIBERO benchmark.}
        \label{tab:hparam_grid_pi0fast_libero}
        
        % Resizebox forces the table to fit the minipage width
        \resizebox{\linewidth}{!}{%
\begin{tabular}{|l|l|l|}
\hline
\textbf{Method} & \textbf{HParams} & \textbf{Values} \\
\hline 
\multirow{4}{*}{\textit{SAFE-RNN}} & $agg_\text{token}$ & First Last \textbf{Mean} \\
\cline{2-3}
 & Feat & \textbf{Encoded} Pre-logits \\
\cline{2-3}
 & lr & 3e-5 1e-4 \textbf{3e-4} 1e-3 \\
\cline{2-3}
 & $\lambda_{\mathrm{reg}}$ & \textbf{1e-3} 1e-2 1e-1\\
\cline{1-3}
\multirow{5}{*}{\textit{SAFE-RNN-TDQC}} & $agg_\text{token}$ & First Last \textbf{Mean} \\
\cline{2-3}
 & Feat & Encoded \textbf{Pre-logits}  \\
\cline{2-3}
  & lr & 1e-5 5e-5 1e-4 3e-4 \textbf{1e-3} \\
  \cline{2-3}
& lr $\gamma$ & \textbf{0.1} 0.8 \\
\cline{2-3}
& $\lambda_{\mathrm{reg}}$ & \textbf{0} 1e-3 1e-1 \\
\cline{1-3}
\multirow{4}{*}{\textit{SAFE-MLP}} & $agg_\text{token}$ & First \textbf{Last} Mean \\
\cline{2-3}
 & Feat & Encoded \textbf{Pre-logits}  \\
\cline{2-3}
 & lr & 3e-5  \textbf{1e-4} 1e-4 1e-3 \\
\cline{2-3}
 & $\lambda_{\mathrm{reg}}$ & 1e-3 \textbf{1e-2}  1e-1\\
\cline{1-3}
\multirow{5}{*}{\textit{SAFE-MLP BCE}}  & $agg_\text{token}$ &  First Last  \textbf{Mean}  \\
\cline{2-3}
 & Feat & Encoded \textbf{Pre-logits}  \\
   \cline{2-3}
& lr $\gamma$ & 0.1 \textbf{0.8} \\
\cline{2-3}
& lr & 1e-5 5e-5 \textbf{1e-4} 3e-4 1e-3 \\
\cline{2-3}
 & $\lambda_{\mathrm{reg}}$ & 0 \textbf{1e-3} 1e-1 \\
\cline{1-3}
\multirow{5}{*}{\textit{SAFE-MLP-TDQC}}  & $agg_\text{token}$ &  First Last \textbf{Mean}  \\
\cline{2-3}
 & Feat & Encoded \textbf{Pre-logits}  \\
\cline{2-3}
& lr & 1e-5 5e-5 1e-4 3e-4 \textbf{1e-3} \\
\cline{2-3}
 & $\lambda_{\mathrm{reg}}$ & \textbf{0} 1e-3 1e-1 \\
\cline{2-3}
 & lr $\gamma$ &  0.1 \textbf{0.8} \\
 \cline{1-3}
    \multirow{3}{*}{\textit{RNN-BCE}}  & lr & 1e-5 3e-5 1e-4 3e-4 \textbf{1e-3}  \\
    \cline{2-3}
     & $\lambda_{\mathrm{reg}}$ & 1e-3 \textbf{1e-1} 0 \\
         \cline{2-3}
      & lr $\gamma$ &  0.1 \textbf{0.8} \\
        \cline{1-3}
    \multirow{3}{*}{\textit{RNN-TDQC}}   & lr & \textbf{1e-5} 3e-5 1e-4 3e-4 1e-3  \\
    \cline{2-3}
     & $\lambda_{\mathrm{reg}}$ & 1e-3 1e-1  \textbf{0}\\
         \cline{2-3}
      & lr $\gamma$ & \textbf{0.1} 0.8 \\
\hline
\end{tabular}
        }
    \end{minipage}% <--- Important: No blank line here!
    \hfill % Adds flexible space between the two tables
    % --- SECOND TABLE (Right) ---
    \begin{minipage}{0.43\textwidth}
        \centering
        \small
        \caption{Hyperparameter search space for $\pi_0$-FAST + Franka benchmark.}
        \label{tab:hparam_grid_pi0fast_droid}
        
        \resizebox{\linewidth}{!}{%
    \begin{tabular}{|l|l|l|}
    \hline
    \textbf{Method} & \textbf{HParams} & \textbf{Values} \\
    \hline 
    \multirow{4}{*}{\textit{SAFE-RNN}} & $agg_\text{token}$ & \textbf{Mean} \\
    \cline{2-3}
     & Feat & \textbf{Pre-logits}  \\
   \cline{2-3}
     &  hidden dim & 128 \textbf{256}  \\
    \cline{2-3}
     & lr & 1e-4 3e-4 \textbf{1e-3} 3e-3 \\
    \cline{2-3}
     & $\lambda_{\mathrm{reg}}$ & 1e-3 \textbf{1e-2 } 1e-1\\
    \cline{1-3}
    \multirow{6}{*}{\textit{SAFE-RNN-TDQC}} & $agg_\text{token}$ & \textbf{Mean} \\
    \cline{2-3}
     & Feat & \textbf{Pre-logits}  \\
      \cline{2-3}
     &  hidden dim & 128 \textbf{256}  \\
     \cline{2-3}
      & lr & 1e-5 5e-5 1e-4 3e-4 \textbf{1e-3} \\
      \cline{2-3}
    & lr $\gamma$ &  0.1\textbf{0.8} \\
    \cline{2-3}
    & $\lambda_{\mathrm{reg}}$ & \textbf{0} 1e-3 1e-1 \\
    \cline{1-3}
    \multirow{4}{*}{\textit{SAFE-MLP}} & $agg_\text{token}$ &  \textbf{Mean}  \\
    \cline{2-3}
     & Feat & \textbf{Pre-logits}  \\
    \cline{2-3}
     & lr & 1e-4 \textbf{3e-4} 1e-3 3e-3 \\
    \cline{2-3}
     & $\lambda_{\mathrm{reg}}$ & \textbf{1e-3} 1e-2  1e-1\\
    \cline{1-3}
    \multirow{6}{*}{\textit{SAFE-MLP BCE}}  & $agg_\text{token}$ & \textbf{First} Last Mean  \\
    \cline{2-3}
     & Feat & \textbf{Pre-logits}  \\
       \cline{2-3}
     & hidden dim &  \textbf{128} 256  \\
       \cline{2-3}
    & lr $\gamma$ & 0.1 \textbf{0.8} \\
    \cline{2-3}
    & lr & 1e-5 5e-5 1e-4 \textbf{3e-4} 1e-3 \\
    \cline{2-3}
     & $\lambda_{\mathrm{reg}}$ & \textbf{0} 1e-3 1e-1 \\
    \cline{1-3}
    \multirow{6}{*}{\textit{SAFE-MLP-TDQC}}  & $agg_\text{token}$ & \textbf{Mean}  \\
    \cline{2-3}
     & Feat & \textbf{Pre-logits}  \\
        \cline{2-3}
     & hidden dim & 128 \textbf{256} \\
    \cline{2-3}
    & lr & 1e-5 5e-5 1e-4 3e-4 \textbf{1e-3} \\
    \cline{2-3}
     & $\lambda_{\mathrm{reg}}$ & \textbf{0} 1e-3 1e-1 \\
    \cline{2-3}
     & lr $\gamma$ &  0.1 \textbf{0.8} \\
     \cline{1-3}
    \multirow{4}{*}{\textit{RNN-BCE}}       & hidden dim &  128 \textbf{256}  \\
       \cline{2-3}
       & lr & \textbf{1e-5} 3e-5 1e-4 3e-4 1e-3  \\
    \cline{2-3}
     & $\lambda_{\mathrm{reg}}$ & 1e-3 \textbf{1e-1} 0 \\
         \cline{2-3}
      & lr $\gamma$ & \textbf{0.1} 0.8 \\
        \cline{1-3}
    \multirow{4}{*}{\textit{RNN-TDQC}}        & hidden dim &  128 \textbf{256}  \\
       \cline{2-3}
       & lr & 1e-5 3e-5  \textbf{1e-4} 3e-4 1e-3  \\
    \cline{2-3}
     & $\lambda_{\mathrm{reg}}$ &  \textbf{1e-3} 1e-1  0 \\
         \cline{2-3}
      & lr $\gamma$ & \textbf{0.1} 0.8 \\
    \hline
\end{tabular}
        }
    \end{minipage}
\end{table}

\begin{table}[h]
    \centering
    
    % --- FIRST TABLE (Left) ---
    \begin{minipage}{0.43\textwidth}
        \centering
        \small
        \caption{Hyperparameter search space for $\pi_0$ + LIBERO benchmark.}
        \label{tab:hparam_grid_pi0_libero}
        
        % Resizebox forces the table to fit the minipage width
        \resizebox{\linewidth}{!}{%
    \begin{tabular}{|l|l|l|}
    \hline
    \textbf{Method} & \textbf{HParams} & \textbf{Values} \\
    \hline 
    \multirow{4}{*}{\textit{SAFE-RNN}} & $agg_\text{horri}$ & \textbf{First} Last First\&Last \\
    \cline{2-3}
     &  $agg_\text{diff}$ & First \textbf{Last}  First$\&$Last \\ 
    \cline{2-3}
     & lr & 1e-5 3e-5 1e-4 3e-4 \textbf{1e-3} \\
    \cline{2-3}
     & $\lambda_{\mathrm{reg}}$ & \textbf{1e-3} 1e-2 1e-1\\
    \cline{1-3}
    \multirow{5}{*}{\textit{SAFE-RNN-TDQC}} & $agg_\text{horri}$ & \textbf{First} Last First$\&$Last \\
    \cline{2-3}
     &  $agg_\text{diff}$ & First Last  \textbf{First\&Last} \\ 
     \cline{2-3}
      & lr & 1e-5 3e-5 1e-4 \textbf{3e-4} 1e-3 \\
      \cline{2-3}
    & lr $\gamma$ &  \textbf{0.1} 0.8 \\
    \cline{2-3}
    & $\lambda_{\mathrm{reg}}$ & \textbf{0} 1e-3 1e-1 \\
    \cline{1-3}
    \multirow{4}{*}{\textit{SAFE-MLP}} & $agg_\text{horri}$ & \textbf{First} Last First$\&$Last \\
    \cline{2-3}
     &  $agg_\text{diff}$ & First \textbf{Last}  First$\&$Last \\ 
    \cline{2-3}
     & lr & 1e-5 \textbf{3e-5} 1e-4 3e-4 1e-3 \\
    \cline{2-3}
     & $\lambda_{\mathrm{reg}}$ & \textbf{1e-3} 1e-2 1e-1\\
    \cline{1-3}
    \multirow{5}{*}{\textit{SAFE-MLP BCE}} & $agg_\text{horri}$ & First Last \textbf{First$\&$Last} \\
    \cline{2-3}
     &  $agg_\text{diff}$ & First \textbf{Last}  First$\&$Last \\ 
     \cline{2-3}
      & lr & 1e-5 \textbf{3e-5} 1e-4 3e-4 1e-3 \\
      \cline{2-3}
    & lr $\gamma$ &  0.1 \textbf{0.8} \\
    \cline{2-3}
    & $\lambda_{\mathrm{reg}}$ & \textbf{0} 1e-3 1e-1 \\
    \cline{1-3}
    \multirow{5}{*}{\textit{SAFE-MLP-TDQC}}  & $agg_\text{horri}$ & First Last \textbf{First$\&$Last} \\
    \cline{2-3}
     &  $agg_\text{diff}$ & First \textbf{Last}  First$\&$Last \\ 
     \cline{2-3}
      & lr & 1e-5 3e-5 1e-4 3e-4 \textbf{1e-3} \\
      \cline{2-3}
    & lr $\gamma$ &  0.1 \textbf{0.8} \\
    \cline{2-3}
    & $\lambda_{\mathrm{reg}}$ & \textbf{0} 1e-3 1e-1 \\
    \cline{1-3}
    \hline
\end{tabular}
}
\end{minipage}% <--- Important: No blank line here!
    \hfill % Adds flexible space between the two tables
    % --- SECOND TABLE (Right) ---
    \begin{minipage}{0.43\textwidth}
        \centering
        \small
        \caption{Hyperparameter search space for UniVLA + LIBERO benchmark.}
        \label{tab:hparam_grid_univla_libero}
        \resizebox{\linewidth}{!}{%
    \begin{tabular}{|l|l|l|}
    \hline
    \textbf{Method} & \textbf{HParams} & \textbf{Values} \\
    \hline 
    \multirow{4}{*}{\textit{SAFE-RNN}} & $agg_\text{token}$ & \textbf{First} Last Mean \\
    \cline{2-3}
     & lr & \textbf{1e-5} 3e-5 1e-4 3e-4 1e-3  \\
    \cline{2-3}
     & $\lambda_{\mathrm{reg}}$ & 1e-3 \textbf{1e-1} 0\\
         \cline{2-3}
      & lr $\gamma$ &  \textbf{0.1} 0.8 \\
    \cline{1-3}
    \multirow{4}{*}{\textit{SAFE-RNN-TDQC}} & $agg_\text{token}$ &  \textbf{First} Last Mean \\
     \cline{2-3}
      & lr & 1e-5 \textbf{3e-5} 1e-4 3e-4 1e-3  \\
    \cline{2-3}
     & $\lambda_{\mathrm{reg}}$ & \textbf{1e-3} 1e-1 0\\
         \cline{2-3}
      & lr $\gamma$ &  \textbf{0.1} 0.8 \\
    \cline{1-3}
    \multirow{4}{*}{\textit{SAFE-MLP}} & $agg_\text{token}$ &   \textbf{First} Last Mean  \\
    \cline{2-3}
      & lr & 1e-5 3e-5 1e-4 3e-4 \textbf{1e-3}  \\
    \cline{2-3}
     & $\lambda_{\mathrm{reg}}$ & 1e-3 \textbf{1e-1} 0\\
         \cline{2-3}
      & lr $\gamma$ & \textbf{0.1} 0.8 \\
    \cline{1-3}
    \multirow{4}{*}{\textit{SAFE-MLP BCE}}  & $agg_\text{token}$ &  \textbf{First} Last Mean \\
       \cline{2-3}
    & lr & \textbf{1e-5} 3e-5 1e-4 3e-4 1e-3  \\
    \cline{2-3}
     & $\lambda_{\mathrm{reg}}$ & 1e-3 \textbf{1e-1} 0\\
         \cline{2-3}
      & lr $\gamma$ &  \textbf{0.1} 0.8 \\
    \cline{1-3}
    \multirow{4}{*}{\textit{SAFE-MLP-TDQC}}  & $agg_\text{token}$ &  \textbf{First} Last Mean  \\
        \cline{2-3}
   & lr & 1e-5 3e-5 \textbf{1e-4} 3e-4 1e-3  \\
    \cline{2-3}
     & $\lambda_{\mathrm{reg}}$ & \textbf{1e-3} 1e-1 0\\
         \cline{2-3}
      & lr $\gamma$ &  0.1 \textbf{0.8} \\
        \cline{1-3}
    \multirow{3}{*}{\textit{RNN-BCE (Top 10)}}  &  lr & 1e-5 3e-5 1e-4 3e-4 \textbf{1e-3}  \\
    \cline{2-3}
     & $\lambda_{\mathrm{reg}}$ & 1e-3 1e-1 \textbf{0} \\
         \cline{2-3}
      & lr $\gamma$ &  \textbf{0.1} 0.8 \\
        \cline{1-3}
    \multirow{3}{*}{\textit{RNN-TDQC}}  &  lr & 1e-5 3e-5 \textbf{1e-4} 3e-4 1e-3  \\
    \cline{2-3}
     & $\lambda_{\mathrm{reg}}$ & \textbf{1e-3} 1e-1 0\\
         \cline{2-3}
      & lr $\gamma$ &  0.1 \textbf{0.8} \\
    \cline{1-3}
    \hline
\end{tabular}
        }
    \end{minipage}
\end{table}

\clearpage
\section{Additional Results}

\subsection{Results variance}
In \Cref{tab:Brier_score_with_mean_std} and \Cref{tab:roc_auc_with_mean_std} we report the standard deviation for all results in Tables \ref{tab:Brier_score} and \ref{tab:roc_auc}. 
Note that the reported values were averaged over 21 seeds which set the environment seed and determines the train-test split of tasks. Since different tasks have different difficulties it is reasonable to see large standard deviations in the tables. \Cref{tab:Brier_score_with_mean_std,tab:roc_auc_with_mean_std} shows that across models and benchmarks, \textit{TDQC learning improves calibration relative to non-TDQC variants}, with relatively lower standard deviations compared to the baselines, where SAFE-RNN-TDQC performs the best or on par for all $\pi_0$ model variants.
\begin{table}[h]
\centering
\resizebox{0.99\textwidth}{!}{%
\begin{tabular}{l|cc|cc|cc|cc|cc|cc}
\hline
VLA Model & \multicolumn{2}{c|}{OpenVLA} & \multicolumn{2}{c|}{OpenVLA} & \multicolumn{2}{c|}{UniVLA} &\multicolumn{2}{c|}{$\pi_0$-FAST} & \multicolumn{2}{c|}{$\pi_0$-FAST} & \multicolumn{2}{c}{$\pi_0$} \\
Benchmark & \multicolumn{2}{c|}{LIBERO} & \multicolumn{2}{c|}{WidowX} & \multicolumn{2}{c|}{LIBERO} & \multicolumn{2}{c|}{LIBERO} & \multicolumn{2}{c|}{Franka} & \multicolumn{2}{c}{LIBERO}\\
Eval Task Split & Seen $\downarrow$ & Unseen $\downarrow$ & Seen $\downarrow$ & Unseen $\downarrow$ & Seen  $\downarrow$& Unseen $\downarrow$ & Seen $\downarrow$ & Unseen $\downarrow$ & Seen $\downarrow$ & Unseen $\downarrow$ & Seen $\downarrow$ & Unseen$\downarrow$ \\ \hline
Max prob. & $0.395 \pm 0.037$ & $0.390 \pm 0.028$ & $0.572 \pm 0.060$ & $0.579 \pm 0.012$ & $0.909 \pm 0.024$ & $0.899 \pm 0.048$ & $0.320 \pm 0.021$ & $0.318 \pm 0.022$  & $0.331 \pm 0.018$ & $0.323 \pm 0.017$ & $-$ & $-$\\
Avg prob. & $0.348 \pm 0.036$ & $0.364 \pm 0.061$ & $0.275 \pm 0.012$ & $0.282 \pm 0.019$ & $0.529 \pm 0.017$ & $0.532 \pm 0.035$ & $0.212 \pm 0.019$ & $0.218 \pm 0.024$  & $0.290 \pm 0.016$ & $0.294 \pm 0.024$ & $-$ & $-$\\
Running Avg prob. & $0.338 \pm 0.034$ & $0.356 \pm 0.059$ & $0.255 \pm 0.003$ & $0.257 \pm 0.002$ &  $0.543 \pm 0.016$ & $0.544 \pm 0.031$ & $0.244 \pm 0.024$ & $0.238 \pm 0.045$ & $0.359 \pm 0.029$ & $0.361 \pm 0.026$  & $-$ & $-$\\
Avg entropy  & $0.306 \pm 0.020$ & $0.313 \pm 0.030$ & $0.414 \pm 0.044$ & $0.426 \pm 0.095$ & $0.406 \pm 0.027$ & $0.391 \pm 0.063$ & $0.209 \pm 0.022$ & $0.222 \pm 0.033$ & $0.281 \pm 0.017$ & $0.281 \pm 0.021$  & $-$ & $-$\\ 
Running Avg entropy  & $0.265 \pm 0.014$ & $0.273 \pm 0.026$ & $0.435 \pm 0.028$ & $0.432 \pm 0.054$ & $0.343 \pm 0.016$ & $0.330 \pm 0.039$ & $0.279 \pm 0.021$ & $0.264 \pm 0.046$  & $0.341 \pm 0.025$ & $0.339 \pm 0.027$ & $-$ & $-$\\ \hline
\textit{SAFE-RNN} & $0.204 \pm 0.038$ & $0.255 \pm 0.059$& $0.169 \pm 0.061$& $0.213 \pm 0.075$ & $0.124 \pm 0.020$ & $0.162 \pm 0.014$ & $0.106 \pm 0.018$ & $0.148 \pm 0.042$ & $0.220 \pm 0.047$ & $0.288 \pm 0.055$ & $0.123 \pm 0.043$ & $0.172 \pm 0.095$\\
\rowcolor{black!12} \textit{SAFE-RNN-TDQC (Ours)} & $0.197 \pm 0.024$& $0.218 \pm 0.020$& \textcolor{red}{$\mathbf{0.096 \pm 0.021}$}& \textcolor{red}{$\mathbf{0.153 \pm 0.056}$}& \textcolor{red}{$\mathbf{0.064 \pm 0.016}$} & \textcolor{red}{$\mathbf{0.100 \pm 0.026}$} & \textcolor{red}{$\mathbf{0.103 \pm 0.028}$} & $0.163 \pm 0.060$ & \textcolor{red}{$\mathbf{0.150 \pm 0.026}$} & \textcolor{red}{$\mathbf{0.215 \pm 0.038}$} & \textcolor{red}{$\mathbf{0.061 \pm 0.019}$} & \textcolor{red}{$\mathbf{0.097 \pm 0.033}$} \\ 
\textit{SAFE-MLP BCE} & \textcolor{orange}{$0.192 \pm 0.020$} & $0.231 \pm 0.020$ & $0.127 \pm  0.019$& \textcolor{orange}{$0.164\pm 0.034 $} & $0.091 \pm 0.022$ & $0.158 \pm 0.036$ & \textcolor{red}{$\mathbf{0.103 \pm 0.019}$}& $ 0.162 \pm 0.053$& $0.206 \pm 0.016$ & $0.248 \pm 0.023$ & $0.075 \pm 0.018$ & \textcolor{orange}{$0.137 \pm 0.058$}\\ 
\rowcolor{black!12}  \textit{SAFE-MLP-TDQC (Ours)} & $ 0.195 \pm 0.022$& $0.229 \pm 0.022$& \textcolor{orange}{$0.130 \pm 0.022$} & $0.169 \pm 0.038$ & \textcolor{orange}{$0.066 \pm 0.015$} & $0.131 \pm 0.028$ & $0.109 \pm  0.022$& \textcolor{orange}{$0.150 \pm 0.035$} & $0.210 \pm 0.008$ & $0.229 \pm 0.013$ & \textcolor{orange}{$0.068 \pm 0.021$} & $0.128 \pm 0.060$\\ \hline
\textit{RNN-BCE} & $0.199 \pm 0.016$& \textcolor{orange}{$0.206 \pm 0.022$}& $0.301 \pm 0.062$& $0.344 \pm 0.048$ & $0.138 \pm 0.047$ & $0.152 \pm 0.065$ & $0.122 \pm 0.022$ & $0.158 \pm 0.053$ & $0.238 \pm 0.006$ & $0.243 \pm 0.005$ & $-$ & $-$  \\
\rowcolor{black!12} \textit{RNN-TDQC (Ours)} & \textcolor{red}{$\mathbf{0.191 \pm 0.015}$}& \textcolor{red}{$\mathbf{0.197 \pm 0.021}$}& $0.156 \pm 0.023$& $0.192 \pm  0.039$ & $0.100 \pm 0.025$ & \textcolor{orange}{$0.107 \pm 0.056$} & \textcolor{orange}{$0.105 \pm 0.020$} & \textcolor{red}{$\mathbf{0.141 \pm 0.045}$} & \textcolor{orange}{$0.205 \pm 0.015$} & \textcolor{orange}{$0.228 \pm 0.022$} & $-$ & $-$ \\
\end{tabular}
}
\caption{Brier Score results on simulation and real robot experiment (lower is better). Results are averaged over 21 seeds that determined different train-test split of tasks. "$-$" indicates that the Brier score can't be calculated on the method. The \textcolor{red}{\textbf{first}} and \textcolor{orange}{second} best performing methods are highlighted in the table.}
\label{tab:Brier_score_with_mean_std}
\end{table}

\begin{table}[h]
\centering
\resizebox{0.99\textwidth}{!}{%
\begin{tabular}{l|cc|cc|cc|cc|cc|cc}
\hline
VLA Model & \multicolumn{2}{c|}{OpenVLA} & \multicolumn{2}{c|}{OpenVLA} & \multicolumn{2}{c|}{UniVLA} &\multicolumn{2}{c|}{$\pi_0$-FAST} & \multicolumn{2}{c|}{$\pi_0$-FAST} & \multicolumn{2}{c}{$\pi_0$} \\
Benchmark & \multicolumn{2}{c|}{LIBERO} & \multicolumn{2}{c|}{WidowX} & \multicolumn{2}{c|}{LIBERO} & \multicolumn{2}{c|}{LIBERO} & \multicolumn{2}{c|}{Franka} & \multicolumn{2}{c}{LIBERO}\\
Eval Task Split & Seen $\uparrow$ & Unseen $\uparrow$ & Seen $\uparrow$ & Unseen $\uparrow$ & Seen  $\uparrow$& Unseen $\uparrow$ & Seen $\uparrow$ & Unseen $\uparrow$ & Seen $\uparrow$ & Unseen $\uparrow$ & Seen $\uparrow$ & Unseen$\uparrow$ \\ \hline
Max prob. & $54.64 \pm 5.36$ & $55.78 \pm 4.33$ & $53.25 \pm 3.36$ & $53.20 \pm 2.70$ & $50.00 \pm 0.00$& $50.00 \pm 0.00$& $61.75 \pm 8.85$ & $63.49 \pm 13.6$ & $48.61 \pm 4.58$ & $46.64 \pm 6.95$ &$-$ & $-$ \\
Avg prob. & $47.08 \pm 4.07$ & $48.09 \pm 5.14$ & $47.47 \pm 3.99$ & $48.30 \pm 4.75$  & $42.96 \pm 8.89$& $40.25 \pm 7.71$& $47.36 \pm 7.64$ & $48.09 \pm 7.52$ &$49.45 \pm 5.41$ & $48.03 \pm 8.08$ &$-$ & $-$\\
Running Avg prob. &  $49.19 \pm 4.16$ & $47.72 \pm 4.38$ & $49.20 \pm 4.51$ & $46.37 \pm 6.82$  & $43.05 \pm 6.14$& $40.29 \pm  7.54$& $53.95 \pm 4.42$ & $55.68 \pm 7.40$ & $52.95 \pm3.92 $ & $49.98 \pm 4.19$ &$-$ & $-$\\
Avg entropy & $46.81 \pm 4.22$ & $46.75 \pm 4.77$ & $50.19 \pm 4.64$ & $49.36 \pm 8.16$ & $41.34 \pm 9.42$& $47.36 \pm11.8$& $45.42 \pm 7.44$ & $46.30 \pm 7.22$ & $49.28 \pm 4.34$ & $49.07 \pm 7.42$ &$-$ & $-$\\
Running Avg entropy & $50.48 \pm 4.34$ & $48.09 \pm 4.76$ & $46.16 \pm 5.38$ & $43.99 \pm 8.08$ & $51.63 \pm 8.50$& $56.71 \pm 7.78$& $53.78 \pm 5.27$ & $55.44 \pm 6.86$ &$51.12 \pm 4.53$ & $48.78 \pm 5.02$ &$-$ & $-$\\  \hline
\textit{SAFE-RNN} & $72.30 \pm 4.73$ & $69.04 \pm 7.01$ & $75.95 \pm 7.29$ & $70.00 \pm 5.95$ & $74.48 \pm 9.49$ & $68.13 \pm 11.4$ & \textcolor{orange}{$91.26 \pm 3.18$} & \textcolor{red}{$\mathbf{85.88 \pm 7.63}$} & $70.85 \pm 5.92$ & $56.69 \pm 7.33$ & $79.95 \pm 9.22$ & $65.03 \pm 20.5$\\
\rowcolor{black!12} \textit{SAFE-RNN-TDQC (Ours)} & $71.67 \pm 4.00$ & $65.12 \pm 4.94$ & $84.01 \pm 3.89$& $70.76 \pm 7.72$& $73.83 \pm 8.52$ & $64.25 \pm 12.6$ & \textcolor{red}{$\mathbf{92.03 \pm 3.37}$} &    $85.49 \pm 6.96$& \textcolor{red}{$\mathbf{79.89 \pm 6.80}$} & \textcolor{red}{$\mathbf{68.43 \pm 7.18}$} & \textcolor{orange}{$88.66 \pm 4.54$} & \textcolor{red}{$\mathbf{82.94 \pm 10.7}$}  \\ 
\textit{SAFE-MLP} & \textcolor{orange}{$73.56 \pm 3.85$} &  $69.53 \pm 5.02$& \textcolor{red}{$\mathbf{88.23 \pm 4.37}$} & \textcolor{red}{$\mathbf{83.18 \pm 6.36}$} & $74.39 \pm 7.73$& $63.80 \pm 13.32$& $79.00 \pm 12.5$ & $68.36 \pm 24.3$ & \textcolor{orange}{$79.01 \pm 4.17$} & \textcolor{orange}{$63.69 \pm 7.08$} & $86.33 \pm 4.94$ & $79.75 \pm 13.0$\\
\textit{SAFE-MLP-BCE} & $72.66 \pm 3.39$ & $ 64.99\pm 5.26$ & \textcolor{orange}{$85.38 \pm 3.49$}& $71.43 \pm 8.49$& \textcolor{red}{$\mathbf{78.83 \pm 5.86}$} & \textcolor{orange}{$69.74 \pm 13.9$}& $ 91.82 \pm  3.23$ & \textcolor{orange}{$ 85.50\pm 6.58 $}& $73.82 \pm 3.61$ & $58.83 \pm 7.31$ & \textcolor{red}{$\mathbf{89.71 \pm 3.36}$} & \textcolor{orange}{$80.53 \pm 13.8$}\\ 
\rowcolor{black!12} \textit{SAFE-MLP-TDQC (Ours)} & $71.22\pm 3.23$& $60.09 \pm 7.09$& $82.33 \pm 3.71$& $70.64 \pm 9.42$ & \textcolor{orange}{$76.55 \pm 5.68$} & $66.71 \pm 12.6$ & $90.26 \pm 3.22$ & $84.40 \pm 7.62$ & $61.95 \pm 6.47$ & $51.22 \pm 6.53$ & $86.57 \pm 4.99$ & $72.07 \pm 18.4$\\ \hline
\textit{RNN-BCE} & $72.70 \pm 4.26$& \textcolor{orange}{$72.28 \pm4.26$}& $69.69 \pm 6.29$& $67.09 \pm 5.48$& $63.44 \pm 7.92$ & $54.37 \pm 12.5$ & $87.67 \pm 4.11$ & $82.53 \pm 10.4$ & $59.32 \pm 4.55$ & $53.26 \pm 7.34$ & $-$ & $-$  \\
\rowcolor{black!12} \textit{RNN-TDQC (Ours)} & \textcolor{red}{$\mathbf{74.20 \pm 3.70}$}& \textcolor{red}{$\mathbf{72.72\pm 4.29}$}& $78.90 \pm 7.46$& \textcolor{orange}{$72.97 \pm 6.69$}& $72.30 \pm 7.52$ & \textcolor{red}{$\mathbf{70.17 \pm 11.5}$} & $87.51 \pm 3.65$ & $83.39 \pm 9.67 $ & $64.10 \pm 4.11$ & $57.37 \pm 9.30$ & $-$ & $-$ \\ 
\end{tabular}
}
\caption{ROC-AUC results on simulation and real robot experiment (higher is better). Results are averaged over 21 seeds that determined different train-test split of tasks. "$-$" indicates that the ROC-AUC can't be calculated on the method. The \textcolor{red}{\textbf{first}} and \textcolor{orange}{second} best performing methods are highlighted in the table.}
\label{tab:roc_auc_with_mean_std}
\end{table}
\subsection{Generalization across benchmarks}
\label{sec:gen_accross_benchmarks}
To assess the generalization capability of the success predictors, we trained an RNN-TDQC (top-10) predictor on OpenVLA trajectories from the LIBERO benchmark and evaluated it on OpenVLA trajectories from the WidowX benchmark. On the unseen WidowX tasks, the predictor achieved a ROC-AUC of $53.69 \pm 5.07$ and a sequential Brier score of $0.413 \pm 0.03$, compared to $72.97 \pm 6.69$ and $0.192 \pm 0.039$, respectively, on the original LIBERO tasks. These results suggest that the success predictor has limited generalization capabilities.

\subsection{Extended Evaluation for Application to Test-Time Action Search }
\label{sec:extended_q_value_guided_action_selection}

In order to extend out evaluations for test-time action search, we added to \Cref{fig:success_rate_vs_compute_action_search}ablation configurations: \textit{Probs Entropy} is a heuristic approach that does not rely on a learned Q-value, serving as a natural, zero-shot baseline. The inclusion of this method is motivated by the need to determine whether a separately trained value network $f_\theta$ is necessary; it tests the hypothesis that the base VLA model's internal predictive certainty is a sufficient indicator of action quality.
It is the 'Avg Entropy' method from \Cref{tab:Brier_score,tab:roc_auc}, which achieved good sequential brier and ROC-AUC compared the the static baselines. It selects the action with lowest mean entropy across all $D$ action dimensions: 
$$a_t = \arg\min_{a_t} \left\{ \frac{1}{D} \sum_{d=1}^{D} H\!\left(\mathrm{PDF}^{(d)}\right) \right\}$$ 
More details are in Apx. \ref{sec:static_baselines}.
% \Cref{fig:success_rate_vs_compute_action_search_ext} shows that 'Probs Entropy' method performs on par with the unguided baselines. \textit{Baseline (T=1.5)} is an ablation check ran with the same temperature as RNN and Probs Entropy approaches. It shows that simply adding more variance into the policy, using higher temperature, is insufficient for meaningful improvement over the baseline ($45\%$ vs. $43\%$).

While results in the paper summarized the average success rate across all tasks, we provide a more granular breakdown in \Cref{fig:beam_search_expr_extended}a-c to analyze per-task performance and computational trade offs. \Cref{fig:beam_search_expr_extended}a details the success rates for 3 unseen tasks: placing a \textit{bowl in a drawer} (Task 3), moving \textit{two mugs to plates} (Task 4), and placing a \textit{mug in a microwave} (Task 9). We observe that TDQC-based methods consistently outperform the baseline across all unseen tasks, with \textit{RNN-TDQC} achieving the highest success rate across all configurations.

As shown in \Cref{fig:beam_search_expr_extended}c, while the \textit{TDQC-Threshold 0.35} variant achieves success rates that exceed the baseline, its primary advantage lies in computational efficiency. By comparing the average episode length (\Cref{fig:beam_search_expr_extended}b) with the actual number of steps triggered the action search (\Cref{fig:beam_search_expr_extended}c), it is evident that this strategy significantly reduces total compute requirements while maintaining competitive performance.

% \begin{figure}
%     \centering
%     \includegraphics[width=0.6\linewidth]{figures/beam_search_expr_compute_vs_success_thresh_line.png}
%    \caption{\textbf{Averaged success rates over action selection configurations and ablations}. All experiments evaluate 3 unseen tasks from LIBERO-10 Taken with OpenVLA, consisting of 50 rollouts each. RNN - TDQC achieves highest success rates. The gray line is a linear fitting line over all TDQC methods.}
%     \label{fig:success_rate_vs_compute_action_search_ext}
% \end{figure}

\begin{figure}[t]
    \centering
    \begin{subfigure}{\textwidth}
        \centering
            \includegraphics[width=0.4\linewidth]{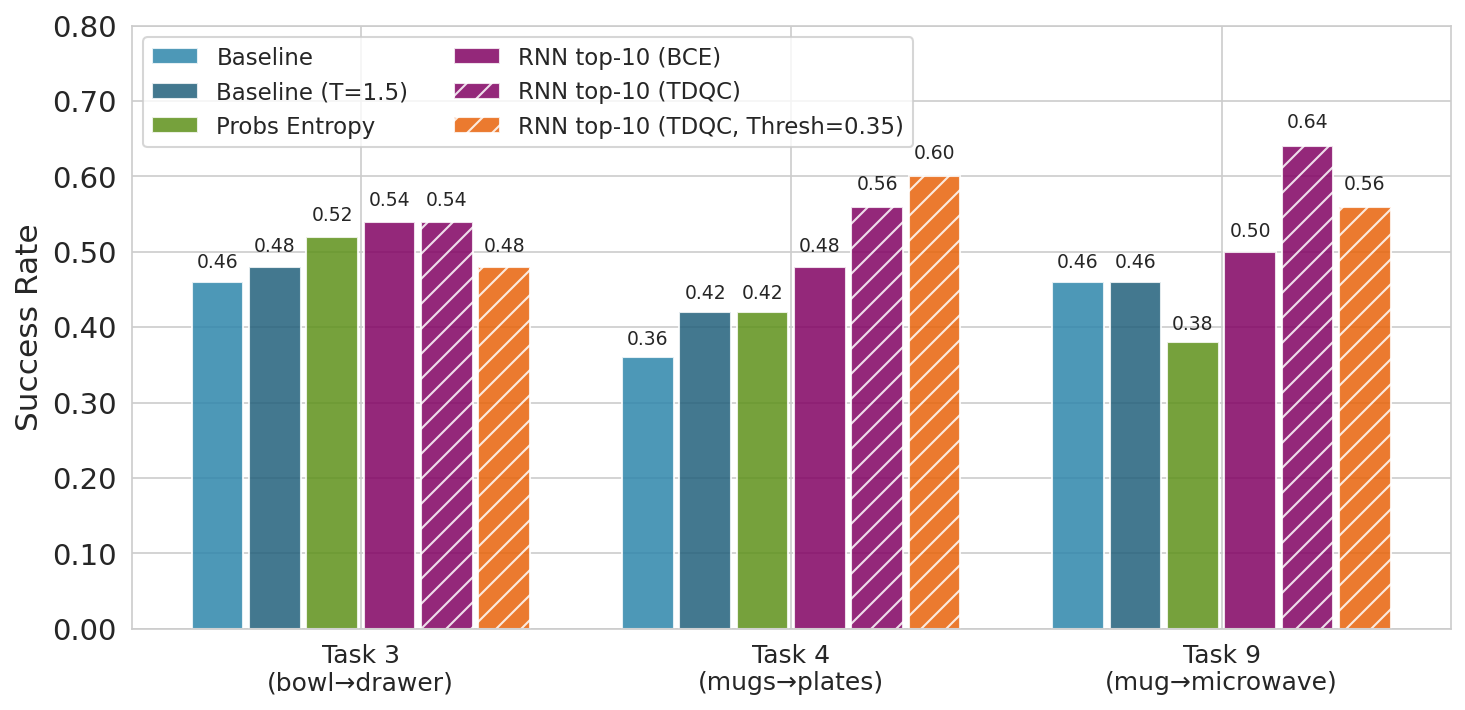}
        \caption{Success rate comparison across specific tasks, including ablation configurations.}
    \end{subfigure}
    \vspace{0.1cm}
    
    \begin{subfigure}{0.48\textwidth}
        \centering
        \includegraphics[width=0.8\linewidth]{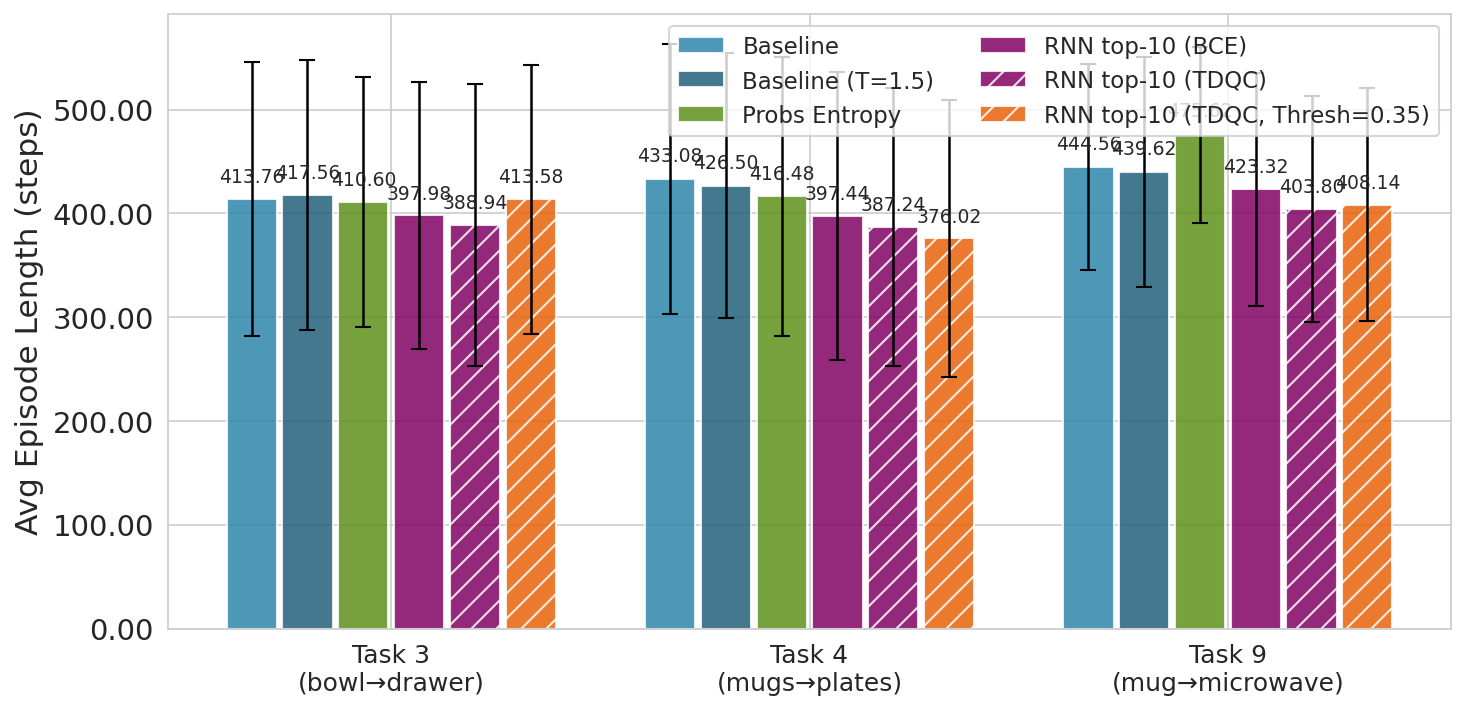}
        \caption{Average episode length (steps) per task, including ablation configurations.}
    \end{subfigure}
    \hfill
    \begin{subfigure}{0.48\textwidth}
        \centering
        \includegraphics[width=0.8\linewidth]{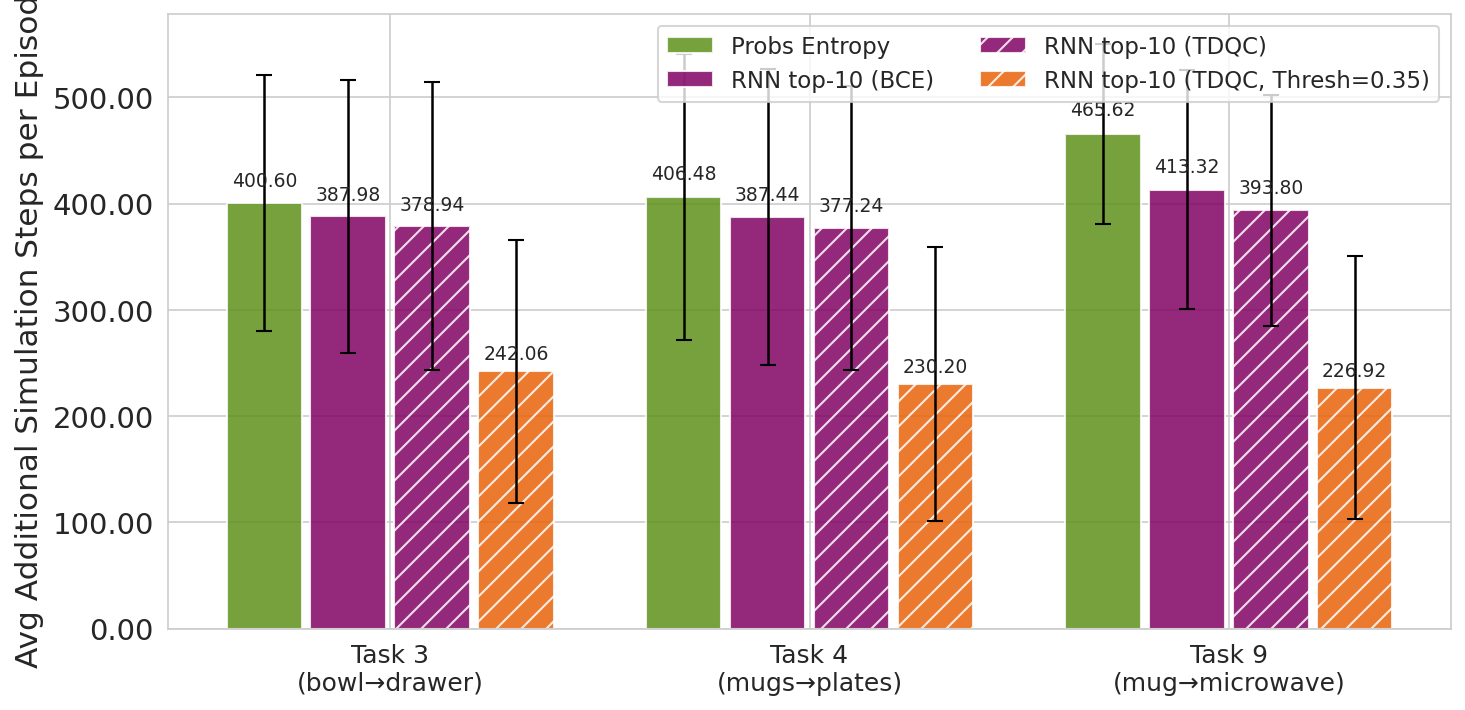}
        \caption{Averaged number of steps used action search per task, including ablation configurations.}
    \end{subfigure}
    
    \caption{\textbf{Extended Analysis of Guided Action Search and TDQC Efficiency.} The results demonstrate that \textit{RNN-TDQC} provides the highest success rates, while the \textit{Threshold 0.35} variant offers a significant reduction in computational overhead by selectively triggering action search only when safety is at risk while maintaining high success rates.}
    \label{fig:beam_search_expr_extended}
\end{figure}
\subsection{Computational Cost}
For real robot deployment, test-time compute for TDQC is not more costly than baselines such as SAFE: both require only a forward pass of the MLP/LSTM at every time step, typically much lighter than the VLA forward pass. Furthermore, the network for TDQC with action probabilities as input is lightweight, and smaller than SAFE with features as input. RL theory suggests that TD-based methods require longer training than Monte-Carlo (MC) methods, thus, we evaluated all methods on the same GPU (NVIDIA RTX 5090) with 5 seeds on various computational metrics. Results are reported in \Cref{tab:computation_cost_openvla_libero10,tab:computation_cost_opizero_fast_libero10}. From our analysis, the overhead from the TD loss is small, ~25\% on OpenVLA and ~15\% on $\pi_0$-FAST.
\begin{table}[h]
\centering
\caption{Computational cost of all failure prediction methods trained on OpenVLA LIBERO-10 benchmark. We report reserved and peak GPU VRAM (MB), number of trainable parameters (M), total training wall-clock time (s), and per-epoch time (s), averaged over five seeds. TD-0 loss adds $\sim$25\% overhead relative to BCE for a given architecture.}
\label{tab:openvla_libero10_cost}
\resizebox{\columnwidth}{!}{%
\begin{tabular}{lccccc}
\toprule
\textbf{run\_name} & \textbf{vram reserved mb} & \textbf{num params M} & \textbf{train wall clock sec} & \textbf{peak vram mb} & \textbf{epoch time sec} \\
\midrule
MLP (features, BCE Loss) & $12104.8 \pm 1818.37$ & 1.05 & $30.88 \pm 0.95$ & $9881.54 \pm 784.52$ & 0.02 \\
MLP (features, TD-0 Loss) & $12074.4 \pm 1819.26$ & 1.05 & $38.31 \pm 0.65$ & $9885.00 \pm 784.35$ & 0.02 \\
RNN (features, BCE Loss) & $14883.6 \pm 1516.95$ & 4.46 & $108.11 \pm 0.91$ & $12381.49 \pm 783.54$ & 0.09 \\
RNN (features, TD-0 Loss) & $14900.4 \pm 1517.40$ & 4.46 & $137.83 \pm 0.37$ & $12381.46 \pm 783.42$ & 0.12 \\
RNN (top-10, BCE Loss) & $13308.38 \pm 887.27$ & 0.48 & $72.67 \pm 0.53$ & $10296.98 \pm 382.87$ & 0.06 \\
RNN (top-10, TD-0 Loss) & $13506.19 \pm 887.69$ & 0.48 & $97.12 \pm 0.52$ & $10496.06 \pm 382.87$ & 0.08 \\
\bottomrule
\label{tab:computation_cost_openvla_libero10}
\end{tabular}%
}
\end{table}

\begin{table}[h]
\centering
\caption{Computational cost of all failure prediction methods trained on $\pi_0$-FAST LIBERO-10 benchmark. Columns report reserved and peak GPU VRAM (MB), number of trainable parameters (M), total training wall-clock time (s), and per-epoch time (s), averaged over five seeds. Compared to OpenVLA (Table~\ref{tab:openvla_libero10_cost}), all methods are substantially more efficient: peak VRAM stays below 3.2 GB and training completes in under 40 s. LSTM (TD-0 Loss) -- same params as BCE uses the same architecture and hyperparameters as LSTM (BCE Loss), providing a controlled comparison of loss functions. TD-0 adds only $\sim$13\% to training time relative to BCE.}
\label{tab:pi0fast_libero10_cost}
\resizebox{\columnwidth}{!}{%
\begin{tabular}{lccccc}
\toprule
\textbf{run\_name} & \textbf{vram reserved mb} & \textbf{num params M} & \textbf{train wall clock sec} & \textbf{peak vram mb} & \textbf{epoch time sec} \\
\midrule
MLP (BCE Loss) & $2320.00 \pm 411.44$ & 0.52 & $17.99 \pm 0.56$ & $1834.79 \pm 174.62$ & 0.01 \\
MLP (TD-0 Loss) & $2366.00 \pm 411.44$ & 0.52 & $20.67 \pm 0.64$ & $1837.03 \pm 174.63$ & 0.01 \\
LSTM (BCE Loss) & $4101.60 \pm 533.08$ & 2.36 & $22.31 \pm 0.29$ & $3106.82 \pm 165.88$ & 0.01 \\
LSTM (TD-0 Loss) -- same params as BCE & $4123.60 \pm 533.08$ & 2.36 & $25.34 \pm 0.48$ & $3128.65 \pm 158.33$ & 0.01 \\
LSTM (TD-0 Loss) & $2619.20 \pm 305.89$ & 2.36 & $39.79 \pm 0.82$ & $2095.10 \pm 174.91$ & 0.03 \\
TDQC (top-10 BCE Loss) & $2450.00 \pm 411.44$ & 1.25 & $32.09 \pm 0.41$ & $1868.36 \pm 174.61$ & 0.02 \\
TDQC (top-10 TD-0 Loss) & $2448.00 \pm 411.44$ & 1.25 & $37.90 \pm 0.33$ & $1906.37 \pm 174.44$ & 0.03 \\
\bottomrule
\label{tab:computation_cost_opizero_fast_libero10}
\end{tabular}%
}
\end{table}

\subsection{quantitative results of RNN-TDQC with top 10 probabilities}
We visualize rollout trajectories together with the failure scores predicted by TDQC on OpenVLA LIBERO-10 benchmark. \Cref{fig:experiments_example} illustrates both failure and success cases. The green-shaded areas show the functional CP band. Once failure scores exceed the band, a failure flag is raised. \Cref{fig:success_rollout} shows an example of the flag not being raised during a successful rollout, whereas \Cref{fig:failed_rollout_freeze} shows that the failure flag is triggered once the policy becomes stuck. Our quantitative analysis further indicates All methods reliably detect failure modes in which the policy gets stuck or oscillates with slow back-and-forth motions.
For RNN methods, once the robot successfully grasps an object, the failure score drops, suggesting that the model recognizes this as a 'good' action. \\
\Cref{fig:example_policy_behavior} shows a successful rollout with an informative failure score. The failure score rises when the policy becomes temporarily stuck while trying to drop the alphabet soup into the basket around step 140. Then decreases after recovery around step 275, and increases again when the robot attempts to pick up the tomato soup at step 336, since grasping can fail. Overall, these scores are intuitive. \\
\Cref{fig:BCE_TD_comparison} compared BCE and TDQC-based methods. TDQC methods are more sensitive to changes in policy behavior and produce sharper, more localized responses over time. In contrast, BCE-based methods tend to vary more smoothly and exhibit less sensitivity to short-term policy changes. For MLP models, however, TD loss does not appear to improve prediction quality. 

\begin{figure}[h]
    \centering
    \begin{subfigure}[h]{0.49\columnwidth}
        \centering
        \includegraphics[width=0.8\linewidth]{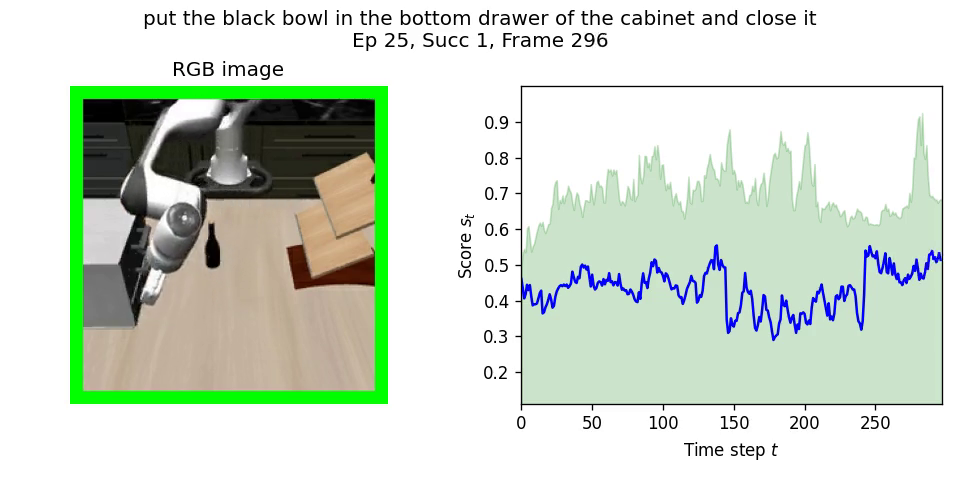}
        \caption{Example of successful episode detected in RNN-TDQC (top-10 probabilities)}
        \label{fig:success_rollout}
    \end{subfigure}\hfill
    \begin{subfigure}[h]{0.49\columnwidth}
        \centering
        \includegraphics[width=0.8\linewidth]{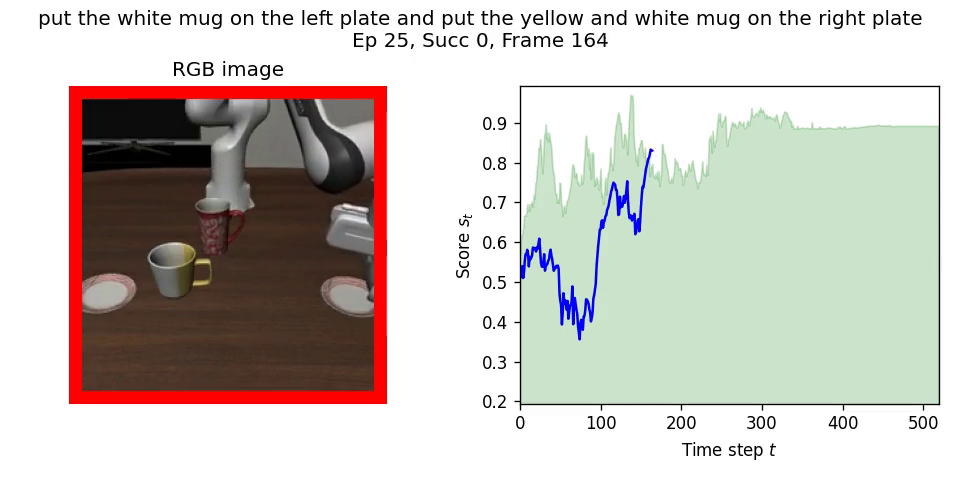}
        \caption{Example of a failed episode detected by RNN-TDQC (top-10 probabilities) in the middle of the rollout}
        \label{fig:failed_rollout_freeze}
    \end{subfigure}
    \caption{Failures and successes detected by RNN-TDQC (top-10 probabilities) align with the actual robot failures, as shown in the observations from OpenVLA + LIBERO-10 simulation. The green-shaded areas show the functional CP band. Once failure scores exceed the band, a failure flag is raised.}
    \label{fig:experiments_example}
\end{figure}

\begin{figure}[h]
    \centering
    \includegraphics[width=0.45\linewidth]{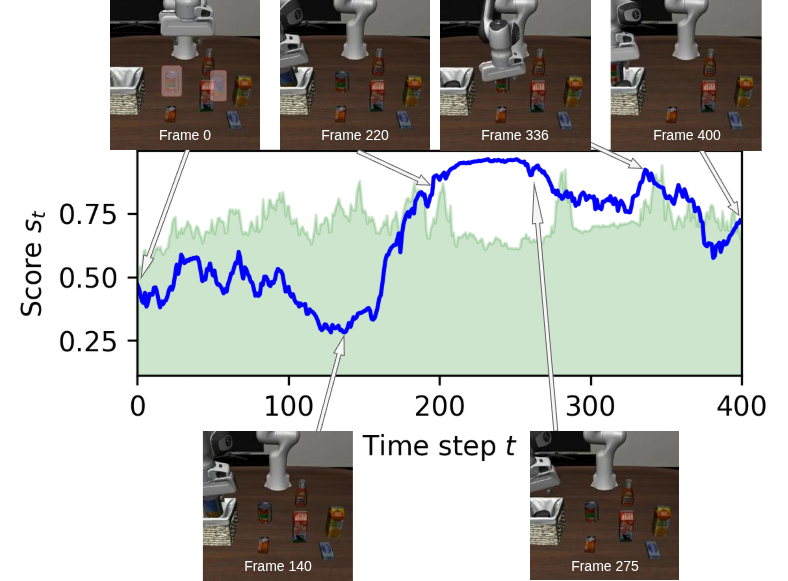}
    \caption{\textbf{Successful rollout with informative failure scores of TDQC top 10 probabilities on OpenVLA LIBERO-10 benckmark.} task: “put both the alphabet soup and the tomato sauce in the basket”. The failure score rises when the policy becomes temporarily stuck while trying to drop the alphabet soup into the basket around step 140. Then, failure score decreases after recovery around step 275, and increases again when the robot is positioned himself and attempts to pick up the tomato soup at step 336.}
    \label{fig:example_policy_behavior}
\end{figure}

\begin{figure}[h]
    \centering
    \begin{subfigure}[h]{0.49\columnwidth}
        \centering
        \includegraphics[width=0.8\linewidth]{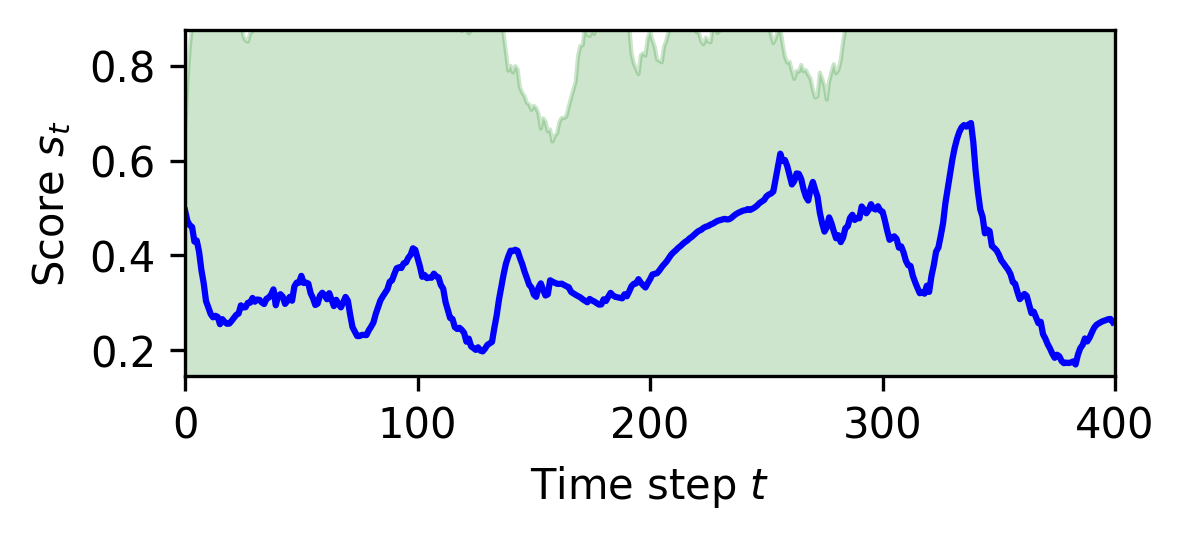}
        \caption{Example of failure score of a successful episode detected in SAFE-RNN-BCE (features). This is the same rollout as in \Cref{fig:example_policy_behavior}.}
        \label{fig:BCE_failure_score}
    \end{subfigure}\hfill
    \begin{subfigure}[h]{0.49\columnwidth}
        \centering
        \includegraphics[width=0.8\linewidth]{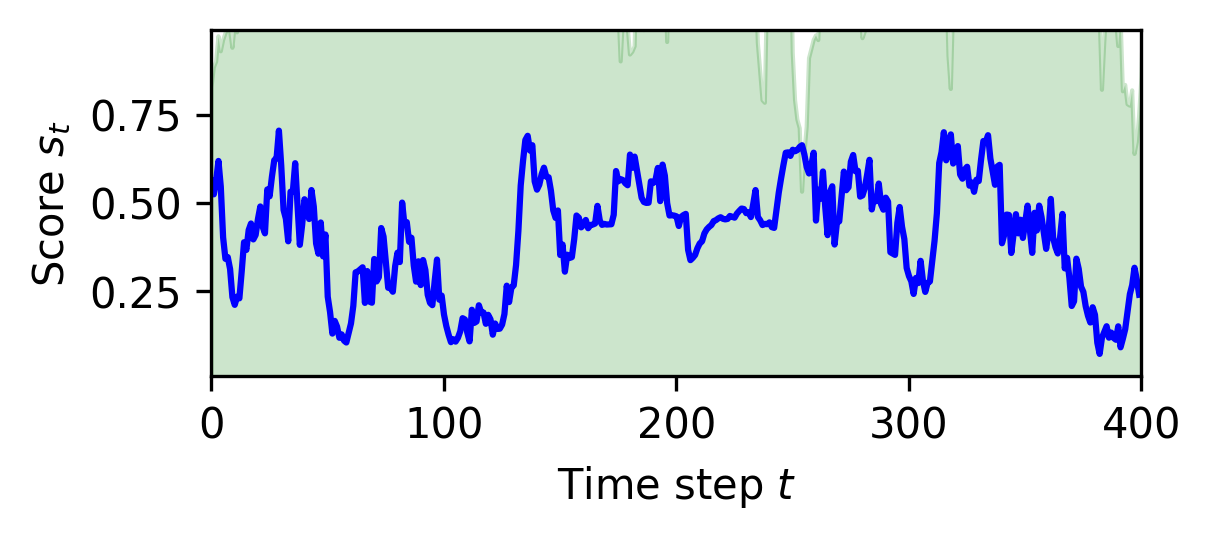}
        \caption{Example of failure score of a successful episode detected in SAFE-RNN-TDQC (features). This is the same rollout as in \Cref{fig:example_policy_behavior}.}
        \label{fig:TDQC_failure_score}
    \end{subfigure}

    \vspace{0.5em}

    \begin{subfigure}[h]{0.49\columnwidth}
        \centering
        \includegraphics[width=0.8\linewidth]{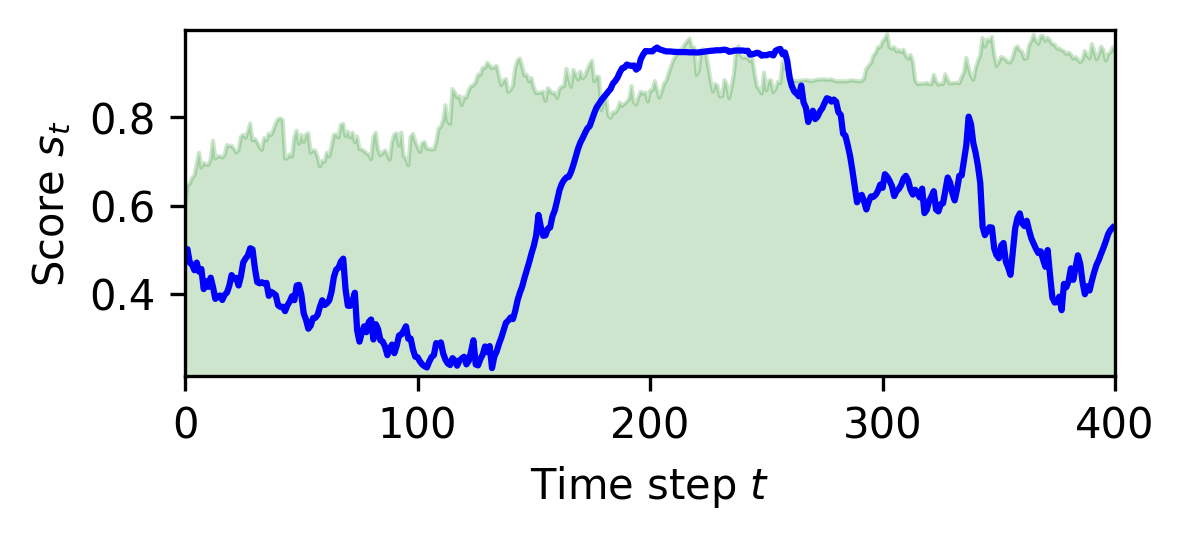}
        \caption{Top-10 probabilities over time for RNN-BCE on the same successful rollout.}
        \label{fig:BCE_top10_probs}
    \end{subfigure}\hfill
    \begin{subfigure}[h]{0.49\columnwidth}
        \centering
        \includegraphics[width=0.8\linewidth]{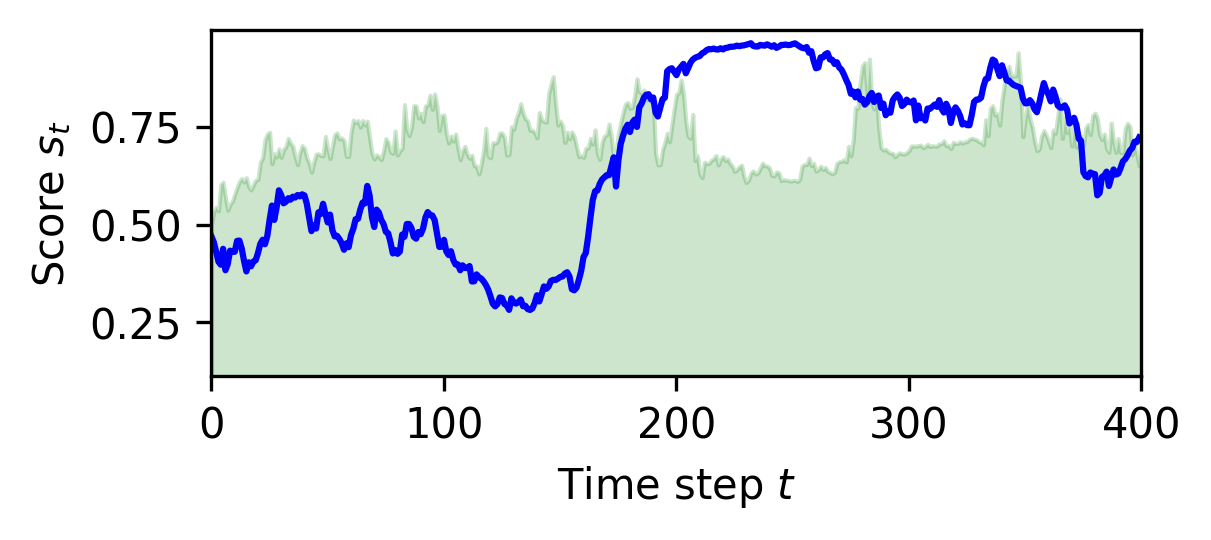}
        \caption{Top-10 probabilities over time for RNN-TDQC on the same successful rollout.}
        \label{fig:TDQC_top10_probs}
    \end{subfigure}

    \caption{Comparison between BCE and TD-based methods for the same rollout as in \Cref{fig:example_policy_behavior}. The top row shows the failure scores on features, while the bottom row shows the failure score on top-10 probabilities over time.}
    \label{fig:BCE_TD_comparison}
\end{figure}

\subsection{WidowX analysis}
The high ROC-AUC results in \Cref{tab:roc_auc} on WidowX are obtained with SAFE-MLP, a heuristic that uses a cumulative loss, see Sec. 6.3, which appears to work well in terms of ROC-AUC in this specific benchmark. However, its connection with calibration is unclear. Note that SAFE-MLP is not reported in \Cref{tab:Brier_score} since it cannot be used as a calibration metric. \Cref{fig:avg_quantile_brier} shows that w.r.t. a standard MLP/LSTM, the action-based predictor is competitive on WidowX. Additional video examples are provided on our project webpage.

\subsection{Ablation Studies}
We ablate several ways of incorporating temporal-difference learning when training a success predictor on OpenVLA in the
LIBERO-10 benchmark. Specifically, we compare:
\begin{itemize}
    \item \textbf{TD-0:} one-step bootstrapping with a standard MSE objective on the predicted success     probability.
    \item \textbf{TD-$\lambda$:} multi-step bootstrapping using eligibility traces, which interpolates between TD-0 and
    Monte-Carlo targets via the trace parameter $\lambda$.
    \item \textbf{Categorical TD-0:} a distributional variant where the scalar success target is represented as a categorical vector and trained via Binary cross-entropy loss, following the setup of  \cite{farebrother2024stopregressingtrainingvalue}.
\end{itemize}
For TD-$\lambda$, we evaluate two trace values, $\lambda\in\{0.5,\,0.8\}$, to study the effect of shorter versus longer
credit assignment on sequential calibration.

\begin{figure}[H]
    \centering
    \begin{subfigure}[h]{0.49\columnwidth}
        \centering
        \includegraphics[width=\linewidth]{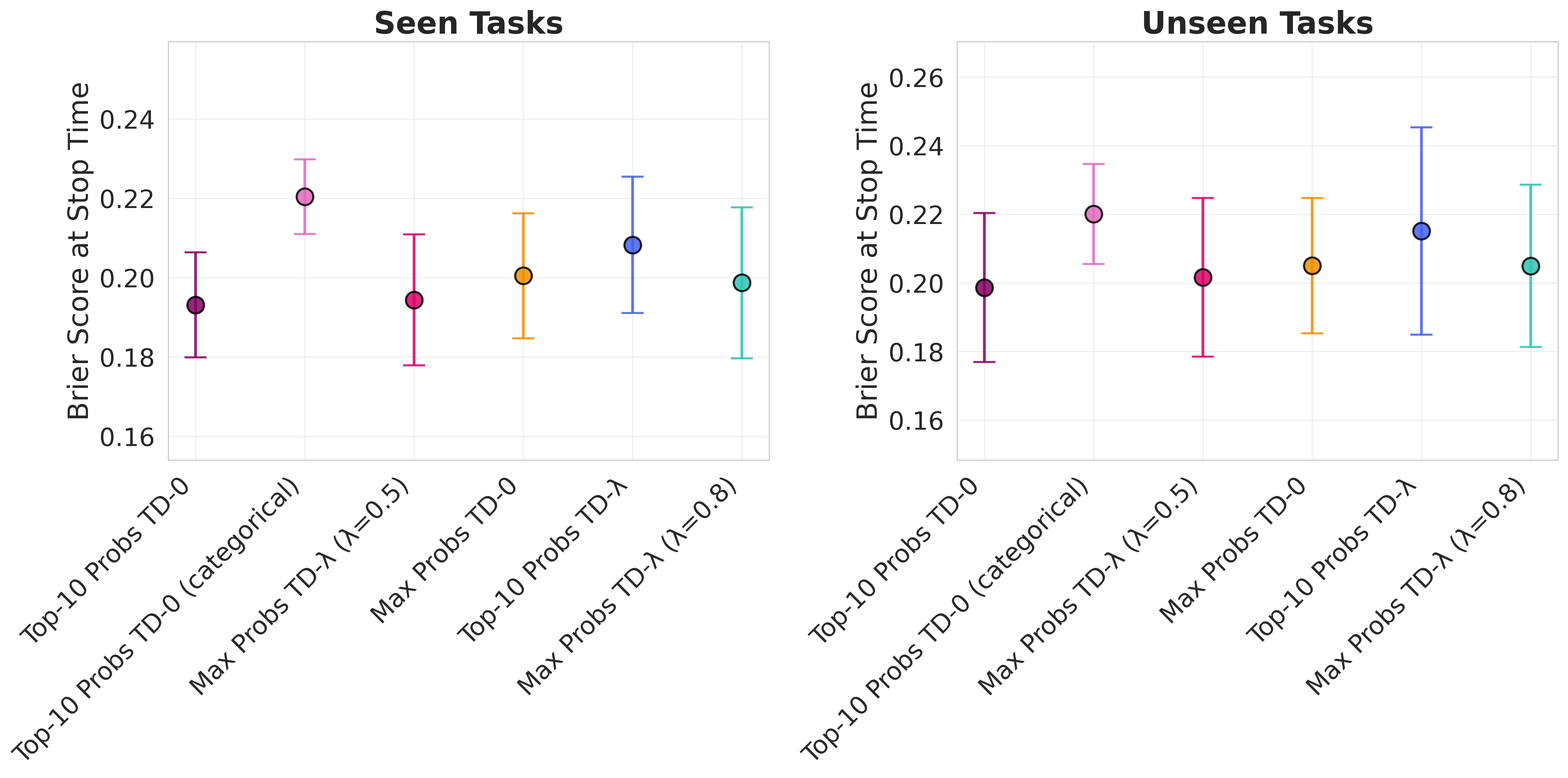}
        \caption{Brier scores evaluated at the minimal task time for all ablation methods. We see here that TD-0 with top 10 probabilities achieves the lowest Brier score}
        \label{fig:ablation_td_brier}
    \end{subfigure}\hfill
    \begin{subfigure}[h]{0.49\columnwidth}
        \centering
        \includegraphics[width=\linewidth]{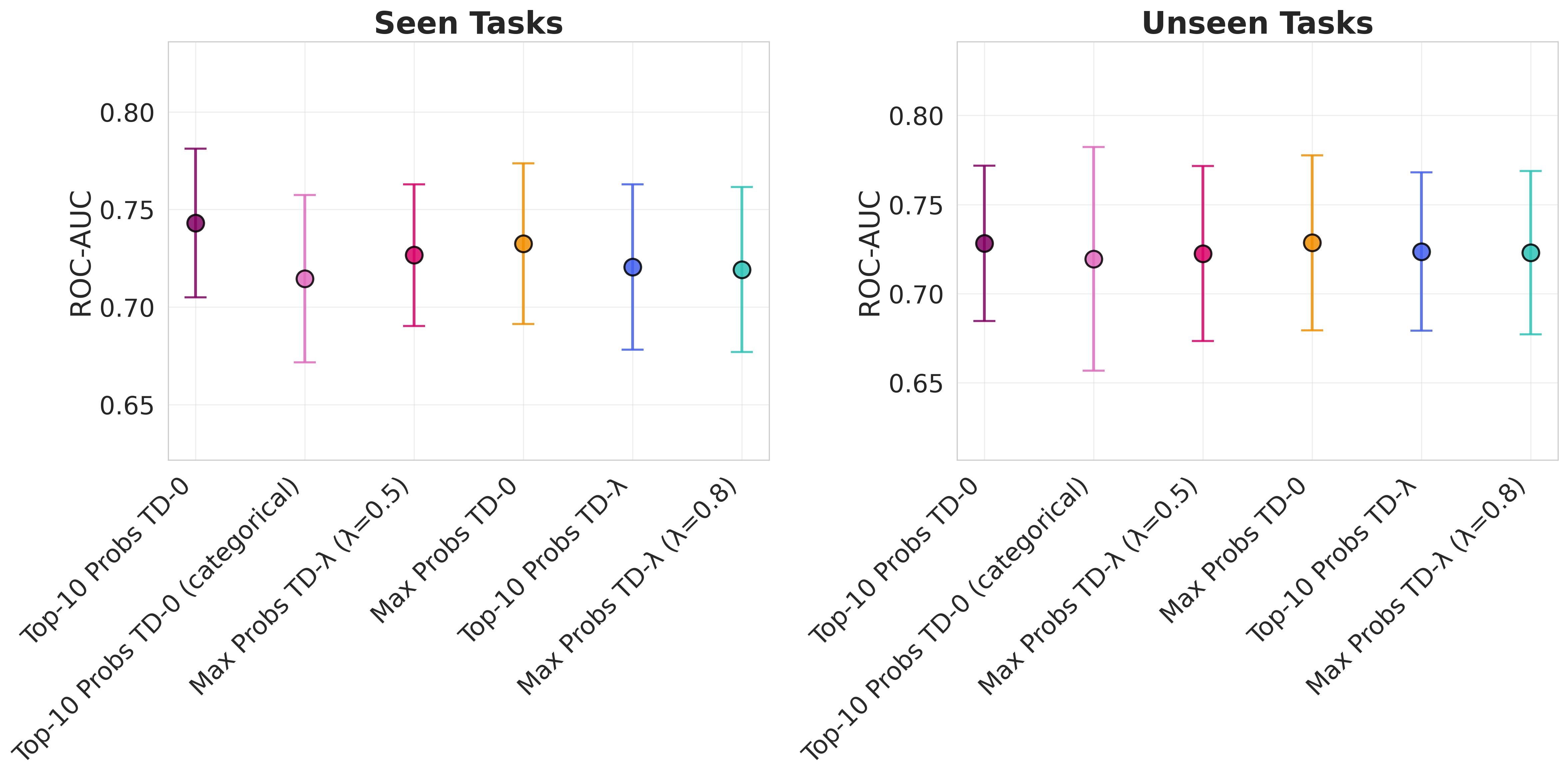}
        \caption{ROC-AUC scores evaluated at the minimal task time for all ablation methods. We see here that TD-0 with top 10 probabilities achieves the highest ROC-AUC with the lowest variance}
        \label{fig:right}
    \end{subfigure}
    \caption{\textbf{Ablation results for TD methods} Overall, we see that TD-0 with the top 10 probabilities achieve best performance}
    \label{fig:ablation_td}
\end{figure}

\subsection{Sensitivity of the Early Stopping}
\Cref{fig:tpr_fpr_alpha_cp} reports an extended analysis of TPR and FPR across significance levels, averaged over 21 seeds. A steep ascent in the TPR curve indicates that even tight thresholds successfully catch most failures, reflecting confidently high failure scores. TD-based methods consistently outperform BCE-trained predictors in this regard. Since CP bands are calibrated on \emph{successful rollouts} (negative examples only), the FPR is lower-bounded by the i.i.d. assumption, represented by $Y=X$ line in the plot. Deviations from this line reflect distribution shift between calibration and test tasks: a small shift produces results close to $Y=X$, while a larger shift, as seen in LIBERO, moves the curve further away. Notably, even under larger distributional shift, probability-based methods generalize better than feature-based methods.
\begin{figure}
    \centering
    \includegraphics[width=\linewidth]{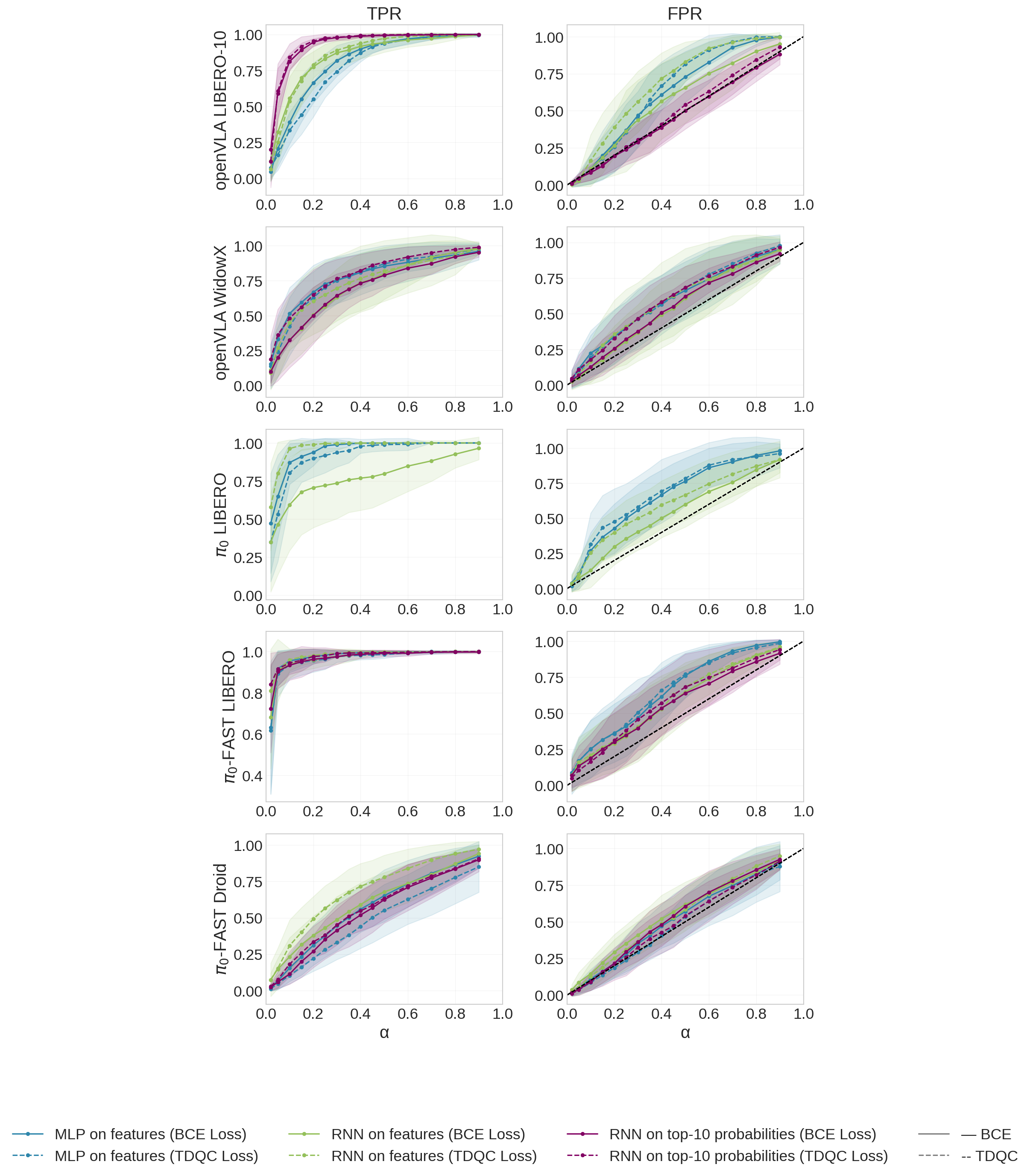}
    \caption{\textbf{Additional failure detection analysis using thresholds obtained by functional CP.} These plots show TPR (True positive rate, left column), and FPR (False positive rate, right column), w.r.t. the significance level $\alpha$, for each evaluation benchmark. These plots are averaged across 21 seeds.}
    \label{fig:tpr_fpr_alpha_cp}
\end{figure}

\subsection{Degradation with fewer failed trajectories}
We evaluated LSTM-TD0 on $\pi_0$-FAST LIBERO-10 with 100\%, 60\%, 30\%, and 10\% of failed trajectories retained during training. Results on unseen tasks are shown in Table 3 in the attached link. The analysis shows that performance degrades gracefully until 30\% and more sharply at 10\%, suggesting robustness to moderate class imbalance.

\begin{table}[t]
\centering
\caption{Performance degregation as a function of failed trajectories, evaluated on $\pi_0$-FAST LIBERO-10 (unseen tasks). We vary the proportion of failed trajectories retained during training from 100\% down to 10\%. Performance degrades gracefully until 30\% but drops more sharply at 10\%, suggesting the method is robust to moderate class imbalance.}
\label{tab:performance_vs_failed_traj}
\begin{tabular}{lcc}
\toprule
\textbf{Failed traj. \%} & \textbf{Brier score $\downarrow$} & \textbf{ROC-AUC $\uparrow$} \\
\midrule
100\% & $0.097 \pm 0.033$ & $82.94 \pm 10.7$ \\
60\%  & $0.101 \pm 0.040$ & $79.52 \pm 12.8$ \\
30\%  & $0.100 \pm 0.045$ & $76.99 \pm 16.6$ \\
10\%  & $0.135 \pm 0.065$ & $59.49 \pm 16.7$ \\
\bottomrule
\end{tabular}
\end{table}

\subsection{ROC-AUC vs Brier Score extended evaluation}
\label{sec:extended_roc_auc_vs_brier}
We add experimental results across six benchmarks of the relationship between uncertainty calibration and ROC-AUC in several VLA models. Figures \ref{fig:roc_auc_brier_extended_analysis}a through \ref{fig:roc_auc_brier_extended_analysis}f illustrate a critical relationship in the evaluation of VLA models: the link between a model's \textbf{uncertainty calibration} and its \textbf{predictive performance}.
All six scatter plots demonstrate a consistent and strong negative correlation, with Spearman's $\rho$ ranging from $-0.648$ to a peak of $-0.881$ for OpenVLA on WidowX. This indicates that as the Brier Score at Stop Time ($\hat{T}$) increases (representing higher calibration error), the ROC-AUC significantly decreases. 

The Brier Score measures the accuracy of probabilistic predictions. These results confirm that \textbf{well-calibrated models} (lower Brier scores) are substantially more capable of distinguishing between successful and failing states, resulting in a higher ROC-AUC.

The lower correlation in UniVLA suggests that its internal uncertainty estimates are less reliable predictors of task success compared to the $\pi_0$ family and OpenVLA. In Figure \ref{fig:roc_auc_brier_extended_analysis}c, UniVLA data points are more dispersed around the fit line, indicating that a low Brier score does not guarantee high ROC-AUC as consistently as it does for other models.
\begin{figure}[h]
    \centering
    % Row 1
    \begin{subfigure}{0.32\textwidth}
        \includegraphics[width=\linewidth]{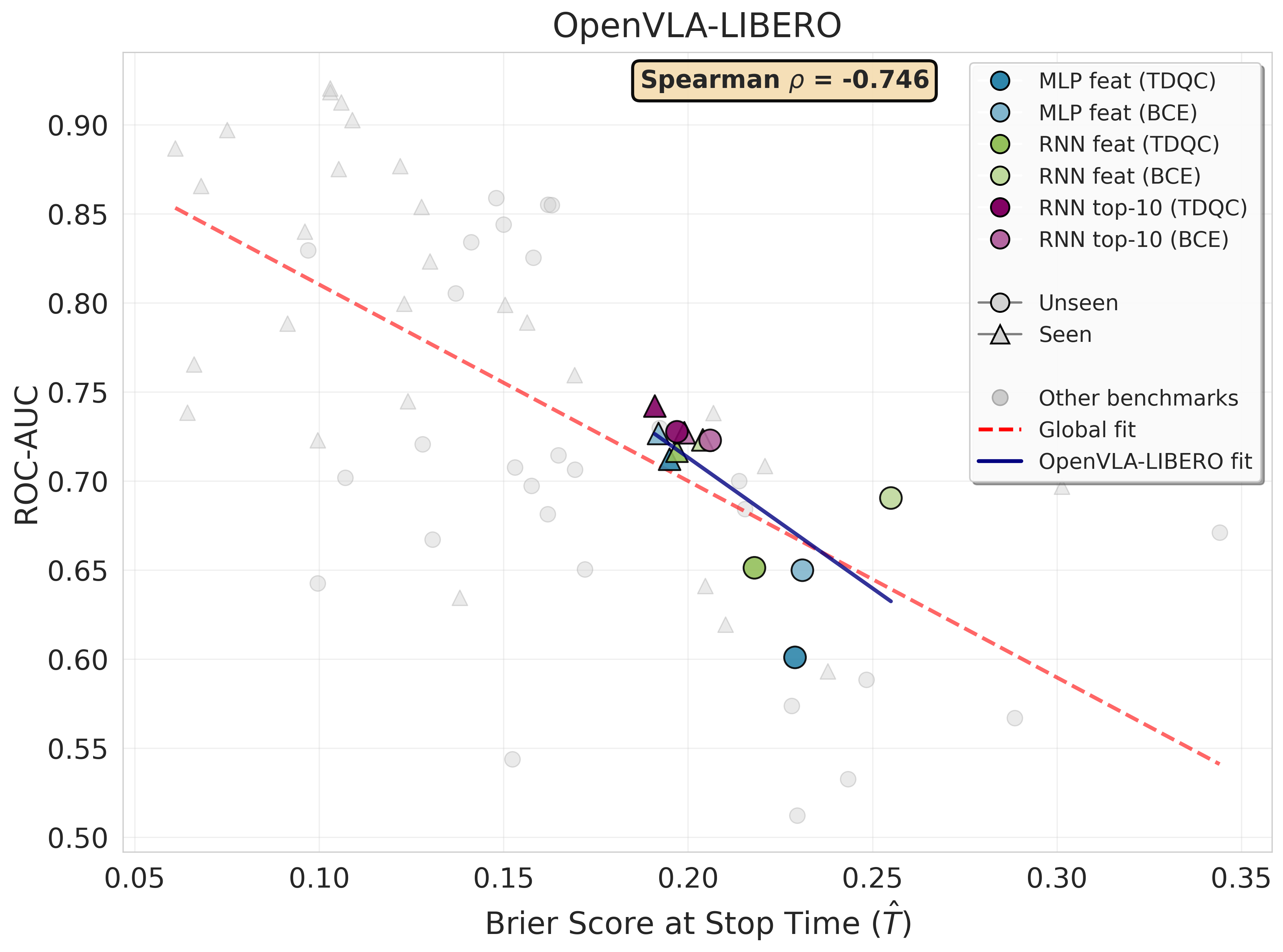}
        \caption{OpenVLA on LIBERO ($\rho = -0.746$)}
    \end{subfigure}
    \hfill
    \begin{subfigure}{0.32\textwidth}
        \includegraphics[width=\linewidth]{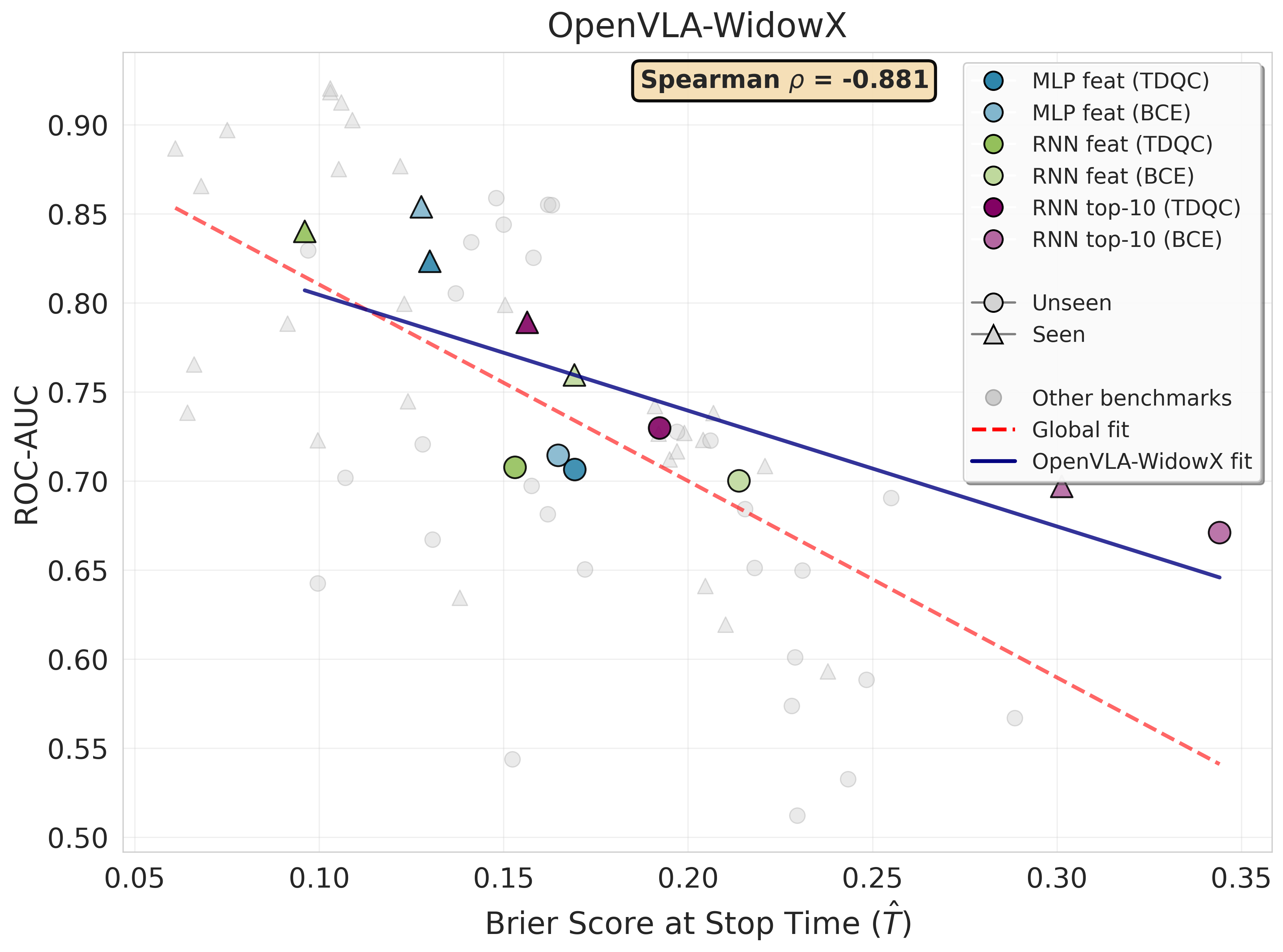}
        \caption{OpenVLA on WidowX ($\rho = -0.881$)}
    \end{subfigure}
    \hfill
    \begin{subfigure}{0.32\textwidth}
        \includegraphics[width=\linewidth]{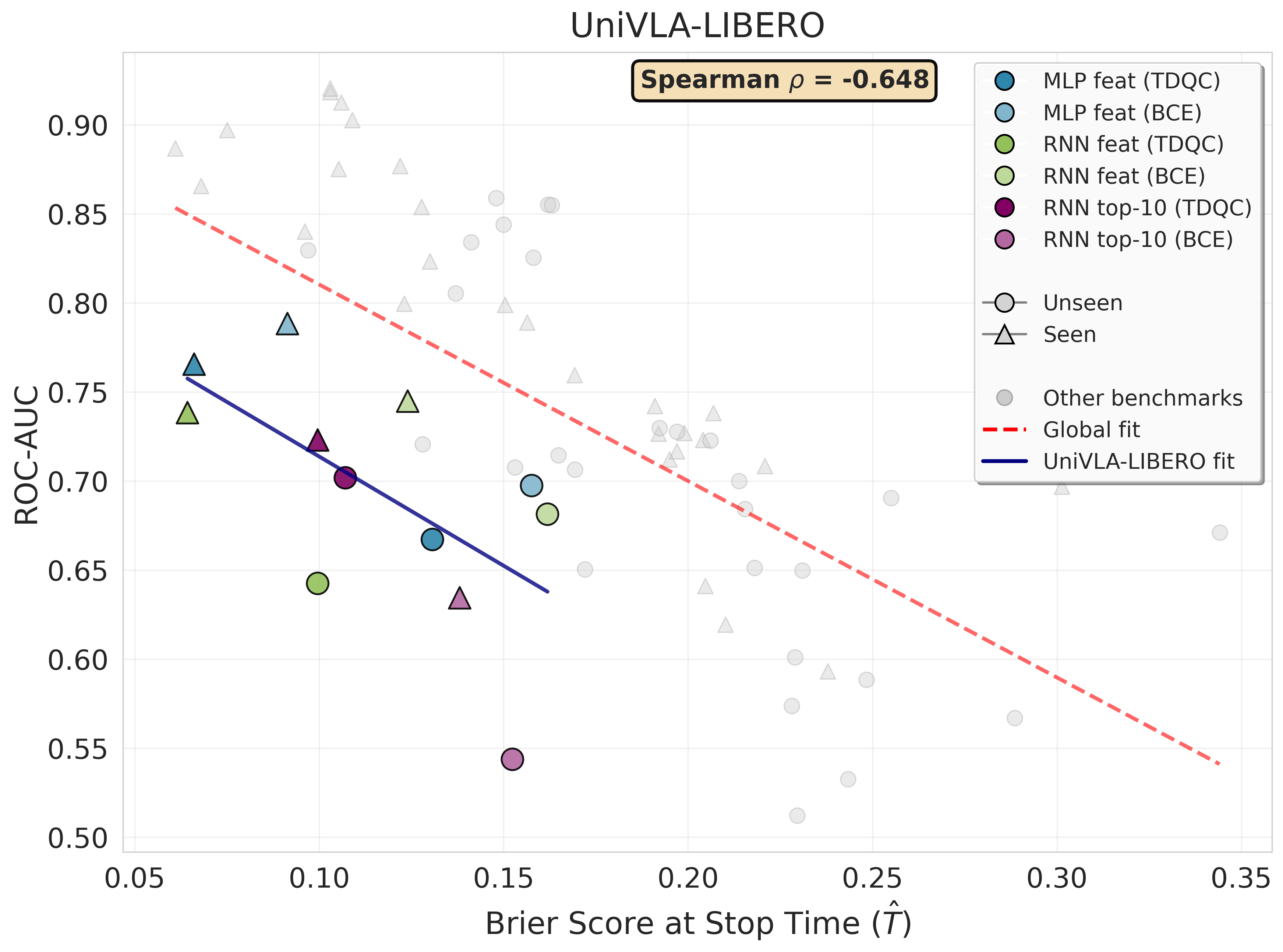}
        \caption{UniVLA on LIBERO ($\rho = -0.648$)}
    \end{subfigure}

    \vspace{0.5cm}

    % Row 2
    \begin{subfigure}{0.32\textwidth}
        \includegraphics[width=\linewidth]{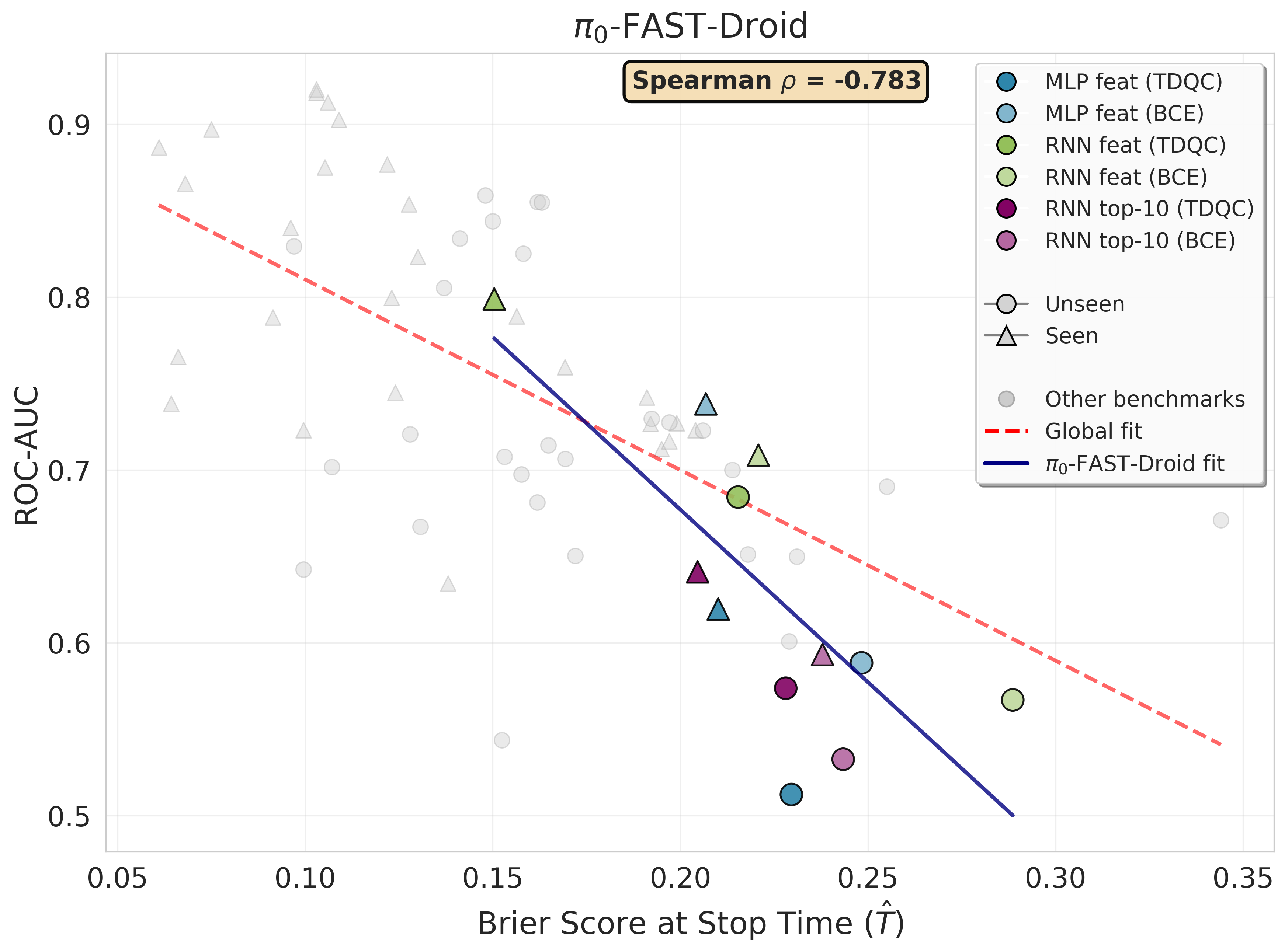}
        \caption{$\pi_0$-FAST on Droid ($\rho = -0.783$)}
    \end{subfigure}
    \hfill
    \begin{subfigure}{0.32\textwidth}
        \includegraphics[width=\linewidth]{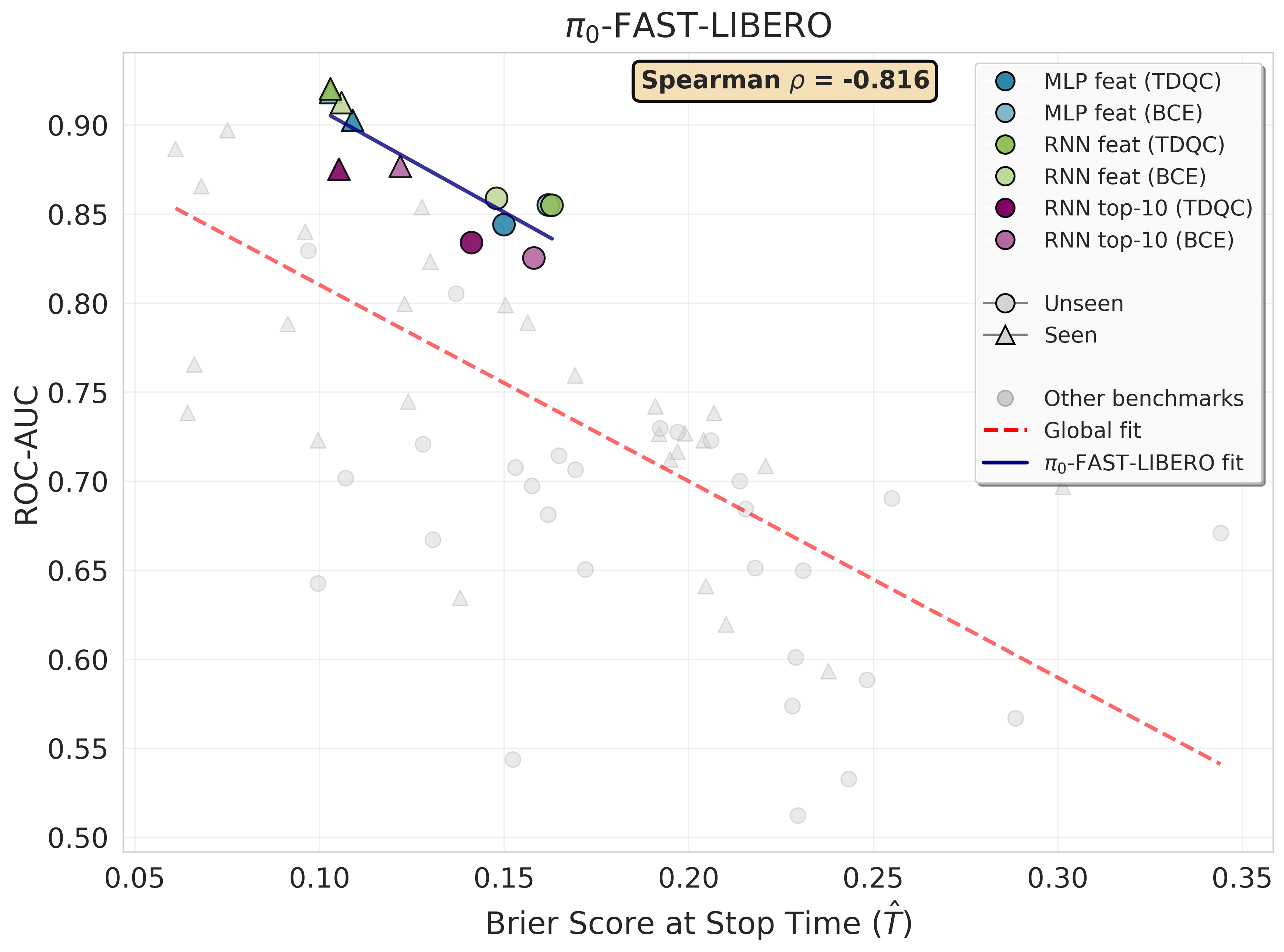}
        \caption{$\pi_0$-FAST on LIBERO ($\rho = -0.816$)}
    \end{subfigure}
    \hfill
    \begin{subfigure}{0.32\textwidth}
        \includegraphics[width=\linewidth]{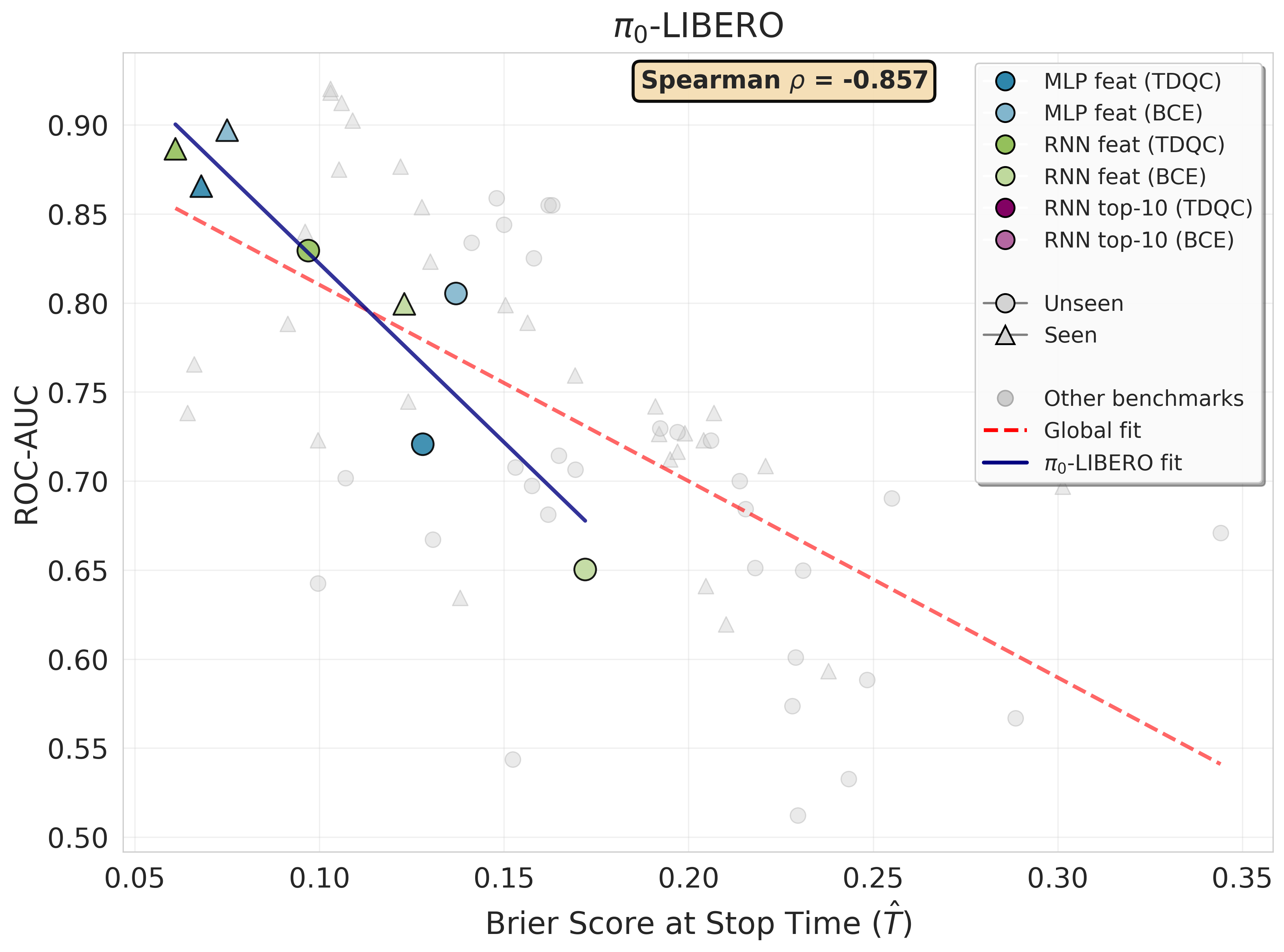}
        \caption{$\pi_0$ on LIBERO ($\rho = -0.857$)}
    \end{subfigure}

    \vspace{0.5cm}

    \caption{\textbf{Analysis of VLA Calibration and Success Rates.} (a-f) Scatter plots showing the strong negative correlation between Brier Score at Stop Time ($\hat{T}$) and ROC-AUC across different model-benchmark pairs.}
    \label{fig:roc_auc_brier_extended_analysis}
\end{figure}

\subsection{Relation between sequential Brier and ECE}
Let us be reminded that the Brier score relates to the calibration and accuracy of the classifier by its two-component decomposition.
Let $F=f(X)$ denote the random event that the model predicts a particular value, and let 
$
\eta(F) = \mathbb{P}(Y=1 \mid F),
$
the success probability conditioned on the prediction.
The Brier score can be decomposed as
\begin{align*}
\mathrm{BS}(f)
&= \mathbb{E}\big[(F-Y)^2\big] = \mathbb{E}\big[(F-\eta(F))^2\big]  + \mathbb{E}\big[\eta(F)(1-\eta(F))\big].
\end{align*}
The first term relates to \textit{calibration}: the discrepancy between the predicted success probabilities and the true conditional event frequencies. 
The second term in the Brier score decomposition relates to accuracy, and measures how informative the score is about the label. 

In order to prove this decomposition, we compared between the sequential Brier scores and ECE scores of all models and benchmarks, across 5 time quantiles (0.0, 0.2, 0.4, 0.6, 0.8) in \Cref{fig:ece_vs_brier}. \Cref{fig:full_brier_seen_bars,fig:full_brier_unseen_bars,fig:full_ece_seen_bars,fig:full_ece_unseen_bars} shows the Sequential Brier score and ECE for seen and unseen sets respectively for different quantile times.
In \Cref{fig:ece_vs_brier} we report Spearman $\rho$ which measures monotonic relationship between variables, uses rank ordering rather than actual values and ranges between $\rho \in [-1,1]$. In OpenVLA and $\pi_0$ we see a strong monotonic relation between ECE and Brier scores where the red dashed line is the linear fitting on the points. We also note that $\pi_0$-FAST model does not show a strong relationship between the two metrics, which indicates that this model is less interpretable, and that the sequential Brier score affected more on the ROC-AUC component rather then the calibration component.

\begin{figure}[t]
    \centering
    \includegraphics[width=0.9\columnwidth]{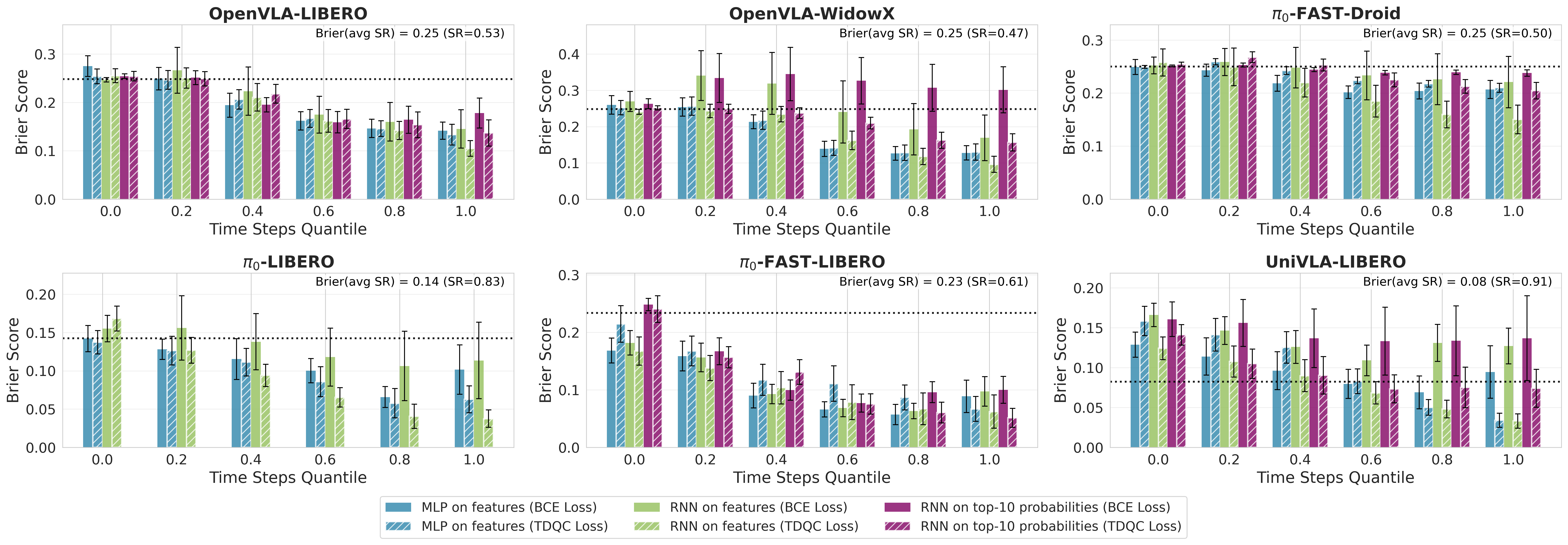} % or .png/.jpg
    \caption{\textbf{Brier scores val-seen} sequential Brier score (lower is better) on the \emph{seen} validation set averaged over 21 random seeds and all environments. We report Brier score in different \emph{time quantiles}, where each subplot corresponds to a (model-benchmark) pair. For $\pi_0$, action probabilities are not directly interpretable, hence probability-based TDQC variants are not reported. Across all settings, our TD-based methods consistently outperform conventional predictors trained with binary cross entropy (BCE). For $\pi_0$ action probabilities are not directly interpretable, hence probability-based TDQC variants are not reported. The dotted horizontal line represents the Brier score of a constant predictor that consistently outputs the empirical mean success rate computed over the seen tasks.}
    \label{fig:full_brier_seen_bars}
\end{figure}

\begin{figure}[t]
    \centering
    \includegraphics[width=0.9\columnwidth]{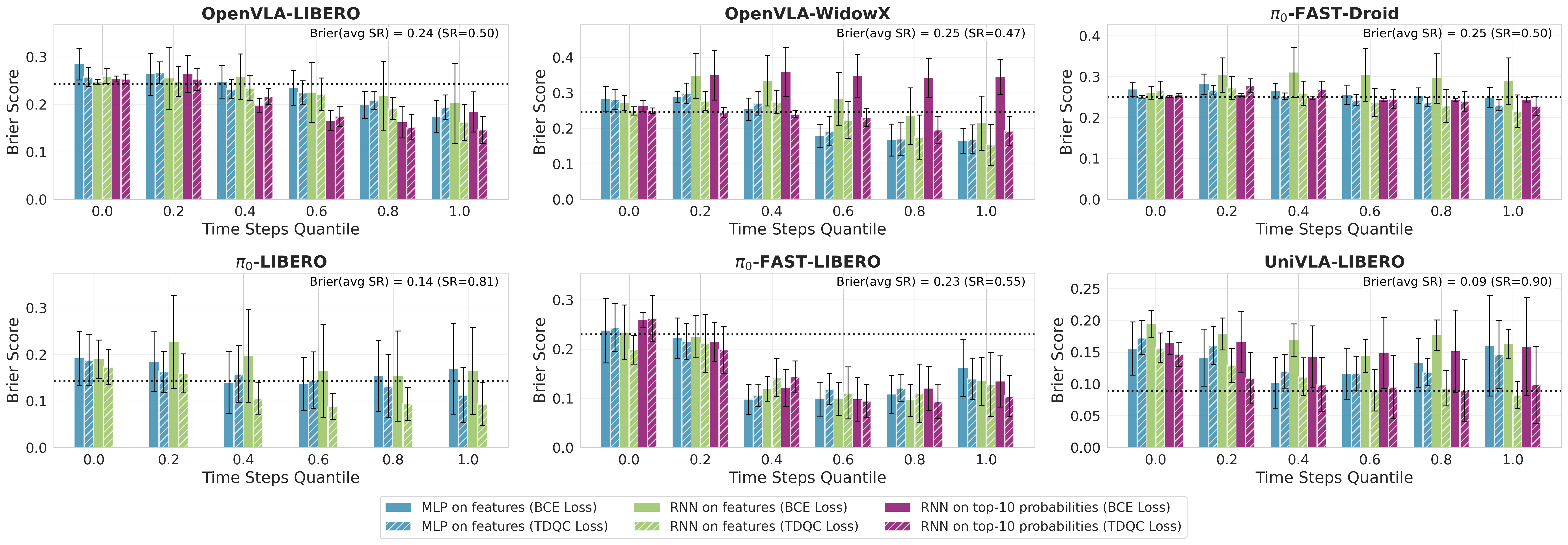} % or .png/.jpg
    \caption{\textbf{Brier scores val-unseen} sequential Brier score (lower is better) on the \emph{unseen} validation set averaged over 21 random seeds. We report Brier score in different \emph{time quantiles}, where each subplot corresponds to a (model-benchmark) pair. For $\pi_0$, action probabilities are not directly interpretable, hence probability-based TDQC variants are not reported. Across all settings, our TD-based methods consistently outperform conventional predictors trained with binary cross entropy (BCE). For $\pi_0$ action probabilities are not directly interpretable, hence probability-based TDQC variants are not reported. The dotted horizontal line represents the Brier score of a constant predictor that consistently outputs the empirical mean success rate computed over the seen tasks.}
    \label{fig:full_brier_unseen_bars}
\end{figure}

\begin{figure}[t]
    \centering
    \includegraphics[width=0.9\columnwidth]{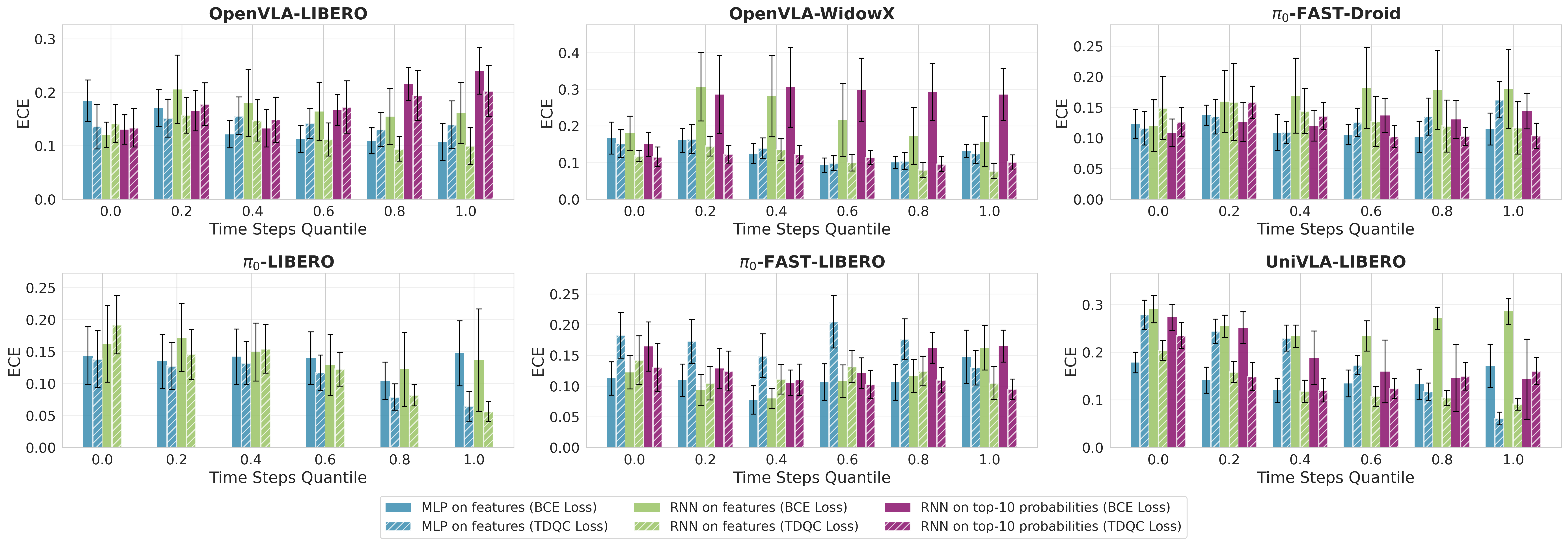} % or .png/.jpg
    \caption{\textbf{ECE scores val-seen} ECE scores (lower is better) on the \emph{seen} validation set averaged over 21 random seeds and all environments. We report ECE scores in different \emph{time quantiles}, where each subplot corresponds to a (model-benchmark) pair. We see the correlation between lower Brier score and lower ECE scores in almost all settings. This highlights the Brier score decomposition shown in \Cref{sec:calibration}}
    \label{fig:full_ece_seen_bars}
\end{figure}

\begin{figure}[t]
    \centering
    \includegraphics[width=0.9\columnwidth]{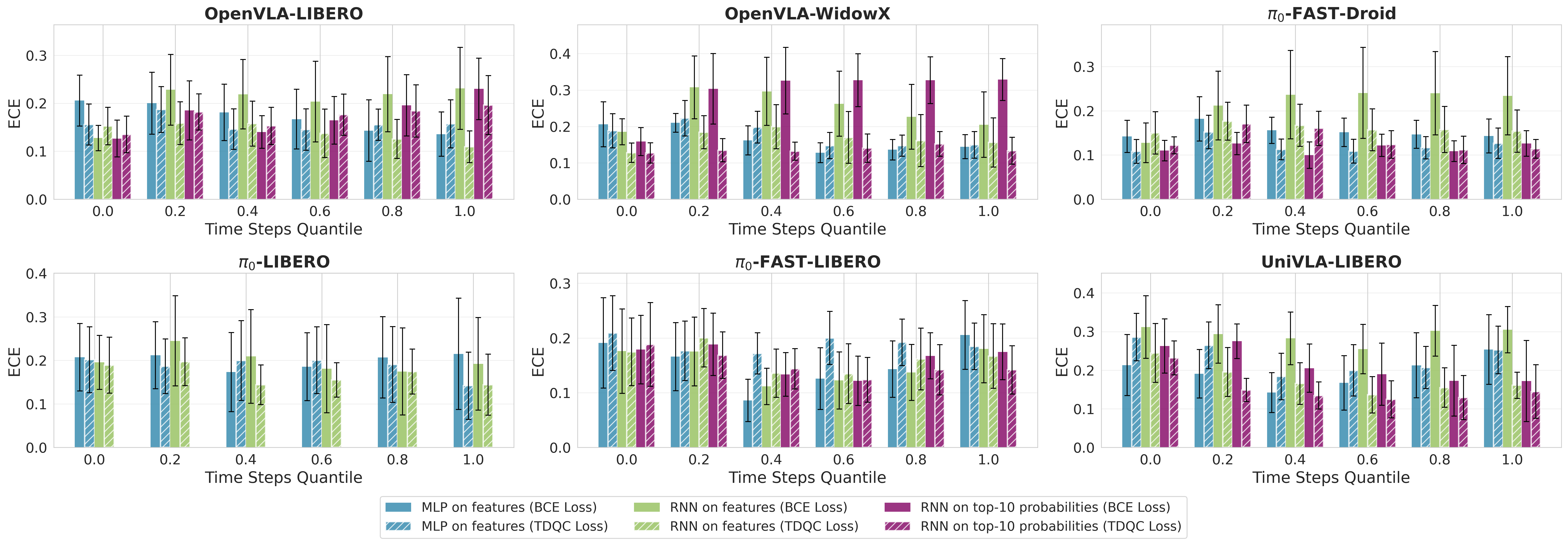} % or .png/.jpg
    \caption{\textbf{ECE scores val-unseen} ECE scores (lower is better) on the \emph{unseen} validation set averaged over 21 random seeds and all environments. We report ECE scores in different \emph{time quantiles}, where each subplot corresponds to a (model-benchmark) pair. We see the correlation between lower Brier score and lower ECE scores in almost all settings. This highlights the Brier score decomposition shown in \Cref{sec:calibration}}
    \label{fig:full_ece_unseen_bars}
\end{figure}

\begin{figure}[t]
    \centering
    \includegraphics[width=0.8\columnwidth]{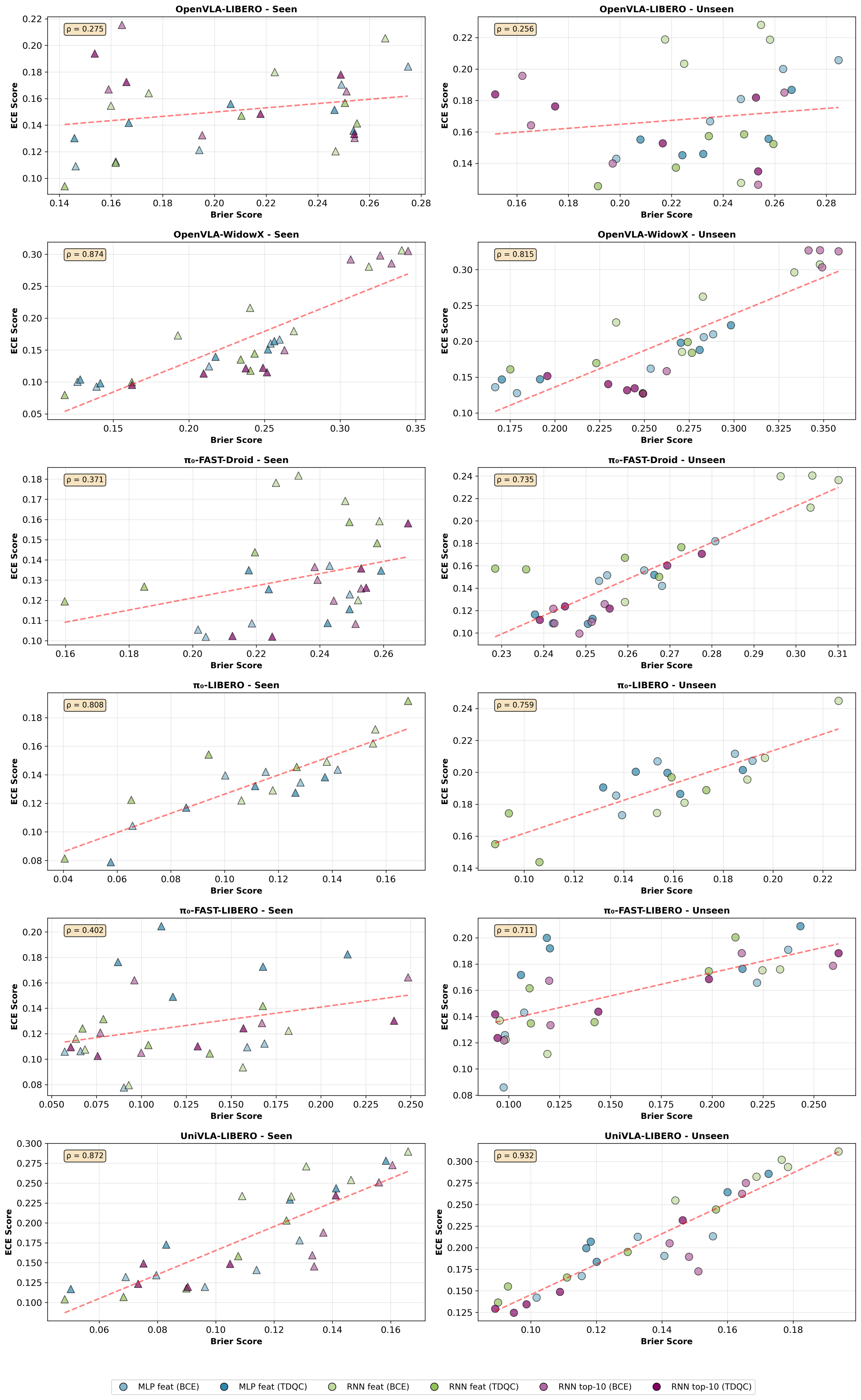} 
    \caption{\textbf{ECE vs Brier scores} We compared ECE scores to sequential Brier scores in all models and benchmarks.}
    \label{fig:ece_vs_brier}
\end{figure}

\end{document}